\newtheorem{theorem}{Theorem}[section]
\newtheorem{prop}{Proposition}[section]
\newtheorem{lemma}[theorem]{Lemma}
\newtheorem{corollaire*}{Corollary}
\newtheorem{defn}[theorem]{Definition}
\newcommand{\colorreview}{black}
\newcommand{\Loss}{\mathcal{L}}
\newcommand{\DVG}{{V_{\text{goal}}}}
\newcommand{\DV}{{v_{\text{goal}}}}
\newcommand{\vDVl}{{\bm{v}^l_{\text{goal}}}}
\newcommand{\mDVl}{{\bm{V}^l_{\!\text{goal}}}}
\newcommand{\DEG}{V}
\newcommand{\DE}{v}
\renewcommand{\epsilon}{\varepsilon}
\newcommand{\TODO}[1]{{\color{red} \texttt{[TODO]} #1}} 
\newcommand{\A}{\mathcal{A}}
\newcommand{\FA}{\mathcal{F}_{\!\!\A}}
\newcommand{\F}{\mathcal{F}}
\newcommand{\TA}{\mathcal{T}_{\!\A}}
\newcommand{\vvv}{\bm{v}}
\newcommand{\mM}{{\mathbf{\delta W}_{\!l}}}
\newcommand{\Shalf}{{\mathbf{S}^{\frac{1}{2}}}}
\newcommand{\Smhalf}{{\mathbf{S}^{-\frac{1}{2}}}}
\newcommand{\Shalfti}{{\tilde{\mathbf{S}}^{\frac{1}{2}}}}
\newcommand{\Smhalfti}{{\tilde{\mathbf{S}}^{-\frac{1}{2}}}}
\def\vDV{{\bm{\DV}}}
\def\vDE{{\bm{\DE}}}
\def\valpha{{\bm{\alpha}}}
\def\vomega{{\bm{\omega}}}
\def\vbeta{{\bm{\beta}}}
\def\mDV{{\bm{\DVG}}}
\def\mDE{{\bm{\DEG}}}
\newcommand{\mDVp}{{\mDV_{\text{proj}}}}
\DeclareMathOperator{\proj}{\mathcal{P}}
\def\mSigma{{\bm{\Sigma}}}
\def\mOmega{{\bm{\Omega}}}
\newcommand{\nablaf}{\nabla_{\!f}}
\newcommand{\nablatheta}{\nabla_{\!\theta}}
\newcommand{\tvDV}{\textcolor{teal}{\vDV}}
\newcommand{\tLoss}{\widetilde{\Loss}}
\newcommand{\vtau}{\bm{\tau}}
\DeclareMathOperator{\var}{var}
\def\eqref#1{equation~\ref{#1}}
\def\1{\bm{1}}
\def\eps{{\epsilon}}
\def\va{{\bm{a}}}
\def\vb{{\bm{b}}}
\def\ve{{\bm{e}}}
\def\vf{{\bm{f}}}
\def\vg{{\bm{g}}}
\def\vp{{\bm{p}}}
\def\vr{{\bm{r}}}
\def\vt{{\bm{t}}}
\def\vu{{\bm{u}}}
\def\vv{{\bm{v}}}
\def\vw{{\bm{w}}}
\def\vx{{\bm{x}}}
\def\vy{{\bm{y}}}
\def\mA{{\bm{A}}}
\def\mB{{\bm{B}}}
\def\mC{{\bm{C}}}
\def\mD{{\bm{D}}}
\def\mF{{\bm{F}}}
\def\mH{{\bm{H}}}
\def\mK{{\bm{K}}}
\def\mM{{\bm{M}}}
\def\mN{{\bm{N}}}
\def\mO{{\bm{O}}}
\def\mP{{\bm{P}}}
\def\mQ{{\bm{Q}}}
\def\mR{{\bm{R}}}
\def\mS{{\bm{S}}}
\def\mT{{\bm{T}}}
\def\mU{{\bm{U}}}
\def\mV{{\bm{V}}}
\def\mW{{\bm{W}}}
\def\mX{{\bm{X}}}
\def\mY{{\bm{Y}}}
\def\mSigma{{\bm{\Sigma}}}
\DeclareMathAlphabet{\mathsfit}{\encodingdefault}{\sfdefault}{m}{sl}
\SetMathAlphabet{\mathsfit}{bold}{\encodingdefault}{\sfdefault}{bx}{n}
\def\gD{{\mathcal{D}}}
\def\gL{{\mathcal{L}}}
\def\emW{{W}}
\DeclareMathOperator*{\E}{\mathbb{E}}
\newcommand{\R}{\mathbb{R}}
\DeclareMathOperator*{\argmax}{arg\,max}
\DeclareMathOperator*{\argmin}{arg\,min}
\DeclareMathOperator{\Tr}{Tr}
\renewcommand{\leq}{\leqslant}
\renewcommand{\geq}{\geqslant}
\newcommand{\bbrack}[1]{\left\llbracket#1\right\rrbracket}
\newcommand{\set}[1]{\ensuremath{\left\{ #1 \right\}}}
\newcommand{\norm}[1]{\ensuremath{\left|\left|#1\right|\right|}}
\DeclareMathOperator{\Ima}{Im}
\DeclareMathOperator{\rank}{rank}
\DeclareMathOperator{\tr}{Tr}
\newcommand{\trace}[1]{\tr\left(#1\right)}
\newcommand{\trans}[1]{\ensuremath{#1^{T}}}
\newcommand{\transp}[1]{\ensuremath{#1^{T}}}
\newcommand{\loss}{\mathcal{L}}
\newcommand{\Alpha}{\mA}
\newcommand{\AlphaF}{\mA_F}
\newcommand{\OmegaF}{\mOmega_F}
\newcommand{\dW}{\delta \mW}
\newcommand{\frob}[1]{\norm{#1}_F}
\newcommand{\scalarp}[1]{\left<#1\right>}
\newcommand{\Lin}{\mathbf{Lin}}
\newcommand{\Conv}{\mathbf{Conv}}
\newcommand{\cm}{C[-1]}
\newcommand{\cp}{C[+1]}
\newcommand{\myhidetext}[1]{}
\newcommand{\unfPostact}{\mB^c}
\newcommand{\pinv}[1]{#1^{+}}
\newcommand{\mOmegam}{\mOmega[m]_F}
\newcommand{\rev}[1]{\textcolor{black}{#1}}
\title{Growing Tiny Networks: Spotting Expressivity Bottlenecks and Fixing Them Optimally} 
\author{Manon Verbockhaven, Théo Rudkiewicz, Sylvain Chevallier, Guillaume Charpiat\\
\addr TAU team, LISN, Université Paris-Saclay, CNRS, Inria, 91405, Orsay, France\\
\email \texttt{firstname.name@inria.fr}}
\renewcommand{\cref}[1]{\Cref{#1}}
\begin{document}

\maketitle

\begin{abstract}
Machine learning tasks are generally formulated as optimization problems, where one searches for an optimal function within a certain functional space. In practice, parameterized functional spaces are considered, in order to be able to perform gradient descent. Typically, a neural network architecture is chosen and fixed, and its parameters (connection weights) are optimized, yielding an architecture-dependent result. This way of proceeding however forces the evolution of the function during training to lie within the realm of what is expressible with the chosen architecture, and prevents any optimization across architectures. Costly architectural hyper-parameter optimization is often performed to compensate for this. Instead, we propose to adapt the architecture on the fly during training. We show that the information about desirable architectural changes, due to expressivity bottlenecks when attempting to follow the functional gradient, can be extracted from backpropagation. To do this, we propose a mathematical definition of expressivity bottlenecks, which enables us to detect, quantify and solve them while training, by adding suitable neurons. Thus, while the standard approach requires large networks, in terms of number of neurons per layer, for expressivity and optimization reasons, 
\rev{we provide tools and properties to develop an architecture starting with a very small number of neurons.}
As a proof of concept, we show results~on the CIFAR dataset, matching large neural network accuracy, with competitive training time, while removing the need for standard architectural hyper-parameter search.
\end{abstract}

\section{Introduction}

\paragraph{Issues with the fixed-architecture paradigm.} Universal approximation theorems such as 
\citep{HornikEtAl89,cybenko1989approximation} 
are historically among the first theoretical results obtained on neural networks, stating the family of neural networks with arbitrary width as a good candidate for a parameterized space of functions to be used in machine learning.
However the current common practice in neural network training consists in choosing a fixed architecture, and training it, without any possible architecture modification meanwhile. This inconveniently prevents the direct application of these universal approximation theorems, as expressivity bottlenecks that might arise in a given layer during training will not be able to be fixed. There are two approaches to circumvent this in daily practice. Either one chooses a (very) large width, to be sure to avoid expressivity and optimization issues
\citep{DNN1,DNN2},
to the cost of extra computational power consumption for training and applying such big models; to mitigate this cost, model reduction techniques are often used afterwards, using pruning, tensor factorization, quantization \citep{WellingCompression} or distillation \citep{hinton2015distilling}.
Or one tries different architectures and keeps the most suitable one (in terms of performance-size compromise for instance), which multiplies the computational 
burden
by the number of trials. This latter approach relates to the Auto-DeepLearning field~\citep{liu2020towards}, where 
different exploration strategies over the space of architecture hyper-parameters (among other ones) have been tested, including reinforcement learning \citep{baker2017designing, zoph2016neural},  Bayesian optimization techniques \citep{pmlr-v64-mendoza_towards_2016}, and evolutionary approaches \citep{Miller1989DesigningNN,stanley2009hypercube,DBLP:journals/corr/MiikkulainenLMR17,bennet2021nevergrad},   that all rely on random tries 
and consequently take time for exploration. 
Within that line, Net2Net~\citep{chen2015net2net}, AdaptNet~\citep{yang2018netadapt} and MorphNet~\citep{gordon2018morphnet} propose different strategies to explore possible variations of a given architecture, possibly guided by model size constraints.
Instead, we aim at providing a way to locate precisely expressivity bottlenecks in a trained network, which might speed up neural architecture search significantly. Moreover, based on such observations, we aim at modifying the architecture \emph{on the fly} during training, in a single run (no re-training), using first-order derivatives only, while avoiding neuron redundancy.
Related work on architecture adaptation while training includes probabilistic edges \citep{liu2018darts} or sparsifying priors \citep{WolinskiPruning}. 
Yet the training is done on the largest architecture allowed, which is resource-consuming. On the opposite we aim at starting from the simplest architecture possible.\vspace{-1mm}





\paragraph{Optimization properties.} An important reason for common practice to choose wide architectures is the associated optimization properties: sufficiently larger networks are proved theoretically and shown empirically to be better optimized than small ones \citep{Kernel1}. Typically, small networks exhibit issues with spurious local minima, while wide ones 
find good nearly-global minima. One of our goals 
is 
to train small networks without suffering from such optimization~difficulties.

\paragraph{Neural architecture growth.} 
A related line of work consists in growing networks neuron by neuron, by iteratively estimating the best possible neurons to add, according to a certain criterion. For instance, approaches such as \citep{Splitting} or Firefly~\citep{NEURIPS2020_fdbe012e} aim at escaping local minima by adding neurons that minimize the loss under neighborhood constraints.
These neurons are found by gradient descent or by solving quadratic problems involving second-order derivatives.
Other approaches \citep{Causse2019ParsimoniousNN,bashtova2022application}, including GradMax~\citep{evci2022gradmax}, seek to minimize the loss as fast as possible and involve another quadratic problem. 
However the neurons added by these approaches are possibly redundant with existing neurons, especially if one does not wait for
training convergence to a local minimum (which is time consuming) before adding neurons, therefore producing larger-than-needed architectures. 

\paragraph{Redundancy.} To our knowledge, the only approach tackling redundancy in neural architecture growth 
adds random neurons that are orthogonal in some sense to the ones already present~\citep{maile2022when}. More precisely, the new neurons are picked
within the \emph{kernel} (preimage of $\{0\}$) of an application describing already existing neurons. 
Two such applications are proposed, respectively the matrix of fan-in weights and the pre-activation matrix, yielding two different notions of orthogonality.
The latter formulation is close to the one of GradMax, in that both study first-order loss variations and use the same pre-activation matrix, with an important difference though: GradMax optimally decreases the loss without caring about redundancy, while the other one avoids redundancy but picks random directions instead of optimal ones.
In this paper we bridge the gap between these two approaches, picking optimal directions that avoid redundancy in the pre-activation space.

\paragraph{Notions of expressivity.}
Several concepts of expressivity or complexity exist in the Machine Learning literature, ranging from Vapnik-Chervonenkis dimension~\citep{VapnikVCdim} and Rademacher complexity~\citep{koltchinskii2001rademacher} to the number of pieces in a piecewise affine function (as networks with ReLU activations are)~\citep{CountLinReg1,CountLinReg2}.
Bottlenecks have been also studied from the point of view of Information Theory, through mutual information between the activities of different layers \citep{tishby2015learning,dai2018compressing}; 
this quantity is difficult to estimate though.
Also relevant and from Information Theory, the Minimum Description Length paradigm \citep{RISSANEN1978465,MDL_Gruenwald} and Kolmogorov complexity \citep{kolmogorov1965three,li2008introduction} enable to define trade-offs between performance and model complexity.




In this article, we aim at measuring lacks of expressivity as the difference between what the backpropagation asks for and what can be done by a small parameter update (such as a gradient step), 
that is, between the desired variation for each activation in each layer (for each sample) and the best one that can be realized by a parameter update. Intuitively, differences arise when a layer does not have sufficient expressive power to realize the desired variation. 
Our main contributions are that we:
\begin{itemize}
\item adopt a functional analysis perspective on gradient descent in neural networks, advocating to follow the functional gradient. We not only optimize the weights of the current architecture but also dynamically adjust the architecture itself to progress towards a suitable parameterized functional space. This approach mitigates optimization challenges like local minima that are due to thin architectures;

\item  properly define and quantify the concept of expressivity bottlenecks, both globally at the neural network output and locally at individual layers, in a computationally accessible manner. This methodology enables to localize expressivity bottlenecks within a neural network;
\item formally define as a quadratic 
problem the best possible neurons to add to a given layer to decrease lacks of expressivity ; solve it
and
compute the associated
expressivity gain;
\item \rev{use a naive strategy on the basis of our approach to expand a neural network while maintaining competitive computational complexity and performance compared to large and well-known model or other NAS search methods.}
\end{itemize}


\section{Main concepts}\label{sec2}
\subsection{Notations}

Let $\mathcal{F}$ be a functional space, e.g.~$L_2(\mathbb{R}^p \rightarrow \mathbb{R}^d)$, and a loss function $\Loss: \mathcal{F} \rightarrow \R^+$ defined on it, of the form
    $\mathcal{L}(f) = \E_{(\vx, \vy) \sim \mathcal{D}}\Big[ \ell(f(\vx), \vy) \Big]$, where $\ell$ is the per-sample loss, assumed to be differentiable,
        and where $\mathcal{D}$ is the sample distribution, from which the dataset $\{(\displaystyle \vx_1,  \displaystyle \vy_1), ..., (\displaystyle \vx_N, \displaystyle \vy_N)\}$ is sampled, with  $\displaystyle \vx_i \in \mathbb{R}^p$ and $\displaystyle \vy_i \in \mathbb{R}^d$.
    
    For the sake of simplicity we consider a feedforward neural network 
    $f_{\theta} : \mathbb{R}^p \rightarrow \mathbb{R}^d$
    with $L$ hidden layers, each of which consisting of an affine layer with weights $\mW_l$ followed by a differentiable activation function $\sigma_l$ which satisfies $\sigma_l(0) = 0$.
    The network parameters are then
    $\theta := (\displaystyle \mW_l)_{l=1\dots L}$. 
The network iteratively computes:\vspace{-4mm}

\begin{figure}[h!]
    \begin{minipage}{.5\columnwidth}
         $$\;\;\;\;\;\;\vb_0( \vx) \;=\; \begin{pmatrix}  \vx \\ 1\end{pmatrix} $$
         $$\forall l \in [1, L], \;\; \begin{cases}\, \va_l( \vx) \!\!\! &= \; \mW_l  \,\vb_{l-1}( \vx) \\
         \,\vb_{l}( \vx)\!\!\! &= \;\begin{pmatrix} \sigma_l( \va_l( \vx)) \\ 1 \end{pmatrix}
         \end{cases}$$
          $$\hspace{1.5cm} f_{\theta}( \vx) \;=\; \sigma_L( \va_L( \vx))$$
    
    \end{minipage}
    \hfill
    \begin{minipage}{.45\columnwidth}
        \centering
        \includegraphics[width=0.6\linewidth]{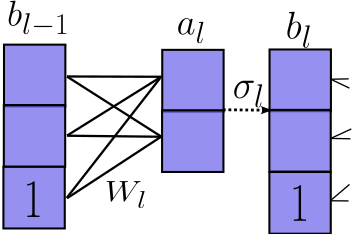}
        \caption{Notations}
    \end{minipage}\\
\end{figure}


To any vector-valued function noted $\displaystyle \vt(\vx)$ and any batch of inputs $\displaystyle \mX := [\vx_1, ..., \vx_n]$, we associate the concatenated matrix $\displaystyle \mT(\mX) := \begin{pmatrix} \displaystyle \vt(\vx_1) & ... & \displaystyle \vt(\vx_n) \end{pmatrix} \in \mathbb{R}^{|\vt(.)|\times n}$.
The matrices of pre-activation and post-activation activities at layer $l$ over a minibatch $\mX$ are thus respectively:
$\displaystyle \mA_l( \mX) = \begin{pmatrix} \displaystyle \va_l( \vx_1) & ... & \displaystyle \va_l( \vx_n)\end{pmatrix} $ and  $\displaystyle \mB_l( \mX) = \begin{pmatrix} \displaystyle \vb_l( \vx_1) & ... & \displaystyle \vb_l( \vx_n)\end{pmatrix}$.

\textit{NB: 
convolutions can also be considered, with appropriate representations (cf matrix $\mB_l^c(\vx)$ in \ref{sec:def_B_convolution}).}

\subsection{Approach}

\paragraph{Functional gradient descent.} We take a functional perspective on the use of neural networks. 
Ideally in a machine learning task, one would search for a function $f : \mathbb{R}^p \rightarrow \mathbb{R}^d$ that minimizes the loss $\mathcal{L}$ by gradient descent: $\frac{\partial f}{\partial t}  = -\nablaf \mathcal{L}(f)$ for some metric on the functional space $\F$ (typically, ${L}_2(\mathbb{R}^p \rightarrow \mathbb{R}^d)$), where $\nablaf$ denotes the functional gradient and $t$ denotes the evolution time of the gradient descent. 
The descent direction $\textcolor{teal}{\vDV} := -\nablaf \mathcal{L}(f)$ is a function of the same type as $f$ and whose value at $\vx$ is easily computable as $\displaystyle\textcolor{teal}{\vDV(\vx)} = -\left( \nablaf \mathcal{L}(f) \right)(\vx) = -\nabla_{\vu} \ell(\vu, \vy(\vx))\big|_{\vu = f(\vx)}$ (see Appendix~\ref{annexe:funcgrad} for more details). This direction $\textcolor{teal}{\vDV}$ is the best infinitesimal variation in $\mathcal{F}$ to add to the output ($u=f(\vx)$) to decrease the~loss $\Loss$, regardless of the parametrization of $f$.

\paragraph{Parametric gradient descent reminder.} 
However in practice, to represent functions and to compute gradients, the infinite-dimensioned functional space $\mathcal{F}$ has to be replaced with a finite-dimensioned parametric space of functions, which is usually done by choosing a particular neural network architecture $\A$ with weights $\theta \in \Theta_{\!\A}$. The associated parametric search space $\FA$ then consists of all possible functions~$f_{\theta}$ that can be represented with such a network for any parameter value~$\theta$.
    Under standard weak assumptions (see Appendix \ref{annexeth:mesure_theory}), 
    the gradient descent is of the form: 
    \begin{equation}
    \frac{\partial \theta}{\partial t} = - \nablatheta \Loss(f_{\theta}) = -\!\!\!\! \E_{(\vx, \vy) \sim \mathcal{D}}\Big[ \nablatheta \ell(f_{\theta}(\vx), \vy) \Big] \, .    
    \end{equation}
    Using the chain rule (on $\frac{\partial f_{\theta}}{\partial t}$ then on $\nablatheta \ell(f_{\theta}(\vx), \vy)$), these parameter updates yield a functional evolution: 
    \begin{equation}
    \vvv_{\text{GD}} \;:=\; \frac{\partial f_{\theta}}{\partial t} \;=\; \frac{\partial f_{\theta}}{\partial \theta}\;\frac{\partial \theta}{\partial t} 
    \;=\; \frac{\partial f_{\theta}}{\partial \theta} \; \E_{(\vx,\vy)\sim\mathcal{D}} \left[\frac{\partial f_{\theta}}{\partial \theta}^T\!\!(\vx)\; \textcolor{teal}{\vDV(\vx)}\right]
    \end{equation}
    
which significantly differs from the original functional gradient descent.
We will aim to augment the neural network architecture so that the parametric gradient descent can get closer to the functional~one. 

\begin{wrapfigure}{r}{0.45\textwidth}
    \vspace{-15pt}
   \centering
    \includegraphics[scale = 0.35]{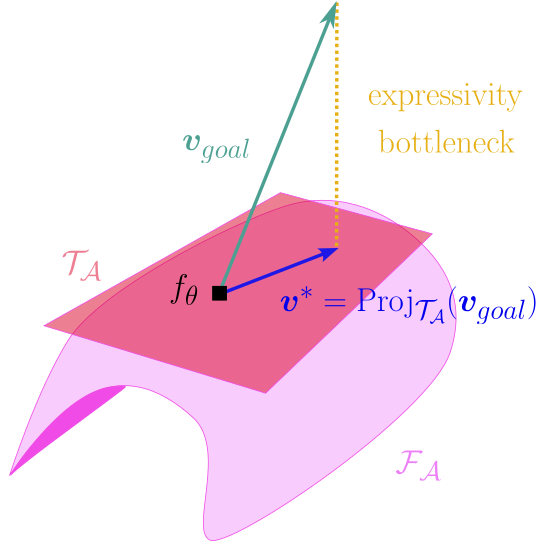}
\vspace{-2mm}
   \caption{Expressivity bottleneck}\vspace{-10pt}
   \label{fig:expres}
\end{wrapfigure}

\paragraph{Optimal move direction.} 
    We name $\TA^{f_\theta}$, or just $\TA$, the tangent space of $\FA$ at $f_\theta$, that is, the set of all possible infinitesimal variations around $f_\theta$ under small parameter variations:
    \begin{equation*}
    \TA^{f_\theta} := \left\{ \left. {\frac{\partial f_{\theta}}{\partial \theta}\; \delta \theta}
    \; \right| \; \text{s.t.} \; \delta \theta \in \Theta_{\!\A}\right\}
    \end{equation*}
    This linear functional\footnote{$\frac{\partial f_{\theta}}{\partial \theta}\; \delta \theta : \mathbb{R}^p \rightarrow \mathbb{R}^d, \;\; \vx \mapsto \frac{\partial f_{\theta}(\vx)}{\partial \theta}\; \delta \theta$} space is a first-order approximation of the neighborhood of $f_\theta$ within $\FA$.
        The direction 
    $\vvv_{\text{GD}}$ obtained above by gradient descent is actually not the best one to consider within $\TA$. Indeed, the best move $\vvv^*$ would be  the orthogonal projection of the desired direction $\textcolor{teal}{\vDV} :=-\nabla_{f_{\theta}} \Loss(f_{\theta})$ onto $\TA$.
    This projection 
    is what a (generalization of the notion of) natural gradient would compute~\citep{NatGrad}.
\medskip

Indeed, the parameter variation $\delta\theta^*$ associated to the functional variation $\vvv^* = \frac{\partial f_\theta}{\partial \theta}\, \delta\theta^*$ is the gradient $-\nabla_\theta^{\TA} \Loss(f_\theta)$ of $\Loss \circ f_\theta$ w.r.t.~parameters $\theta$
when considering the $L_2$ metric on \emph{functional} variations $\|\frac{\partial f_\theta}{\partial \theta}\, \delta\theta \|_{L_2(\TA)}$, not to be confused with 
the usual gradient $\nablatheta \Loss(f_{\theta})$, based on the $L_2$ metric on \emph{parameter} variations $\|\delta\theta\|_{L_2(\R^{|\Theta_{\!\A}|})}$. This can be seen in a proximal formulation as:
\begin{equation}\label{eq:vstar}
\vvv^* \,=\, \argmin_{\vvv \in \TA} \|\vvv-\tvDV\|^2 \,=\, \argmin_{\vvv \in \TA} \left\{ D_f\Loss(f)(\vvv) + \frac{1}{2} \|\vvv\|^2 \right\}
\end{equation}
where $D$ is the directional derivative (see details in Appendix \ref{appendgrad}), or equivalently as:
$$\delta\theta^*   \;=\; \argmin_{\delta\theta \in \Theta_{\!\A}} \left\|\frac{\partial f_\theta}{\partial \theta}\, \delta\theta-\tvDV\right\|^2  \;=\; \argmin_{\delta\theta \in \Theta_{\!\A}} \left\{ D_\theta\Loss(f_\theta)(\delta\theta) + \frac{1}{2} \left\| \frac{\partial f_\theta}{\partial \theta}\, \delta\theta\right\|^2 \right\} \;=:\; -\nabla_\theta^{\TA} \Loss(f_\theta)\, .$$

\paragraph{Lack of expressivity.} When $\tvDV$ 
does not belong to the reachable subspace $\TA$, there is a lack of expressivity, that is, the parametric space $\A$ is not rich enough to follow the ideal functional gradient descent. This happens frequently with small neural networks (see Appendix \ref{append:ex} for an example).  The expressivity bottleneck is then quantified as the distance $\|\vvv^* - \tvDV\|$ between 
the functional gradient $\tvDV$ and
the optimal functional move $\vvv^*$ given the architecture $\A$ (in the sense of Eq.~\ref{eq:vstar}). 


\subsection{Generalizing to all layers}\label{sec:updating_layer_activity}

\paragraph{Ideal updates.} The same reasoning can be 
applied to the pre-activations $\va_l$ at each layer $l$, seen as functions 
$\va_l: \vx \in \R^p \mapsto \va_l(\vx) \in \R^{d_l}$
defined over the input space of the neural network. 
The optimal parameter update for
a given layer $l$ then follows the projection of the desired update 
$-\nabla_{\va_l}\Loss(f_{\theta})$ of the pre-activation functions $\va_l$
onto the linear subspace $\TA^{\displaystyle \va_l}$ of pre-activation variations that are possible with
the architecture, as we will detail now. 

Given a sample $(\vx, \vy)\in \mathcal{D}$, standard 
backpropagation already iteratively computes $\textcolor{teal}{\vDVl(\vx)} := -\left(\nabla_{\va_l}\Loss(f_{\theta})\right)(\vx) = -\left.\nabla_{ \vu}\ell\left(\sigma_L(\displaystyle \mW_L \,\sigma_{L-1} ( \mW_{L-1} \,...\, \sigma_l( \vu))),\, \vy\right)\right|_{\vu=  \va_l( \vx)}$, which is the derivative of the loss $\ell(f_{\theta}(\vx), \vy)$ with respect to the pre-activations $\vu = \va_l(\vx)$ of each layer. This is usually performed in order to compute the gradients w.r.t.~model parameters $\mW_l$, $\,$as $\,\nabla_{\mW_l}\ell(f_{\theta}(\vx), \vy) \;=\; \frac{\partial  \va_l( \vx)}{\partial \emW_l}\; \nabla_{\va_l}\ell(f_{\theta}(\vx), \vy)$.


$\textcolor{teal}{\vDVl(\vx)} := - \left(\nabla_{\va_l}\Loss(f_{\theta})\right)(\vx)$ indicates the direction in which one would like to change the layer pre-activations $\va_l(\vx)$ in order to decrease the loss at point $\vx$.
However, given a minibatch of points $(\vx_i)$, most of the time no parameter move $\delta \theta$ is able to induce this progression for each $\displaystyle \vx_i$ simultaneously, because the $\theta$-parameterized family of functions $\displaystyle \va_l$ is not expressive enough. 



\paragraph{Activity update resulting from a parameter change.}
Given a subset of parameters $\tilde{\theta}$ (such as the ones specific to a layer: $\tilde{\theta} = \mW_l$), and an incremental direction $\delta \tilde{\theta}$ to update these parameters (e.g.~the one resulting from a gradient descent: $\delta \tilde{\theta} = - \sum_{(\vx,\vy) \in \text{minibatch}} \nabla_{\tilde{\theta}}\ell(f_{\theta}(\vx), \vy)$ ),
the impact of  the parameter update $\gamma \delta \tilde{\theta}$ on the pre-activations $\va_l$ at layer $l$ at order 1 in $\gamma$, where $\gamma$ an amplitude factor, is as
$\textcolor{magenta}{ \vDE^l( \vx, \gamma\delta{\tilde{\theta}})} : = \gamma \frac{\partial  \va_l( \vx)}{\partial \tilde{\theta}} \;\delta \tilde{\theta}$. The factor $\gamma$ is the analogue of the learning rate and is considered to be small.




\section{Expressivity bottlenecks}
\label{part:expressivity}
We now quantify expressivity bottlenecks at any layer $l$ as the distance between the desired activity update $\textcolor{teal}{ \vDVl(.)}$ and the best realizable one $\textcolor{magenta}{ \vDE^l(.)}$ (cf.~Figure \ref{fig:expres}):
\begin{defn}[Lack of expressivity]
    For a neural network $f_{\theta}$ and a minibatch of points $\mX = \{(\vx_i, \vy_i)\}_{i=1}^n$, we define the lack of expressivity at layer $l$ as how far the desired activity update $\textcolor{teal}{\mDVl = (\vDVl( \vx_1), \vDVl( \vx_2), ...) }$ is from the closest possible activity update $\textcolor{magenta}{\mDE^l = (\vDE^l( \vx_1), \vDE^l( \vx_2), ...)}$ realizable by a parameter change $\delta\theta$:\vspace{-5mm}
\end{defn}
    \begin{equation}
        \Psi^{l} \,\;:=\; \underset{ \vDE^l \in \TA^{ \va_l}}{\min}\; \frac{1}{n}\sum_{i=1}^n\left\|\textcolor{magenta}{ \vDE^l( \vx_i)} - \textcolor{teal}{ \vDVl( \vx_i)}\right\|^2         \;=\; {\min_{\delta\theta}} \;\frac{1}{n}\left\| \textcolor{magenta}{\mDE^l ( \mX, \delta \theta)} -  \textcolor{teal}{\mDVl( \mX)}\right\|^2_{\Tr}
        \label{eq:mini_NG}
    \end{equation}
where $||.||$ stands for the ${L}_2$ norm, $||.||_{\Tr}$ for the Frobenius norm, and $ \textcolor{magenta}{\mDE^l ( \mX, \delta \theta)}$ is the activity update resulting from parameter change $\delta \theta$ as defined in previous section.
In the two following parts 
we fix the minibatch $\mX$ and simplify notations accordingly  by removing the~dependency~on~$\mX$. 
Proofs of this section are deferred to Appendix \ref{sec:proofs}.

\subsection{Best move without modifying the architecture of the network}
Let $\delta  \mW_l^*$ be the solution of ~\Cref{eq:mini_NG} when the parameter variation $\delta \theta$ is restricted to involve only layer $l$ parameters, i.e.~${\mW}_l$.
This move is sub-optimal in that it does not result from an update of all architecture parameters but only of the current layer ones. In this case the activity update simplifies as $\textcolor{magenta}{\mDE^l({\mW}_l)} = \delta{\mW}_l\, \mB_{l-1}$ and the problem becomes:
\begin{align}\label{eq:dwl}
    \delta  \mW_l^* &\;=\; \underset{ \delta \mW  } 
    {\argmin} \;
    \frac{1}{n} \left\| \textcolor{magenta}{\mDE^l( \delta \mW)} -  \textcolor{teal}{\mDVl}\right\|_{\Tr}^2 \;.
  \end{align}
\begin{prop}\label{prop1}
    The solution of Problem~\eqref{eq:dwl} is: 
        $\delta  \mW_l^* = \frac{1}{n} \mDVl  \mB_{l-1}^T   (\frac{1}{n} \mB_{l-1} \mB_{l-1}^T)^{+}$ 
    where $P^+$ denotes the pseudoinverse of matrix $P$.  
\end{prop}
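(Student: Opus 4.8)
The plan is to recognize Problem~\eqref{eq:dwl} as an ordinary (matrix) linear least-squares problem and solve it by the standard normal-equations argument. Writing $\mA := \frac{1}{n}\mB_{l-1}\mB_{l-1}^T$ and expanding the Frobenius objective, one has
\[
\frac{1}{n}\left\|\delta\mW\,\mB_{l-1} - \mDVl\right\|_{\Tr}^2
\;=\; \frac{1}{n}\Tr\!\left(\delta\mW\,\mB_{l-1}\mB_{l-1}^T\delta\mW^T\right)
\;-\; \frac{2}{n}\Tr\!\left(\delta\mW\,\mB_{l-1}\,{\mDVl}^T\right)
\;+\; \frac{1}{n}\left\|\mDVl\right\|_{\Tr}^2 .
\]
First I would note this is a convex quadratic in $\delta\mW$ (the Hessian, acting on the vectorization of $\delta\mW$, is $\mA\otimes\mI$, which is positive semidefinite), so a stationary point is a global minimizer. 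Differentiating with respect to $\delta\mW$ and setting the gradient to zero yields the matrix normal equation $\delta\mW\,\mB_{l-1}\mB_{l-1}^T = \mDVl\,\mB_{l-1}^T$, i.e.\ $\delta\mW_l^*\,\mA = \frac{1}{n}\mDVl\,\mB_{l-1}^T$.

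Next I would address the fact that $\mA = \frac{1}{n}\mB_{l-1}\mB_{l-1}^T$ need not be invertible (the minibatch may be small or the post-activations linearly dependent), so one cannot simply right-multiply by $\mA^{-1}$. Here I would invoke the defining properties of the Moore--Penrose pseudoinverse: for any matrix $\mM$, the column space of $\mM^T$ equals the column space of $\mM^T\mM$, hence each row of $\mDVl\,\mB_{l-1}^T$ lies in the row space of $\mB_{l-1}\mB_{l-1}^T$, which guarantees the normal equation is consistent; and the specific solution $\delta\mW_l^* = \frac{1}{n}\mDVl\,\mB_{l-1}^T\,\mA^{+}$ satisfies it because $\mB_{l-1}^T\mA^{+}\mA = \mB_{l-1}^T$ (as $\mA^+\mA$ acts as the identity on the row space of $\mB_{l-1}\mB_{l-1}^T$, which contains the column space of $\mB_{l-1}^T$). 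Substituting back confirms $\delta\mW_l^*\,\mA = \frac{1}{n}\mDVl\,\mB_{l-1}^T$, so $\delta\mW_l^*$ is indeed a minimizer, giving exactly the claimed formula.

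The one genuinely delicate point — and the step I would flag as the main obstacle, or at least the one requiring care — is the non-uniqueness of the minimizer when $\mB_{l-1}\mB_{l-1}^T$ is singular: any $\delta\mW$ differing from $\delta\mW_l^*$ by a matrix whose rows lie in the kernel of $\mB_{l-1}$ is equally optimal. I would therefore clarify in the statement/proof that the pseudoinverse formula picks out the minimum-Frobenius-norm solution among all minimizers (the standard convention), and that this is the natural choice since it adds nothing in directions that do not affect the activities $\mA_l$. With that convention the formula in Proposition~\ref{prop1} is the unique answer, and the proof reduces to the three routine verifications above (normal equation, consistency, pseudoinverse identity); I would not belabor the elementary trace-differentiation computation.
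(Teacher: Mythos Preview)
Your proposal is correct and follows essentially the same approach as the paper's proof: both derive the normal equation $\delta\mW\,\mB_{l-1}\mB_{l-1}^T = \mDVl\,\mB_{l-1}^T$ by differentiating the convex Frobenius objective and then invoke the pseudoinverse to solve it. If anything, your treatment is more careful than the paper's, which simply states ``using that $g$ is convex and the definition of the generalized inverse'' without spelling out the consistency argument or the non-uniqueness caveat you flag.
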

This update $\delta \mW_l^*$ is not equivalent to the usual gradient descent update, whose form is $\delta  \mW_l^{\text{GD}} \propto  {\textcolor{teal}{\mDVl}} \mB_{l-1}^T$. In fact 
the associated activity variation, $\delta \mW_l^*\mB_{l-1}$,
is the projection of $ \textcolor{teal}{\mDVl}$ on
the post-activation matrix of layer $l-1$, that is to say onto the span of all possible   post-activation directions, through the projector 
$\frac{1}{n}\mB_{l-1}^T \left(\frac{1}{n}\mB_{l-1}\mB_{l-1}^T \right)^{+} \mB_{l-1}$. To increase expressivity if needed, we will aim at increasing this span with the most useful directions to close the gap between this best update and the desired one. 
Note that the update
$\delta  \mW_l^*$
consists of a standard gradient ($\textcolor{teal}{\mDVl}\mB_{l-1}^T$)
followed by a change of metric in the pre-activation space as an application of $\left(\frac{1}{n}\mB_{l-1}\mB_{l-1}^T \right)^{+}$, a structure which is reminiscent of the natural gradient.

\subsection{Reducing expressivity bottleneck by modifying the 
architecture}
To get as close as possible to $ \mDVl$ and to increase the expressive power of the current neural network, we modify each layer of its structure. At layer $l\!-\!1$, we add $K$ neurons $n_1, ..., n_K$ with input weights $ \valpha_1, ...,  \valpha_k$ and output weights $ \vomega_1, ...,  \vomega_K$ (cf.~Figure \ref{ajout_neurone}). We have the following expansions by concatenation: $ \mW_{l-1}^T \leftarrow \begin{pmatrix}  \mW_{l-1}^T &  \valpha_1 &...&  \valpha_K \end{pmatrix}$ and $ \mW_{l} \leftarrow \begin{pmatrix}  \mW_{l} &  \vomega_1 & ... &  \vomega_K\end{pmatrix}$.
\begin{figure}[t]
    \begin{minipage}{0.5\columnwidth}
        \centering
        \includegraphics[scale=0.4]{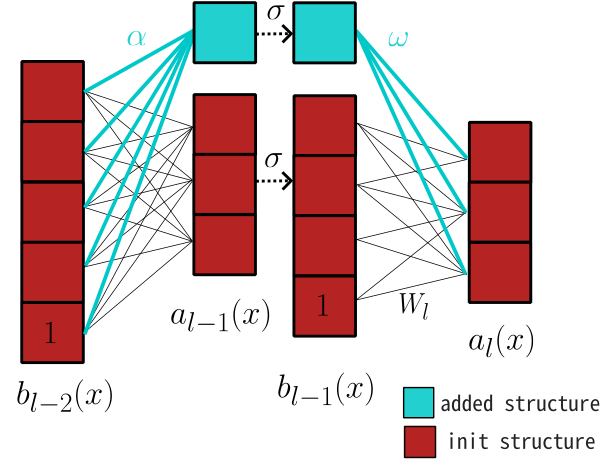}
        \caption{Adding one neuron to layer $l$ in cyan $(K=1)$, with connections in cyan. Here, $\valpha\!\in
        \!\mathrm{R}^5$ and $\vomega\!\in\!\mathrm{R}^3$.}
        \label{ajout_neurone}
    \end{minipage}
    \begin{minipage}{0.5 \columnwidth}
        \vspace{10mm}
        \centering
        \includegraphics[scale=0.5]{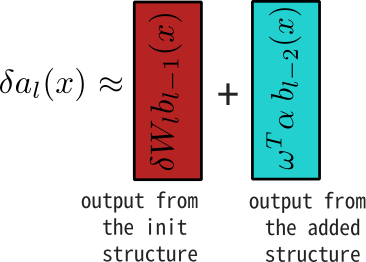}
        \caption{Sum of functional moves}
        \label{fig:sum_functionel}
    \end{minipage}
\end{figure}
We note this architecture modification $\theta \leftarrow \theta \oplus \theta_{ \leftrightarrow}^K$ where $\oplus$ is the concatenation sign and $\theta_{\leftrightarrow }^K : = ( \valpha_k, \vomega_k)_{k = 1}^K$ are the $K$ added neurons. 
 
 The added neurons could be chosen randomly, as in usual neural network initialization, but this would not yield any guarantee regarding the impact on the system loss. Another possibility would be to set either input weights $( \valpha_k)_{k=1}^K$ or output weights $( \vomega_k)_{k=1}^K$ to 0, so that the function $f_\theta(.)$ would not be modified, while its gradient w.r.t.~$\theta$ would be enriched from the new parameters.
Another option is to solve an optimization problem as in the previous section with the modified structure $\theta \leftarrow \theta \oplus \theta_{ \leftrightarrow}^K$ and jointly search for both the optimal new parameters 
$\theta_{ \leftrightarrow }^K$
and the optimal variation $\delta \mW$ of the old ones:\vspace{-1mm}
 \begin{align}\label{eq:optimal_variation}
     \underset{\theta_{ \leftrightarrow }^K,\;  \delta \mW}{\argmin} \;\frac{1}{n} \left\|\textcolor{magenta}{\mDE^{l}( \delta \mW \oplus \theta_{ \leftrightarrow }^K)} - \textcolor{teal}{\mDVl}\right\|^2_{\Tr} \; .
 \end{align}
 
As shown in Figure \ref{fig:sum_functionel},  the displacement $\mDE^{l}$ at layer $l$ is actually a sum of the moves induced by the neurons already present ($\delta \mW$) and by the added neurons ($\theta_{ \leftrightarrow}^K$). Our problem rewrites as :
 \begin{align}\label{eq:eq_rajout}
     \underset{\theta_{ \leftrightarrow}^K,\; \delta \mW}{\argmin} \;\frac{1}{n} \left\|  \textcolor{magenta}{\mDE^{l}(\theta_{\leftrightarrow }^K)} + \textcolor{magenta}{\mDE^{l}( \delta \mW)} - \textcolor{teal}{\mDVl} \right\|_{\Tr}^2
 \end{align}
with $\textcolor{magenta}{\vDE^{l}(\vx, \theta_{ \leftrightarrow }^K)} := \sum_{k=1}^K \vomega_k\;(\mathbf{b}_{l-2}(\vx)^T\valpha_k)$ (see \ref{RQ:DE}). 
We choose $\delta \mW$ as the best move of already-existing parameters as defined in Proposition \ref{prop1} and we note 
$  \textcolor{teal}{\mDVl_{proj}} 
:=  \textcolor{teal}{\mDVl} -  \textcolor{magenta}{\mDE^{l}( \delta \mW^*)}$. 
We are looking for the solution $\big(K^*, {\theta}^{K*}_{ \leftrightarrow }\big)$ of the optimization problem :
\begin{align}\label{eq:probform}
   \underset{K, \;\theta^{K}_{ \leftrightarrow }}{\argmin} \;\frac{1}{n} \left\|\textcolor{magenta}{\mDE^{l}(\theta_{ \leftrightarrow }^K)} - \textcolor{teal}{\mDVl_{proj}}\right\|_{\Tr}^2 \;.
\end{align}
\rev{One should note that Problems (\ref{eq:eq_rajout}) and (\ref{eq:probform}) are generally not equivalent, though similar (cf.~\ref{append:equiv})}.\\
This quadratic optimization problem can be solved
thanks to the low-rank matrix approximation theorem \citep{ApproxMatrix}, 
using matrices $ \mN : =\frac{1}{n}  \mB_{l-2}  \big(\textcolor{teal}{\mDVl_{proj}}\big)^T$ and $ \mS := \frac{1}{n}  \mB_{l-2} \mB_{l-2}^T$.
As $ \mS$ is positive semi-definite, let its SVD be $ \mS =  \mO\mSigma\mO^T$, and define $\mS^{-\frac{1}{2}} := \mO\sqrt{\mSigma}^{-1} \mO^T$, with the convention that the inverse of 0 eigenvalues is 0.
Finally, consider the SVD of matrix $ \mS^{-\frac{1}{2}} \mN = \sum_{k=1}^{R} \lambda_k  \vu_k  \vv_k^T$,
where $R$ is the rank of the matrix $\mS^{-\frac{1}{2}} \mN $. Then:\bigskip
    
\begin{prop}\label{prop2}
The solution of Problem (\ref{eq:probform}) is:\vspace{-2mm}
    \begin{itemize}
    \item optimal number of neurons: $K^{*} = R$\vspace{-2mm}
    \item their optimal weights: $\theta_{\leftrightarrow }^{K^*} = ({ \valpha}_k^{*}, { \vomega}_k^{*})_{k=1}^{K^*} = \left(\sqrt{\lambda_k} \mS^{-\frac{1}{2}}\vu_k,\;  \sqrt{\lambda_k}\vv_k\right)_{k=1}^{K^*}\;$ 
    \end{itemize}
    Moreover for any number of neurons $K \leqslant R$, and associated 
    optimal weights 
    ${\theta}^{K,*}_{\leftrightarrow}$
    consisting of the first $K$ neurons of $\theta_{\leftrightarrow }^{K^*}$, the  expressivity gain 
    can be quantified very simply as a function of the singular values $\lambda_k$:
        \begin{equation} \label{eq:phi_lambdas}
        \Psi^l_{\theta \oplus {\theta}^{K,*}_{\leftrightarrow }} = \Psi^l_{\theta} - \textcolor{red}{\sum_{k=1}^K \lambda_k^2}
        \end{equation}
where $\Psi^l_{\theta}$ is the expressivity bottleneck (defined in Eq.~(\ref{eq:mini_NG})). 
For convolutional layers instead of fully-connected ones, Equation (\ref{eq:phi_lambdas}) becomes an inequality ($\leqslant$).
\end{prop}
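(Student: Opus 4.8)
The plan is to recognize Problem~(\ref{eq:probform}), at a fixed number $K$ of added neurons, as a low-rank approximation problem in a reweighted Frobenius metric, and then apply the low-rank approximation (Eckart--Young) theorem. First I would put the candidate move in matrix form: collecting the new weights as columns of $\mOmega := (\vomega_1, \dots, \vomega_K)$ and $\mA_{\mathrm{in}} := (\valpha_1, \dots, \valpha_K)$, the identity $\vDE^{l}(\vx,\theta_{\leftrightarrow}^K) = \sum_{k=1}^K \vomega_k\,(\vb_{l-2}(\vx)^T\valpha_k)$ gives $\mDE^{l}(\theta_{\leftrightarrow}^K) = \mOmega\,\mA_{\mathrm{in}}^T\,\mB_{l-2} = \mM\,\mB_{l-2}$, where $\mM := \mOmega\,\mA_{\mathrm{in}}^T$ is an arbitrary matrix of rank at most $K$ and, conversely, every rank-$\leqslant K$ matrix factorizes this way. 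So with $K$ fixed, (\ref{eq:probform}) is $\min_{\rank\mM\leqslant K}\ \frac1n\norm{\mM\,\mB_{l-2} - \mDVl_{\text{proj}}}_{\Tr}^2$. Expanding the square and introducing $\mS = \frac1n\mB_{l-2}\mB_{l-2}^T$ and $\mN = \frac1n\mB_{l-2}(\mDVl_{\text{proj}})^T$, the objective becomes $\Tr(\mM\mS\mM^T) - 2\,\Tr(\mM\mN) + \frac1n\norm{\mDVl_{\text{proj}}}_{\Tr}^2$. Since the columns of $\mN$ lie in $\Ima\mS$ and $\mS^{\frac12}$ is symmetric with $\mS^{\frac12}\mS^{-\frac12}$ equal to the orthogonal projector onto $\Ima\mS$, one can complete the square: $\Tr(\mM\mS\mM^T) - 2\,\Tr(\mM\mN) = \norm{\mM\,\mS^{\frac12} - (\mS^{-\frac12}\mN)^T}_{\Tr}^2 - \norm{\mS^{-\frac12}\mN}_{\Tr}^2$.

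Next I would invoke Eckart--Young. Set $\mQ := (\mS^{-\frac12}\mN)^T = \sum_{k=1}^R \lambda_k\,\vv_k\vu_k^T$, the transpose of the SVD given before the statement, so $\mQ$ has rank $R$, singular values $\lambda_1\geqslant\dots\geqslant\lambda_R>0$, and row space inside $\Ima\mS$. The map $\mM \mapsto \mM\,\mS^{\frac12}$ sends $\{\rank\leqslant K\}$ onto $\{\rank\leqslant K,\ \text{row space}\subseteq\Ima\mS\}$, a set which still contains the top-$K$ SVD truncation of $\mQ$ (since $\mQ$ itself has row space in $\Ima\mS$); hence the minimizer over rank-$\leqslant K$ matrices is $\mM\,\mS^{\frac12} = \sum_{k=1}^K \lambda_k\,\vv_k\vu_k^T$, with residual $\sum_{k>K}\lambda_k^2$. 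Right-multiplying by $\mS^{-\frac12}$ (legitimate because this $\mM\,\mS^{\frac12}$ has row space in $\Ima\mS$) gives $\mM = \sum_{k=1}^K \lambda_k\,\vv_k\,(\mS^{-\frac12}\vu_k)^T$; reading off a factorization $\mM = \sum_k \vomega_k\valpha_k^T$ yields $\vomega_k = \sqrt{\lambda_k}\,\vv_k$ and $\valpha_k = \sqrt{\lambda_k}\,\mS^{-\frac12}\vu_k$, any other factorization differing by an invertible $K\times K$ matrix and being equally optimal. Because $\lambda_k=0$ for $k>R$, enlarging $K$ beyond $R$ brings no further decrease while $K=R$ already sends the residual to $0$; the smallest $K$ reaching the global optimum is therefore $K^{*}=R$.

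Finally, substituting the optimal value back, the minimum of~(\ref{eq:probform}) with $K$ neurons equals $\frac1n\norm{\mDVl_{\text{proj}}}_{\Tr}^2 - \sum_{k=1}^K\lambda_k^2$. It remains to identify the first term with $\Psi^l_\theta$: by Proposition~\ref{prop1}, $\mDE^{l}(\delta\mW_l^*) = \delta\mW_l^*\,\mB_{l-1}$ is the Frobenius-orthogonal projection of $\mDVl$ onto the realizable layer-$l$ moves, so $\mDVl_{\text{proj}} = \mDVl - \mDE^{l}(\delta\mW_l^*)$ attains the minimum in~(\ref{eq:dwl}), i.e.\ $\frac1n\norm{\mDVl_{\text{proj}}}_{\Tr}^2 = \Psi^l_\theta$; and the optimal value of~(\ref{eq:probform}) is by definition $\Psi^l_{\theta\oplus\theta^{K,*}_{\leftrightarrow}}$, which yields~(\ref{eq:phi_lambdas}). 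For a convolutional layer the realizable updates $\mDE^l$ no longer exhaust the rank-$\leqslant K$ matrices $\mM\,\mB_{l-2}$ (weight sharing constrains them), so running the same construction on the patch-unfolded representation $\mB^c_{l-2}$ only certifies a lower bound on the attainable decrease, whence the inequality $\leqslant$. The step I expect to be most delicate is the bookkeeping around the generically rank-deficient matrix $\mS$: checking that $\Ima\mN\subseteq\Ima\mS$, that $\mM\mapsto\mM\,\mS^{\frac12}$ is a genuine bijection of the relevant subspace with inverse right-multiplication by $\mS^{-\frac12}$, and that $\mS^{-\frac12}$ under the convention ``$0^{-1}:=0$'' acts as a true inverse of $\mS^{\frac12}$ on $\Ima\mS$; once these are in place the rest is the standard low-rank approximation argument.
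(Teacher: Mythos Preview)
Your proposal is correct and follows essentially the same route as the paper: rewrite the candidate move as $\mM\,\mB_{l-2}$ with $\mM$ of rank~$\leqslant K$, complete the square using $\mS^{1/2}$ to obtain $\norm{\mM\,\mS^{1/2}-(\mS^{-1/2}\mN)^T}_{\Tr}^2$ plus constants, and apply Eckart--Young; the paper packages the completing-the-square step as a separate lemma and handles the rank-deficient~$\mS$ via the same observation $\Ima\mN\subseteq\Ima\mS$ that you flag. The only notable divergence is the convolutional case: the paper derives the inequality not from ``realizable updates are a strict subset of rank-$K$ moves'' but by bounding $\norm{\mDE}^2$ above via a trace inequality $\Tr(\mA)^2\leqslant\rank(\mA)\norm{\mA}^2$, which yields a surrogate quadratic with a specific approximate $\mS$; your one-line sketch points in a plausible direction but does not match the paper's actual mechanism for the~$\leqslant$.
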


In practice before adding new neurons $(\alpha, \omega)$, we multiply them by an amplitude factor $\gamma$ found by a simple line search (see Appendix~\ref{sec:Normalization}), 
i.e.~we add $(\sqrt{\gamma}\alpha, \sqrt{\gamma}\omega$).
The addition of each neuron $k$ has an impact on 
the bottleneck of the order of $\gamma\lambda_k^2$  
provided $\gamma$ is small. We observe the same phenomenon with the loss as stated in the next proposition :
\begin{prop}
\label{prop:decrease_loss}
For $\gamma > 0$,  solving (\ref{eq:probform}) using $\textcolor{teal}{{\mV_{goal}}_{proj}} = \textcolor{magenta}{\mV_{goal}} - \mV(\gamma \delta \mW^*)$ is equivalent to minimizing the loss $\Loss$ at order one in $\gamma\textcolor{magenta}{\mDE^{l}}$.
Furthermore, performing 
an architecture update with $\gamma \delta \mW^*$ (\ref{eq:dwl}) and
a neuron addition with $\gamma \theta_{\leftrightarrow}^{K,*}$ (\ref{prop2}) has an impact on the loss at first order in 
$\gamma$ as :
\begin{align} \label{eq:Loss_Taylor_order_1}
    \Loss(f_{\theta \oplus \gamma\theta_{\leftrightarrow}^{K,*}}) \;\; := \;\; \frac{1}{n} \sum_{i=1}^n \ell(f_{\theta \oplus \gamma\theta_{\leftrightarrow}^{K,*}}(\vx_i), \vy_i)& \;\;=\;\;  
    \Loss(f_{\theta})- \gamma \left(\sigma_{l-1}'(0)\Delta_{\theta_{\leftrightarrow}^{K, *}} + \Delta_{\delta \mW^*}\right) + o(\gamma)
\end{align}
with 
\begin{align}
    \Delta_{\theta_{\leftrightarrow}^{K, *}} &\; := \; \frac{1}{n}\left\langle  \textcolor{teal}{\mDVl_{proj}},\;  \textcolor{magenta}{\mDE^{l}(\theta_{\leftrightarrow}^{K, *})} \right\rangle_{\Tr} \;=\; \sum_{k=1}^K\lambda_k^2\\
    \Delta_{\delta \mW^*} & \; := \;  \frac{1}{n}\left\langle  \textcolor{teal}{\mDVl},\;  \textcolor{magenta}{\mDE^{l}(\delta \mW^*)}\right\rangle_{\Tr} \;\geqslant\; 0 \; .
\end{align}
\end{prop}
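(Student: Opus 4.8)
The plan is to expand the per-sample loss $\ell(f_{\theta \oplus \gamma\theta_{\leftrightarrow}^{K,*}}(\vx_i), \vy_i)$ to first order in $\gamma$ by tracking how the architecture update propagates forward to the network output. The key observation is that both ingredients of the update --- the modification $\gamma\delta\mW_l^*$ of the existing layer-$l$ weights and the $K$ new neurons $\gamma\theta_{\leftrightarrow}^{K,*}$ added at layer $l-1$ --- affect $f_\theta$ only through the pre-activations $\va_l$. So I would first write $f_{\theta \oplus \gamma\theta_{\leftrightarrow}^{K,*}}(\vx) = g_l(\va_l(\vx) + \gamma\, \vDE^l(\vx) + o(\gamma))$, where $g_l$ is the tail of the network from layer $l$ onward and $\vDE^l(\vx) = \vDE^l(\vx,\delta\mW_l^*) + \vDE^l(\vx,\theta_{\leftrightarrow}^{K,*})$ is the total first-order pre-activation move. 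The $o(\gamma)$ here requires a small argument: the new-neuron contribution is $\gamma \sum_k \vomega_k^*(\vb_{l-2}(\vx)^T(\sqrt{\gamma}\ \cdot\ )\cdots)$ --- actually with the $\sqrt{\gamma}$ split across $\alpha$ and $\omega$ the product of the two new weight vectors carries a full factor $\gamma$, and one uses $\sigma_{l-1}(z) = \sigma_{l-1}'(0) z + o(z)$ to linearize the new activation, which is where the $\sigma_{l-1}'(0)$ prefactor on $\Delta_{\theta_{\leftrightarrow}^{K,*}}$ comes from (the pre-existing weights' move $\delta\mW_l^*$ acts directly on $\vb_{l-1}$ with no such activation in between, hence no analogous factor).

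Next I would apply the chain rule at the output: $\ell(g_l(\va_l(\vx) + \gamma \vDE^l(\vx)), \vy) = \ell(f_\theta(\vx),\vy) + \gamma\, \langle \nabla_{\va_l}\ell(f_\theta(\vx),\vy),\, \vDE^l(\vx)\rangle + o(\gamma)$, and recall from Section~\ref{sec:updating_layer_activity} that $\nabla_{\va_l}\ell(f_\theta(\vx),\vy) = -\vDVl(\vx)$. Averaging over the minibatch and using the Frobenius-inner-product notation gives
\begin{equation*}
\Loss(f_{\theta \oplus \gamma\theta_{\leftrightarrow}^{K,*}}) = \Loss(f_\theta) - \frac{\gamma}{n}\left\langle \mDVl,\ \mDE^l(\delta\mW_l^*) + \sigma_{l-1}'(0)\,\widetilde{\mDE^l}(\theta_{\leftrightarrow}^{K,*}) \right\rangle_{\Tr} + o(\gamma),
\end{equation*}
where $\widetilde{\mDE^l}(\theta_{\leftrightarrow}^{K,*})$ denotes the linearized new-neuron contribution $\sum_k \vomega_k^*(\vb_{l-2}^T\valpha_k^*)$ without the $\sigma_{l-1}'(0)$. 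It remains to split $\mDVl$ in the new-neuron term: since $\mDE^l(\theta_{\leftrightarrow}^{K,*})$ lives in the span that the projection defining $\mDVl_{proj}$ leaves untouched, and $\mDVl = \mDVl_{proj} + \mDE^l(\delta\mW_l^*)$ with $\mDE^l(\delta\mW_l^*)$ orthogonal (in Frobenius inner product) to everything realizable by output-weights of neurons feeding off $\mB_{l-2}$... actually more simply, one just uses $\langle \mDVl, \mDE^l(\theta_{\leftrightarrow}^{K,*})\rangle_{\Tr} = \langle \mDVl_{proj}, \mDE^l(\theta_{\leftrightarrow}^{K,*})\rangle_{\Tr}$, which holds because $\mDE^l(\delta\mW_l^*)$ is a linear combination of rows of $\mB_{l-1}$ and the difference $\mDVl - \mDVl_{proj}$ is precisely the component of $\mDVl$ in the row span of $\mB_{l-1}$, while $\mDE^l(\theta_{\leftrightarrow}^{K,*}) = \sum_k\vomega_k^*(\vb_{l-2}^T\valpha_k^*)$ --- hmm, this orthogonality needs care; the cleaner route is to invoke the optimality characterization: $\delta\mW_l^*$ is the Frobenius-orthogonal projection solver from Proposition~\ref{prop1}, so $\mDVl - \mDE^l(\delta\mW_l^*) = \mDVl_{proj}$ is orthogonal to the column space from which $\mDE^l(\delta\mW_l^*)$ is drawn, and adjusting the new-neuron term's reference point from $\mDVl$ to $\mDVl_{proj}$ changes the inner product by $\langle \mDE^l(\delta\mW_l^*), \mDE^l(\theta_{\leftrightarrow}^{K,*})\rangle_{\Tr}$, which I would argue vanishes or is absorbed. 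This is the step I expect to be the main obstacle --- pinning down exactly why the cross term disappears so that the new-neuron contribution is measured against $\mDVl_{proj}$ rather than $\mDVl$.

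Finally, having reached $\Loss(f_\theta) - \gamma\big(\sigma_{l-1}'(0)\,\Delta_{\theta_{\leftrightarrow}^{K,*}} + \Delta_{\delta\mW^*}\big) + o(\gamma)$ with the two $\Delta$'s defined exactly as the Frobenius inner products in the statement, I would close the argument by evaluating them via Proposition~\ref{prop2}. For $\Delta_{\theta_{\leftrightarrow}^{K,*}}$: plugging in $\valpha_k^* = \sqrt{\lambda_k}\,\mS^{-1/2}\vu_k$ and $\vomega_k^* = \sqrt{\lambda_k}\,\vv_k$ and using $\mN = \frac1n \mB_{l-2}(\mDVl_{proj})^T$ together with the SVD $\mS^{-1/2}\mN = \sum_k \lambda_k \vu_k\vv_k^T$ and orthonormality of the $\vu_k,\vv_k$ gives $\frac1n\langle \mDVl_{proj}, \mDE^l(\theta_{\leftrightarrow}^{K,*})\rangle_{\Tr} = \sum_{k=1}^K \lambda_k^2$, which is the same computation already underlying Eq.~(\ref{eq:phi_lambdas}). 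For $\Delta_{\delta\mW^*} = \frac1n\langle \mDVl, \mDE^l(\delta\mW_l^*)\rangle_{\Tr}$: since $\mDE^l(\delta\mW_l^*)$ is the orthogonal projection of $\mDVl$ onto the row span of $\mB_{l-1}$ (Proposition~\ref{prop1}), this inner product equals $\frac1n\|\mDE^l(\delta\mW_l^*)\|_{\Tr}^2 \geq 0$, giving the stated sign. The first sentence of the proposition --- that solving (\ref{eq:probform}) with $\mDVl_{proj}$ is equivalent to first-order loss minimization in $\gamma\mDE^l$ --- follows as a corollary of the same chain-rule expansion, since the objective of (\ref{eq:probform}) equals $\|\mDE^l(\theta_{\leftrightarrow}^K)\|^2 - 2\langle \mDE^l(\theta_{\leftrightarrow}^K), \mDVl_{proj}\rangle + \text{const}$ and its linear part is exactly (minus twice) the first-order loss change.
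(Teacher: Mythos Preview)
Your overall plan is essentially the paper's: expand the loss in $\gamma$ via the chain rule through $\va_l$, split the pre-activation move into the weight-update and neuron-addition parts, pull out the $\sigma_{l-1}'(0)$ factor from linearizing the new activation, and then evaluate the two inner products via Propositions~\ref{prop1} and~\ref{prop2}. Your computation of $\Delta_{\theta_{\leftrightarrow}^{K,*}} = \sum_k \lambda_k^2$ by plugging in the SVD formulas, and of $\Delta_{\delta\mW^*} \geqslant 0$ from the projection characterization, are both fine --- the paper packages the first of these as a short standalone lemma but the content is what you wrote.

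The place where you flag trouble --- the cross term $\langle \mDE^l(\delta\mW_l^*),\, \mDE^l(\theta_{\leftrightarrow}^{K,*})\rangle_{\Tr}$ --- is the one real gap in your argument, and the resolution you reach for (orthogonality of the row span of $\mB_{l-1}$ to that of $\mB_{l-2}$) does not hold in general; the paper explicitly discusses elsewhere that these spans are typically correlated. The actual fix is much simpler, and you almost have the ingredients: reread the proposition statement carefully. It defines $\mDVl_{proj} = \mDVl - \mDE^l(\gamma\,\delta\mW^*)$, with an explicit factor $\gamma$ in front of $\delta\mW^*$, \emph{not} the $\gamma$-free version $\mDVl - \mDE^l(\delta\mW^*)$ introduced earlier in Section~\ref{part:expressivity}. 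With this definition, passing from $\mDVl$ to $\mDVl_{proj}$ in the new-neuron inner product costs exactly $\gamma\,\langle \mDE^l(\delta\mW^*),\, \mDE^l(\theta_{\leftrightarrow}^{K,*})\rangle_{\Tr}$, which then carries a total prefactor $\gamma^2$ in the loss expansion and is absorbed into $o(\gamma)$. No orthogonality is needed; the cross term is simply higher order. Once you use the correct $\gamma$-scaled $\mDVl_{proj}$ from the statement, the rest of your sketch goes through as written.
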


The $\lambda_k$ could be used in a selection criterion realizing a trade-off with computational complexity. A selection based on statistical significance of singular values can also be performed.
The full algorithm and its complexity are detailed in Appendices \ref{sec:fullalgo} and \ref{sec:complexity}. We finish this section by some additional propositions and remarks.

\begin{prop}\label{prop3}
    If $ \mS$ is positive definite, then solving~(\ref{eq:probform}) is equivalent to taking $ \vomega_k =  \mN  \valpha_k$ and finding the $K$ first eigenvectors $ \valpha_k$ associated to the $K$ largest eigenvalues $\lambda$ of the generalized eigenvalue problem:\vspace{-2mm}
    \begin{align*}
         \mN  \mN^T \valpha_k = \lambda\,  \mS  \valpha_k \; .
    \end{align*}
\end{prop}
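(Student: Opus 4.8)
The plan is to eliminate the output weights $\vomega_k$ in closed form, so that Problem~(\ref{eq:probform}) collapses to a maximization over the input weights $\valpha_k$ alone, which is precisely a (generalized) trace--maximization problem solved by top eigenvectors. Collect the added neurons into $\mA:=(\valpha_1\,|\,\cdots\,|\,\valpha_K)$ and $\mOmega:=(\vomega_1\,|\,\cdots\,|\,\vomega_K)$. From the first-order expression $\textcolor{magenta}{\vDE^{l}(\vx,\theta_{\leftrightarrow}^{K})}=\sum_{k}\vomega_k\,(\mathbf{b}_{l-2}(\vx)^{T}\valpha_k)$ recalled just above, the batch move induced by the new neurons is $\textcolor{magenta}{\mDE^{l}(\theta_{\leftrightarrow}^{K})}=\mOmega\,\mA^{T}\mB_{l-2}$ (the constant $\sigma_{l-1}'(0)$ merely rescales and is absorbed into $\mOmega$), so~(\ref{eq:probform}) reads $\min_{\mA,\mOmega}\tfrac1n\|\mOmega\,\mA^{T}\mB_{l-2}-\textcolor{teal}{\mDVl_{\text{proj}}}\|_{\Tr}^{2}$.

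First I would fix $\mA$ (assumed of full column rank, shrinking $K$ otherwise) and minimize over $\mOmega$. This is a linear least-squares problem whose normal equations, using $\tfrac1n\mB_{l-2}\mB_{l-2}^{T}=\mS$ and $\tfrac1n\textcolor{teal}{\mDVl_{\text{proj}}}\,\mB_{l-2}^{T}=\mN^{T}$, read $\mOmega\,(\mA^{T}\mS\mA)=\mN^{T}\mA$. As $\mS$ is positive definite, $\mA^{T}\mS\mA$ is invertible; substituting the optimal $\mOmega$ back gives objective value $\tfrac1n\|\textcolor{teal}{\mDVl_{\text{proj}}}\|_{\Tr}^{2}-\Tr\!\big((\mA^{T}\mS\mA)^{-1}\mA^{T}\mN\mN^{T}\mA\big)$, which is invariant under $\mA\mapsto\mA R$ for invertible $R$ and hence depends on $\mA$ only through $\mathrm{span}(\valpha_k)$. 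I may therefore use the scaling freedom $(\valpha_k,\vomega_k)\mapsto(c\valpha_k,c^{-1}\vomega_k)$ together with an $\mS$-orthonormalization (Gram--Schmidt for the inner product $\langle\cdot,\cdot\rangle_{\mS}$) to assume $\mA^{T}\mS\mA=\mathbf{I}_K$. Then $\mOmega^{\star}=\mN^{T}\mA$, i.e.\ $\vomega_k=\mN^{T}\valpha_k$ (the relation of the statement, up to the transpose convention on $\mN$), and substituting back the objective reduces to $\tfrac1n\|\textcolor{teal}{\mDVl_{\text{proj}}}\|_{\Tr}^{2}-\|\mOmega^{\star}\|_{\Tr}^{2}=\tfrac1n\|\textcolor{teal}{\mDVl_{\text{proj}}}\|_{\Tr}^{2}-\sum_{k=1}^{K}\valpha_k^{T}\mN\mN^{T}\valpha_k$.

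Hence, for any chosen number $K$ of neurons to add (in particular the optimal $K^{\star}=R$ of \Cref{prop2}), Problem~(\ref{eq:probform}) is equivalent to maximizing $\sum_{k}\valpha_k^{T}\mN\mN^{T}\valpha_k$ over $\mS$-orthonormal families $(\valpha_k)_{k=1}^{K}$. The substitution $\vu_k:=\mS^{\frac12}\valpha_k$ (legitimate since $\mS\succ0$) recasts this as maximizing $\sum_k\vu_k^{T}\,\mS^{-\frac12}\mN\mN^{T}\mS^{-\frac12}\,\vu_k$ over orthonormal families, whose maximizers --- by the Courant--Fischer / Ky Fan theorem --- are the top-$K$ eigenvectors of the symmetric positive semi-definite matrix $\mS^{-\frac12}\mN\mN^{T}\mS^{-\frac12}$; pulling back, $\valpha_k=\mS^{-\frac12}\vu_k$ are exactly the eigenvectors associated with the $K$ largest eigenvalues $\lambda$ of $\mN\mN^{T}\valpha_k=\lambda\,\mS\valpha_k$, with $\vomega_k=\mN^{T}\valpha_k$, which is the claim. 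Consistently with \Cref{prop2}, these eigenvalues $\lambda$ equal the squared singular values $\lambda_k^{2}$ appearing there; one could alternatively skip the least-squares step and read the statement off \Cref{prop2} directly by plugging $\vu_k=\mS^{\frac12}\valpha_k^{\star}/\sqrt{\lambda_k}$ into the left-singular-vector identity $\mS^{-\frac12}\mN\mN^{T}\mS^{-\frac12}\vu_k=\lambda_k^{2}\vu_k$.

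The one genuinely delicate ingredient --- and the sole place where the hypothesis $\mS\succ0$ is used --- is this final chain of equivalences: positive-definiteness is what makes the $\mS$-orthonormalization possible, guarantees the clean pseudoinverse-free form $\mOmega^{\star}=\mN^{T}\mA$, and legitimizes the change of variable $\vu_k=\mS^{\frac12}\valpha_k$ that turns the generalized eigenproblem into a standard symmetric one to which the extremal (Ky Fan) characterization applies; if $\mS$ were merely positive semi-definite this route would break down, which is why the hypothesis is stated explicitly. Apart from that, the proof is bookkeeping: one must keep track of the $\tfrac1n$ factors and of the constant $\sigma_{l-1}'(0)$, which only rescale and must stay consistent with the definitions of $\mN$ and $\mS$.
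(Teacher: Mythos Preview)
Your proof is correct, and it takes a genuinely different route from the paper's. The paper argues \emph{backwards}: it relies on \Cref{prop2} having already identified the optimizers of~(\ref{eq:probform}) as $\valpha_k^{*}=\sqrt{\lambda_k}\,\mS^{-1/2}\vu_k$, then separately shows that the generalized eigenproblem $\mN\mN^{T}\valpha=\lambda\,\mS\valpha$, rewritten as the Rayleigh quotient $\max_{\vp}\|\mN^{T}\mS^{-1/2}\vp\|^{2}/\|\vp\|^{2}$ via the same substitution $\vp=\mS^{1/2}\valpha$, selects the same directions; the $\vomega_k$ part is dispatched with a one-line ``the same reasoning applies''. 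Your argument is \emph{forward} and self-contained: you eliminate $\mOmega$ by least squares, obtain the profile objective $\tfrac1n\|\mDVl_{\mathrm{proj}}\|^{2}-\Tr\!\big((\mA^{T}\mS\mA)^{-1}\mA^{T}\mN\mN^{T}\mA\big)$, use the $\mS$-orthonormalization to expose a Ky~Fan trace-maximization, and only then perform the change of variable. This buys you (i) a natural derivation of the relation $\vomega_k=\mN^{T}\valpha_k$ from first-order optimality rather than by inspection, and (ii) a proof that does not logically depend on \Cref{prop2}; in fact the alternative short route you sketch at the end is essentially the paper's proof. Your remark about the transpose is also well taken: dimensionally, $\vomega_k=\mN^{T}\valpha_k$ is what is meant (with $\mN=\tfrac1n\mB_{l-2}(\mDVl_{\mathrm{proj}})^{T}$).
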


\begin{corollaire*}{For all integers $m, m'$ such that $m + m' \leqslant R$, at order one in $\mDE$, adding $m+m'$ neurons simultaneously according to the previous method is equivalent to adding $m$ neurons then $m'$ neurons by applying successively the previous method twice.}
\end{corollaire*}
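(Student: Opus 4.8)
The plan is to reduce the claim to the \emph{nestedness (deflation) property} of best low-rank approximation -- the same fact behind the Eckart--Young theorem \citep{ApproxMatrix} that was used to establish \cref{prop2} -- transported through the change of metric $\mS^{-\frac{1}{2}}$. Throughout, ``the previous method'' denotes solving \eqref{eq:probform} via \cref{prop2}, the target being the \emph{current} residual: for a single application it is $\textcolor{teal}{\mDVl_{proj}}$, while for the second of two successive applications it is $\textcolor{teal}{\mDVl_{proj}} - \textcolor{magenta}{\mDE^{l}(\theta_{\leftrightarrow}^{m,*})}$, where $\theta_{\leftrightarrow}^{m,*}$ are the $m$ neurons returned by the first application (so $\delta\mW^{*}$ is computed once, and only the neuron-addition step is iterated). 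Working ``at order one in $\mDE$'' means we identify a batch of added neurons with its first-order functional move $\textcolor{magenta}{\mDE^{l}(\theta_{\leftrightarrow}^{K})} = \sum_{k}\vomega_k(\mathbf{b}_{l-2}^{T}\valpha_k)$; since inserting neurons \emph{at layer $l-1$} leaves $\mB_{l-2}$ (hence $\mS$ and the bilinear form in \eqref{eq:probform}) untouched, the two procedures operate on the same linear-algebraic data, and the first-order moves of two batches add without cross terms.

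\textbf{Step 1 (direct solution).} By \cref{prop2}, applying the method once with $K=m+m'\le R$ returns the neurons $\bigl(\sqrt{\lambda_k}\,\mS^{-\frac{1}{2}}\vu_k,\ \sqrt{\lambda_k}\,\vv_k\bigr)_{k=1}^{m+m'}$ read from the SVD $\mS^{-\frac{1}{2}}\mN=\sum_{k=1}^{R}\lambda_k\vu_k\vv_k^{T}$, with expressivity gain $\sum_{k=1}^{m+m'}\lambda_k^{2}$.

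\textbf{Step 2 (residual after the first $m$ neurons).} Using $\valpha_k^{*T}\mB_{l-2}=\sqrt{\lambda_k}\,\vu_k^{T}\mS^{-\frac{1}{2}}\mB_{l-2}$ and $\vomega_k^{*}=\sqrt{\lambda_k}\,\vv_k$, one gets $\textcolor{magenta}{\mDE^{l}(\theta_{\leftrightarrow}^{m,*})}=\bigl(\sum_{k=1}^{m}\lambda_k\vu_k\vv_k^{T}\bigr)^{T}\mS^{-\frac{1}{2}}\mB_{l-2}=M_m^{T}\mS^{-\frac{1}{2}}\mB_{l-2}$, where $M_m:=\sum_{k=1}^{m}\lambda_k\vu_k\vv_k^{T}$. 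Hence the updated matrix $\mN':=\tfrac1n\mB_{l-2}\bigl(\textcolor{teal}{\mDVl_{proj}}-\textcolor{magenta}{\mDE^{l}(\theta_{\leftrightarrow}^{m,*})}\bigr)^{T}=\mN-\tfrac1n\mB_{l-2}\mB_{l-2}^{T}\mS^{-\frac{1}{2}}M_m=\mN-\mS^{\frac{1}{2}}M_m$, using $\tfrac1n\mB_{l-2}\mB_{l-2}^{T}=\mS$ and $\mS\,\mS^{-\frac{1}{2}}=\mS^{\frac{1}{2}}$. Left-multiplying by $\mS^{-\frac{1}{2}}$ and observing that $\vu_1,\dots,\vu_m\in\Ima\mS^{-\frac{1}{2}}=\Ima\mS$, so the projector $\mS^{-\frac{1}{2}}\mS^{\frac{1}{2}}$ fixes $M_m$, we obtain the key identity $\mS^{-\frac{1}{2}}\mN' = \mS^{-\frac{1}{2}}\mN - M_m = \sum_{k=m+1}^{R}\lambda_k\vu_k\vv_k^{T}$.

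\textbf{Step 3 (deflation, conclusion).} The right-hand side is \emph{already} an SVD: singular values $\lambda_{m+1}\geqslant\dots\geqslant\lambda_{R}>0$, singular vectors the same orthonormal $\vu_k,\vv_k$, so the new rank is $R-m$. Applying \cref{prop2} to this second problem therefore outputs exactly $\bigl(\sqrt{\lambda_k}\,\mS^{-\frac{1}{2}}\vu_k,\ \sqrt{\lambda_k}\,\vv_k\bigr)_{k=m+1}^{m+m'}$ whenever $m'\le R-m$, i.e.\ $m+m'\le R$, with incremental gain $\sum_{k=m+1}^{m+m'}\lambda_k^{2}$. Concatenating with the first batch reproduces, neuron by neuron, the direct $(m+m')$-neuron solution of Step 1, and the gains satisfy $\sum_{k=1}^{m}\lambda_k^{2}+\sum_{k=m+1}^{m+m'}\lambda_k^{2}=\sum_{k=1}^{m+m'}\lambda_k^{2}$, consistently with \eqref{eq:phi_lambdas}; for convolutional layers the same algebra goes through with \eqref{eq:phi_lambdas} an inequality, as in \cref{prop2}.

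The main obstacle is Step 2: one must track the pseudo-inverse conventions when $\mS$ is merely positive semi-definite, verifying that $\mS^{-\frac{1}{2}}\mS^{\frac{1}{2}}$ and $\mS^{\frac{1}{2}}\mS^{-\frac{1}{2}}$ act as the orthogonal projector onto $\Ima\mS$ and that the relevant objects ($\vu_k$, the columns of $M_m$) lie in that subspace, so these projectors can be dropped. A secondary point to spell out is the legitimacy of the ``order one in $\mDE$'' reduction: after the first batch the network function, hence $\mB_{l-1}$ and the genuine backpropagated target $\mDVl$, move at order $\gamma$, but this perturbs the second quadratic problem only at order $\gamma^{2}$ in the amplitude factor, so it is invisible in the first-order regime in which the corollary is stated.
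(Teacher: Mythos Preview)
Your argument is correct and takes a genuinely different route from the paper's. The paper proceeds perturbatively: after inserting the first batch with small amplitude $\eta$, it Taylor-expands $\vDV^{t+1}=g(\va_l^{t+1})$ through the Hessian of the loss to obtain $\vDV^{t+1}=\vDV^{t}+o(\eta)$, and concludes that the second application faces (up to $o(\eta)$) the same minimisation problem as the first; an induction from the base case $m=m'=1$ is then invoked. You instead stay entirely inside the linearised quadratic problem \eqref{eq:probform} and exhibit the Eckart--Young deflation directly: your Step~2 identity $\mS^{-1/2}\mN'=\mS^{-1/2}\mN-M_m=\sum_{k>m}\lambda_k\vu_k\vv_k^T$ shows that the second solve, with the residual as target, returns precisely the next $m'$ singular triples. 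Your route makes the nestedness of best low-rank approximation fully transparent and handles arbitrary $m,m'$ in one shot, whereas the paper's route spends its effort on the Hessian bookkeeping that justifies ignoring the nonlinear change of $\mDV$ after the first insertion---a point you only sketch in your closing paragraph. The two pieces are complementary: the paper's Taylor argument legitimises the ``order one in $\mDE$'' reduction to fixed data $(\mS,\mN)$, and your deflation calculation supplies the linear-algebraic half that the paper leaves implicit.
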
\label{cor2}

One should also note that the family $\{\mDE^{l+1}(( \valpha_k,  \vomega_k))\}_{k=1}^K$ of pre-activity variations induced by adding the neurons ${\theta}^{K,*}_{\leftrightarrow }$ is orthogonal for the trace scalar product. We could say that the added neurons are orthogonal to each other (and to the already-present ones) in that sense.
Interestingly, the GradMax method~\citep{evci2022gradmax} 
also aims at minimizing the loss~\ref{eq:Loss_Taylor_order_1}, 
but without avoiding redundancy (see Appendix \ref{GradMaxcompare} for more details).

\section{About greedy growth sufficiency and TINY convergence} \label{greedy}

One might wonder whether a greedy approach on layer growth might get stuck in a non-optimal state. By \emph{greedy} we mean that every neuron added has to decrease the loss. 
Since in this work we add neurons layer per layer independently, we study here the case of a single hidden layer network, to spot potential layer growth issues. For the sake of simplicity, we consider the task of least square regression towards an explicit continuous target $f^*$, defined on a compact set. That is, we aim at minimizing the loss:\vspace{-2mm}
\begin{equation}
\inf_f \sum_{\vx \in \mathcal{D}} \norm{f(\vx)-f^*(\vx)}^2
\end{equation}
where $f(\vx)$ is the output of the neural network and $\mathcal{D}$ is the training set. Proofs and supplementary propositions are deferred to Appendix \ref{sec:greedyproof}, in particular \ref{app:expconv} and \ref{app:tinyconverg}.


First, if one allows only adding neurons but no modification of already existing ones:
\begin{prop}[Exponential convergence to 0 training error by ReLU neuron additions]  
It is possible to decrease the loss exponentially fast with the number $t$ of added neurons, i.e.~as $\gamma^t \gL(f)$, towards 0 training loss, and this in a greedy way, that is, such that each added neuron decreases the loss. The factor $\gamma$ is $\gamma = 1 - \frac{1}{n^3 d'} \left(\frac{d_m}{d_M}\right)^2$, where $d_m$ and $d_M$ are quantities solely dependent on the dataset geometry, $d'$ is the output dimension of the network, and $n$ is the dataset size.
\end{prop}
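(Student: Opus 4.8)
The plan is to show that, starting from any network output $f$ on the finite dataset $\mathcal{D}$ (with $|\mathcal{D}|=n$), one can find a single ReLU neuron $\vx\mapsto\vomega\,\sigma(\valpha^T\vx+\beta)$ whose addition reduces the residual $r:=f-f^*$ (evaluated on $\mathcal{D}$, i.e.\ a point in $\mathbb{R}^{nd'}$) by a definite multiplicative factor. The key observation is that the set of vectors $\big(\sigma(\valpha^T\vx_i+\beta)\big)_{i=1}^n\in\mathbb{R}^n$, as $(\valpha,\beta)$ ranges over all hyperplane parameters, spans $\mathbb{R}^n$ whenever the $\vx_i$ are distinct: one can isolate any single data point on one side of a hyperplane, so by taking differences of such ``threshold'' activation patterns one realizes every coordinate vector $e_i$. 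Hence there exists at least one neuron direction $\vu=\big(\sigma(\valpha^Tx_i+\beta)\big)_i$ with $|\langle \vu, \hat r\rangle|$ bounded below, where $\hat r$ is the component of the residual along a fixed output coordinate; quantitatively, $\max_{\valpha,\beta}\,\langle \vu/\|\vu\|,\hat r\rangle^2 \geq \frac{1}{n}\big(\tfrac{d_m}{d_M}\big)^2\|\hat r\|^2$ for suitable dataset-geometry constants $d_m\leq d_M$ (smallest/largest relevant scales among the activation patterns), and summing over the $d'$ output coordinates gives a lower bound of the form $\frac{1}{n^2 d'}\big(\tfrac{d_m}{d_M}\big)^2\|r\|^2$ on the best achievable $\|\Pi_{\vu} r\|^2$.

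Next I would make this into a greedy step. Fix the optimal $(\valpha,\beta)$ from the previous paragraph and choose the output weight $\vomega$ to be the least-squares optimum for that fixed input direction; this is exactly the rank-one projection of the residual onto $\mathrm{span}(\vu)\otimes\mathbb{R}^{d'}$, and it strictly decreases $\sum_{\vx}\|f(\vx)-f^*(\vx)\|^2$ by $\|\Pi_{\vu} r\|^2$ — which, combined with the bound above, shows the new loss is at most $\gamma\,\mathcal{L}(f)$ with $\gamma = 1 - \frac{1}{n^2 d'}\big(\tfrac{d_m}{d_M}\big)^2$ (the extra factor $1/n$ in the statement's $\gamma$ presumably comes from a looser but cleaner estimate, e.g.\ controlling the Gram matrix of the $n$ activation vectors by $n\cdot d_M^2$ rather than its exact smallest eigenvalue; I would follow whichever normalization keeps the constants explicit). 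Since each step leaves a residual of strictly smaller norm, the hypotheses (distinct $\vx_i$) are preserved and the argument iterates, giving $\mathcal{L}(f_t)\leq \gamma^t\,\mathcal{L}(f_0)\to 0$; moreover each individual neuron addition decreased the loss, so the procedure is greedy in the stated sense. Note this is consistent with Proposition~\ref{prop2}: adding one optimal neuron reduces the bottleneck (here, with no modification of existing weights and at the output layer, the bottleneck coincides with the loss up to the $\sigma'(0)$ factor and the fixed-residual linearization), and $\lambda_1^2$ is precisely $\|\Pi_{\vu}r\|^2$.

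The main obstacle is the quantitative lower bound on $\max_{\valpha,\beta}\langle \vu/\|\vu\|,\hat r\rangle^2$ in terms of explicit dataset-geometry constants: abstractly the activation vectors span $\mathbb{R}^n$, but one must produce an \emph{effective} spanning statement — i.e.\ a lower bound on how much of an arbitrary $\hat r\in\mathbb{R}^n$ is captured by the \emph{best single} activation vector, not by an arbitrary linear combination. I would handle this by exhibiting an explicit finite family of $n$ (or $O(n)$) neurons whose activation vectors are ``triangular'' with respect to an ordering of the data points — order the $\vx_i$ by their projection onto a generic direction and use thresholds between consecutive projected values, so the activation matrix is (up to scaling) lower-triangular with controlled diagonal and off-diagonal entries. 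Bounding the smallest singular value of this explicit matrix from below, and its largest from above, by quantities depending only on the minimal gap and the diameter of the projected point set yields the constants $d_m, d_M$; then $\|r\|^2 \leq (\text{that family's Gram operator norm})\cdot \sum_k\langle \vu_k/\|\vu_k\|, r\rangle^2$ forces at least one term to be large, giving the claimed per-step decrease. A secondary technical point is checking that ReLU's non-differentiability at $0$ (and the constraint $\sigma_l(0)=0$) does not interfere — it does not, since we only ever evaluate on the finite set $\mathcal{D}$ and can perturb $(\valpha,\beta)$ to avoid exact ties — and that the line-search amplitude factor $\gamma$ of Appendix~\ref{sec:Normalization} can indeed be taken so that the first-order decrease of Proposition~\ref{prop:decrease_loss} is realized as an actual decrease, which follows from standard descent-lemma reasoning once a lower bound on curvature along the added direction is available.
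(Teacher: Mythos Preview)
Your proposal is correct and follows essentially the same route as the paper: both construct an explicit family of $n$ ReLU neurons whose activation matrix on the dataset is lower-triangular (order the $\vx_i$ by projection onto a generic direction, place thresholds between consecutive projected values), bound its conditioning via the minimal projected gap $d_m$ and the projected diameter $d_M$, apply a pigeonhole over the $n$ neurons to find one with inner product at least $\mathcal{L}(f)/n$ against the residual, and then do an exact line search on the output weight. The extra factor $1/n$ you flagged is exactly what the paper incurs by bounding $\|h_i\|^2 = w_i^2\sum_j A_{ji}^2 \leqslant n\,d_M^2\,w_i^2$, i.e.\ the crude $\sum_j A_{ji}^2 \leqslant n\,d_M^2$ estimate you anticipated.
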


In particular, there exists no situation where one would need to add many neurons simultaneously to decrease the loss: it is always feasible with a single neuron.

TINY might get stuck when no correlation between inputs $\vx_i$ and desired output variations $f^*(\vx_i) - f(\vx_i)$ can be found anymore. To prevent this, one can choose an auxiliary method to add neurons in such cases, for instance random neurons (with a line search over their amplitude, cf.~Appendix \ref{app:randomgain}), 
or locally-optimal neurons found by gradient descent, or
solutions of higher-order expressivity bottleneck formulations using further developments of the activation function.
We will name \emph{completed-TINY} the completion of TINY by any such auxiliary method.

Now, if we also update already existing weights when adding new neurons, we get a stronger result:
\begin{prop}[Completed-TINY reaches 0 training error in at most $n$ neuron additions]
Under certain assumptions (full batch optimization, updating already existing parameters, and, more technically: polynomial activation function of order $\geqslant n^2$), completed-TINY reaches 0 training error in at most $n$ neuron additions almost surely.
\end{prop}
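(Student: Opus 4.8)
### Proof Plan

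The plan is to exploit the freedom granted by the polynomial activation of high degree together with the ability to update existing weights, reducing the problem to a rank/interpolation argument over the $n$ training points. First I would set up coordinates: with a single hidden layer, the network output on the dataset is $f(\vx_i) = \mW_L\, \sigma(\mW_1 \vb_0(\vx_i))$ for $i = 1,\dots,n$, and the target residual we must annihilate is $r_i := f^*(\vx_i) - f(\vx_i) \in \R^{d}$. The key structural observation is that, because $\sigma$ is a polynomial of degree $\geq n^2$, the feature map $\vx \mapsto \sigma(\va)$ applied to a generic affine form $\va = \vw^T\vb_0(\vx)$ can realize, as $\vw$ varies, a family of functions on the finite set $\{\vx_1,\dots,\vx_n\}$ that spans $\R^n$ — indeed $\{1, t, t^2, \dots, t^{n-1}\}$ already suffices to interpolate $n$ distinct scalar values, and the $n$ points $\vb_0(\vx_i)$ can be separated by a generic linear functional almost surely, so the Vandermonde-type matrix built from powers of $\vw^T\vb_0(\vx_i)$ is invertible. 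This is why $n$ neurons suffice in principle.

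The main steps, in order: (i) Using Proposition~\ref{prop2} (or directly Problem~\eqref{eq:probform}) show that at each step, as long as the current residual is nonzero, a neuron addition strictly decreases the training loss, so the process is genuinely greedy and never stalls before reaching zero — this relies on the fact that updating existing weights removes the ``no correlation'' obstruction that plagued completed-TINY's predecessor. (ii) Argue that after adding the $k$-th neuron and jointly re-optimizing, the attainable set of output functions on $\{\vx_i\}$ has dimension (generically) $k$ in the relevant sense, i.e.~each new neuron adds one genuinely new degree of freedom in the span of realizable residual-corrections on the dataset; the polynomial-degree hypothesis guarantees no degeneracy of the feature vectors $(\sigma(\vw^T\vb_0(\vx_i)))_{i=1}^n$ for generic $\vw$. (iii) Conclude by a dimension count: since the residuals live in an $n$-dimensional space (treating the $d$-dimensional outputs per sample, one chooses the output weights $\vomega_k \in \R^d$ freely, so the binding constraint is the $n$ samples, not the $d$ coordinates), after at most $n$ additions the realizable corrections span everything needed and the residual can be driven exactly to $0$. (iv) Handle the ``almost surely'' qualifier by identifying the genericity condition (distinctness of $\vb_0(\vx_i)$ after projection, non-vanishing of a Vandermonde-like determinant, full-rank of the relevant feature matrices) as the complement of a proper algebraic subvariety in the space of data / initial weights, hence measure zero.

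I expect the main obstacle to be step (ii)–(iii): making precise the claim that the greedy, one-neuron-at-a-time procedure — which at each step solves the \emph{locally} optimal quadratic problem of Proposition~\ref{prop2} rather than a global interpolation problem — actually increases the rank of the realizable correction space by exactly one each time and does not get trapped in a lower-dimensional invariant subspace. One must show that the optimal neuron from Problem~\eqref{eq:probform} is never orthogonal to the remaining residual (which follows because $\Psi^l$ strictly drops unless the residual is already realizable, cf.~Eq.~\eqref{eq:phi_lambdas} with $\lambda_1 > 0$), and then that the \emph{joint} re-optimization of existing weights together with the new neuron, iterated, exhausts the span. The cleanest route is probably induction on the number of additions: maintain the invariant ``the residual after $k$ steps lies in a subspace of codimension $\geq k$ among the attainable directions,'' using the high-degree polynomial activation precisely to certify that a fresh neuron's feature vector $(\sigma(\vw^T \vb_0(\vx_i)))_i$ can be chosen outside the current attainable span for generic data, so the induction can proceed until codimension $0$, i.e.~zero residual, at $k \leq n$. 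The genericity bookkeeping (step iv) is routine but must be stated carefully to justify ``almost surely''.
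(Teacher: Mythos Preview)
Your inductive rank-tracking approach is more complicated than necessary and has a real gap. The paper's proof does not follow the iterations at all: it simply waits until $n$ neurons have been added (by whatever rule), notes that the $n\times n$ activation matrix $A_{ij}=\sigma(\vv_j\cdot\vx_i+b_j)$ is then square, and proves a lemma that this matrix is invertible almost surely. The polynomial degree $\geqslant n^2$ is used precisely here: it gives $\sigma$ enough free coefficients that $\det A$, viewed as a polynomial in the activation and neuron parameters, is not identically zero, hence its zero set has measure zero. Once $A$ is invertible, the ``update existing output weights'' step (a single least-squares solve in $\vw$) hits $A\vw=f^*$ exactly and the training loss is zero. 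No step-by-step rank increase is needed.

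The gap in your plan is step~(ii). You want each neuron added by completed-TINY to produce a feature vector $(\sigma(\vw^T\vb_0(\vx_i)))_{i=1}^n$ that is linearly independent from the previous ones. But the neurons produced by TINY are \emph{not} generic: they are the solutions of the quadratic problem in Proposition~\ref{prop2}, so a Vandermonde/genericity argument over $\vw$ does not apply to them. Your phrase ``can be chosen outside the current attainable span for generic data'' conflates the algorithm's specific choice with a generic one. Relatedly, your step~(i) is backward: updating existing weights does \emph{not} remove the no-correlation obstruction; that obstruction is exactly why the statement is about \emph{completed}-TINY, which falls back on an auxiliary rule (e.g.\ random neurons) when TINY itself proposes nothing. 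The paper sidesteps all of this by pushing the entire argument to the terminal step and putting the randomness on the activation-function coefficients rather than on the algorithm's trajectory.
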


Hence we see the importance of updating existing parameters on the convergence speed. This optimization protocol is actually the one we follow in practice when training neural networks with TINY (except when comparing with other methods using their protocol).

Note that our approach shares similarity with gradient boosting \cite{friedman2001greedy} somehow, as we grow the architecture based on the gradient of the loss.
Note also that finding the optimal neuron to add is actually NP-hard~\citep{bach2017breaking}, but that we do not need new neuron optimality to converge to 0 training error.

\section{Results}\label{sec:results}
 \subsection{Comparison with GradMax on CIFAR-100}

\begin{figure}[t!]
    \centering
    \includegraphics[scale = 0.52] {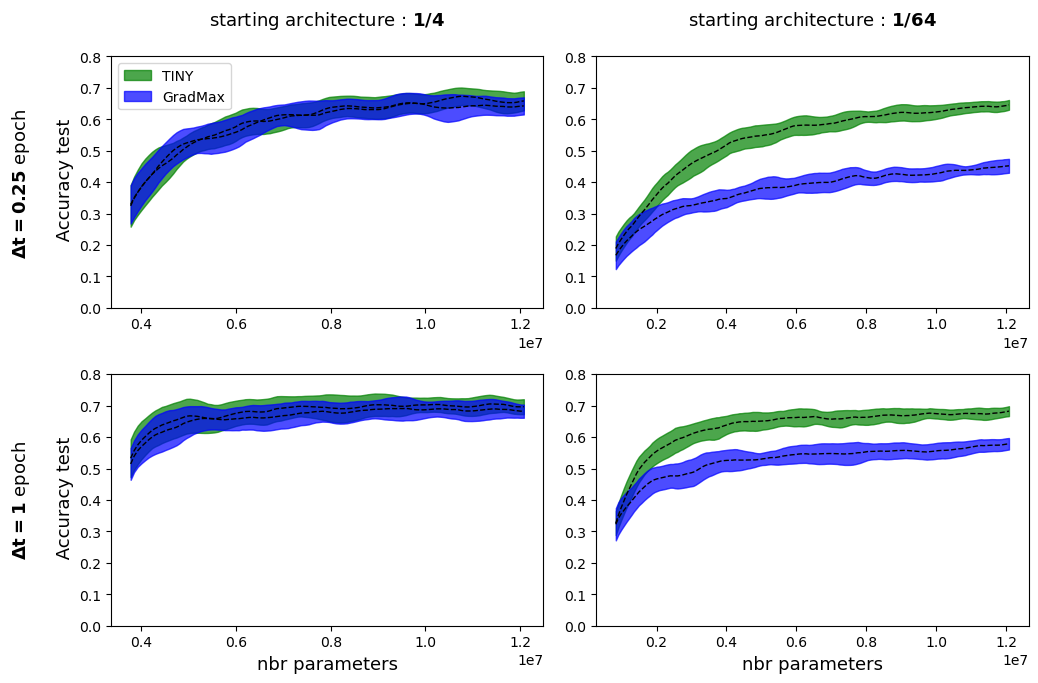}
    \caption{Test accuracy as a function of the number of parameters during architecture growth from $\text{ResNet}_s$ to ResNet18. The starting architecture in the left (resp. right) column is $\text{ResNet}_{1/4}$ (resp. $\text{ResNet}_{1/64}$). 
    The amount of training time $\Delta t$ between consecutive neuron additions in
    the upper (resp. lower) row is 0.25 (resp. 1) epoch. 
    Statistics over 4 runs are performed for each setting.}
    \label{fig:Pareto_WT}
\end{figure}

The closest growing method to TINY is GradMax (\cite{evci2022gradmax}), as 
it solves a quadratic problem similar to (\ref{eq:probform}). 
By construction, the objective of GradMax is to decrease the loss as fast as possible 
considering an infinitesimal increment of new neurons. 
The main difference is that GradMax does not take into account the expressivity of the current architecture as TINY does in (\ref{eq:probform}) by projecting $\DV$.
In-depth details about the difference between the GradMax and TINY are provided in Appendix~\ref{GradMaxcompare}.


In this section, we show on the CIFAR-100 dataset that solving  (\ref{eq:probform}) instead of (\ref{eq:GradMax}) (defined by GradMax) to grow a network 
using a naive strategy 
allows better final performance and almost full expressive power. 
To do so, we have re-implemented the GradMax method and mimicked its growing process which consists in 
 increasing the architecture of a thin $\text{ResNet18}$ 
until it reaches the architecture of the usual $\text{ResNet18}$. This process is described in the pseudo code \ref{algo:Pareto_detail}, where two parameters can be chosen :
the relative thinness $s$ of the starting architecture,
w.r.t.~the usual ResNet18 architecture ($s = 1/4$ or $s = 1/64$, cf.~Table \ref{archi:ResNet18}),  and the amount of training time
between consecutive neuron additions
($\Delta t = 1$ or $\Delta t = 0.25$ epochs). 
Then the number of parameters and the performance of the growing network 
are evaluated at regular intervals to plot Figure~\ref{fig:Pareto_WT}.
\rev{We note that both methods reach their final accuracy within less than one GPU day, outperforming by far other NAS search methods in their category (cf.~\cref{tab:PerfomancesGradientBased}).} 

Once the models have reached the final architecture $\text{ResNet18}$, 
they are trained for 250 more epochs (or 500 epochs if they have not converged yet) on the training set. 
We have summarized the final performance
in Table \ref{tab:AccuracyInfiny}.
We also added the column \textit{Reference}, which gives the performance of
a $\text{ResNet18}$ trained from scratch by usual gradient descent with all its neurons. We do not expect TINY or GradMax to achieve the performance of the reference as its architecture and optimisation process have been optimised for years. \rev{The column \textit{Small references} corresponds to the training of the architecture $\text{ResNet18}_s$ for $s = 1/4$ and $s = 1/64$ using standard gradient descent without any increase in architecture.}

The details of the protocol can be found in the annexes \ref{sec:cifar100}, as well as other technical details such as the dynamic of the training batch size \ref{sec:batchsize_for_learning}, the normalization of the new neurons \ref{sec:Normalization}, \rev{the number of examples used to estimate and solve the expressivity bottleneck \ref{sec:BatchsizeForEstimation}} and  the complexity of the algorithms \ref{sec:complexity}. For both methods, all the latter apply so that the main differences between GradMax and TINY in this experiment is the mathematical definition of the new neurons. 

\def\TSqu{$70.4 \pm 0.2 $}
\def\TSqz{$67.2 \pm 0.1 $}
\def\TSsu{$68.1 \pm 0.5 $}
\def\TSsz{$65.8 \pm 0.1 $}

\def\TFqu{$\textcolor{blue}{ 70.9 \pm 0.2 }^{\;5*}$}
\def\TFqz{$\textcolor{blue}{ 70.3 \pm 0.2 }^{\;5*}$}
\def\TFsu{$\textcolor{blue}{ 68.7 \pm 0.6 }^{\;5*}$}
\def\TFsz{$\textcolor{blue}{ 69.5 \pm 0.2 }^{\;5*}$}

\def\GSqu{$68.6 \pm 0.1 $}
\def\GSqz{$65.1 \pm 0.2 $}
\def\GSsu{$56.8 \pm 0.2 $}
\def\GSsz{$45.0 \pm 0.4 $}

\def\GFqu{$\textcolor{blue}{ 69.0 \pm 0.2 }^{\;5*}$}
\def\GFqz{$\textcolor{blue}{ 67.0 \pm 0.2 }^{\;5*}$}
\def\GFsu{$\textcolor{blue}{ 58.4 \pm 0.2 }^{\;10*}$}
\def\GFsz{$\textcolor{blue}{ 57.0 \pm 0.4 }^{\;10*}$}

\def\Bb{$72.8$}
\def\Bbvirg{$0.3^{\;5*}$}

\def\Bbs{$68.6$}
\def\Bbsvirg{$0.4^{\;5*}$}

\def\Bbss{$63.4$}
\def\Bbssvirg{$0.3^{\;5*}$}

\renewcommand{\arraystretch}{1.1}
\begin{table}[h!]
    \centering
    \begin{tabular}{*{11}{c}}\toprule
         &&&\multicolumn{2}{c}{TINY}&\multicolumn{2}{c}{GradMax} & \multicolumn{2}{c}{\multirow{2}{*}{Small}} & \multicolumn{2}{c}{\multirow{2}{*}{Refe-}}\\ \cmidrule(lr){4-5} \cmidrule(lr){6-7}
    &&$\Delta t$&0.25 & 1 & 0.25 & 1 & \multicolumn{2}{c}{references} & \multicolumn{2}{c}{rence}\\ \midrule 
    \multirow{4}{*}{s}&\textcolor{black!50}{1/4} &&\textcolor{black!50}{\TSqz} & \textcolor{black!50}{\TSqu} & \textcolor{black!50}{\GSqz} & \textcolor{black!50}{\GSqu} & \multicolumn{2}{c}{\multirow{2}{*}{\Bbs $\pm$ \Bbsvirg}} & \multicolumn{2}{c}{\multirow{2}{*}{\Bb}}\\ 
    &\textcolor{black}{1/4}&&\TFqz &\TFqu& \GFqz & \GFqu & \multicolumn{2}{c}{\multirow{2}{*}{}}& \multicolumn{2}{c}{\multirow{2}{*}{$\pm$}}\\ \cmidrule(lr){2-9}
    &\textcolor{black!50}{1/64} && \textcolor{black!50}{\TSsz} & \textcolor{black!50}{\TSsu} & \textcolor{black!50}{\GSsz} & \textcolor{black!50}{\GSsu} &\multicolumn{2}{c}{\multirow{2}{*}{\Bbss $\pm$ \Bbssvirg}}& \multicolumn{2}{c}{\multirow{2}{*}{\Bbvirg}}\\ 
    &\textcolor{black}{1/64}  &&\TFsz &\TFsu& \GFsz & \GFsu & \multicolumn{2}{c}{}&\multicolumn{2}{c}{}\\ \bottomrule
    
    \end{tabular}
    \caption{Final accuracy on test set of ResNet18 after architecture growth (\textit{grey}) and after convergence (\textit{blue}). The number of stars indicates the multiple of 50 epochs needed to achieve convergence.  With the starting architecture $\text{ResNet}_{1/64}$ and $\Delta t= 0.25$, the method TINY achieves \TSsz {} on test set after its growth and it reaches \TFsz after $5* := 5 \times 50$ additional epochs (examples of training curves for the extra training can be found in Figure~\ref{fig:extra_training_TINY}). Means and standard deviations are performed on 4 runs for each setting.}
    \label{tab:AccuracyInfiny}
\end{table}
\begin{figure}
    \centering
    \includegraphics[scale = 0.6]{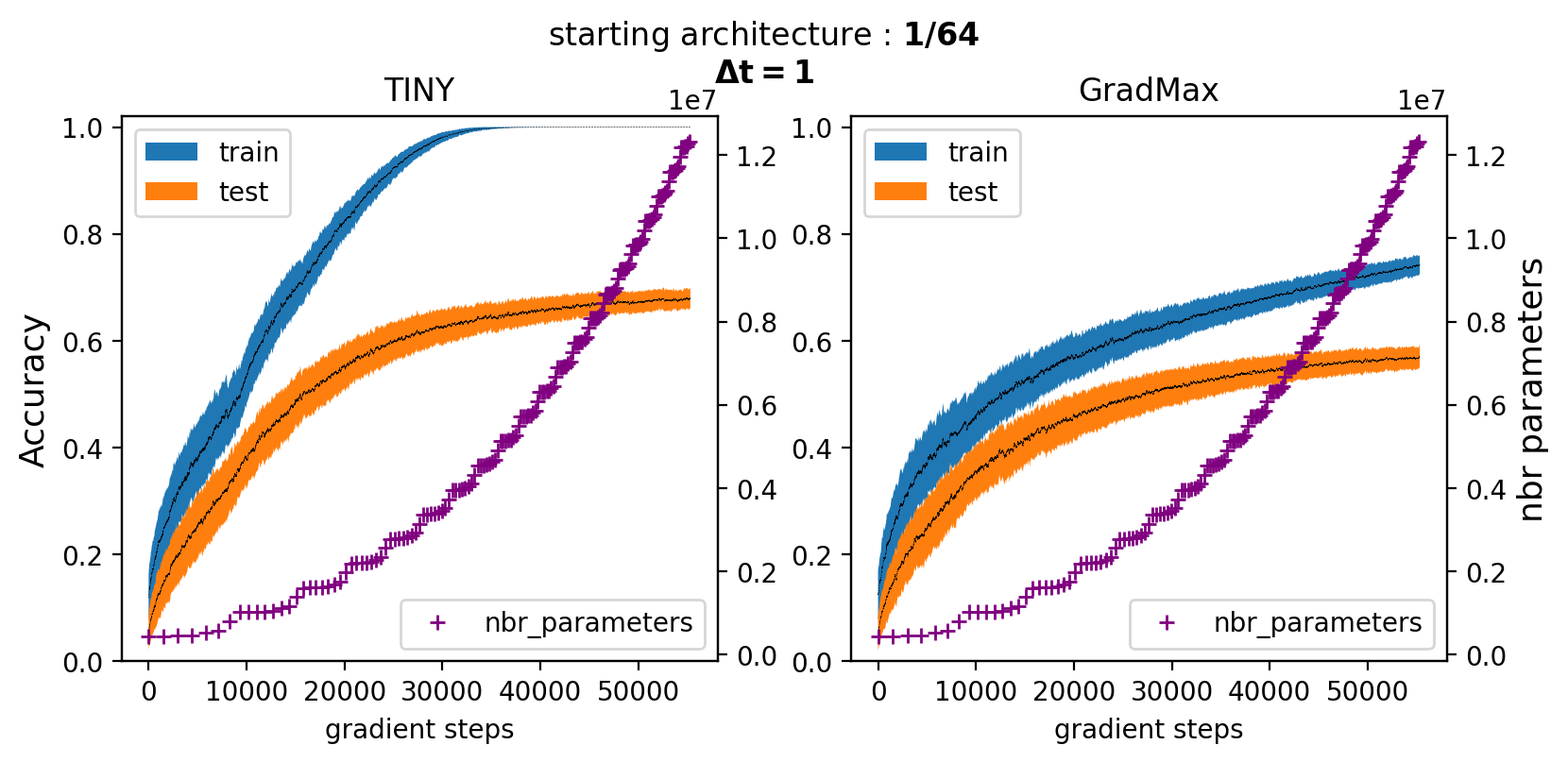}
    \caption{\rev{Evolution of accuracy and number of parameters as a function of gradient steps for the setting $\Delta t =1$, $s = 1/64$, for TINY and GradMax. Mean and standard deviations are estimated over four runs. Results with other settings can be found in Fig.~\ref{fig:AccCurveTINYGradMax} in the appendix.}}
    \label{fig:TINYGradMaxgrowth}
\end{figure}
\begin{figure}
    \centering
    \includegraphics[scale = 0.7]{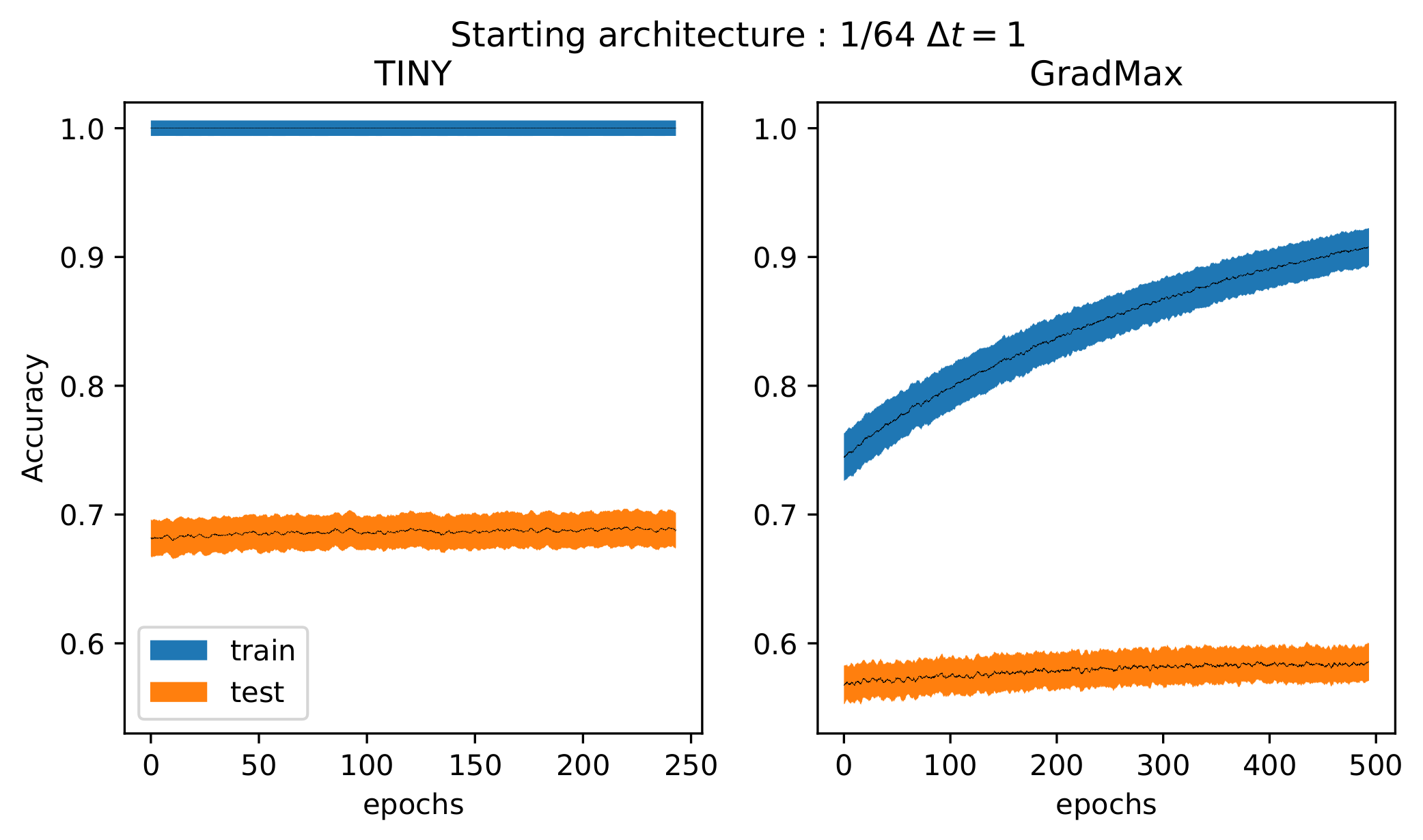}
    \caption{\rev{Evolution of accuracy as a function of epochs for the setting $\Delta t =1$, $s = 1/64$ during extra training 
    for TINY and GradMax. Mean and standard deviations are estimated over four runs. Results with other settings can be found in Figures \ref{fig:extra_training_TINY} and \ref{fig:extra_training_GradMax} in the appendix.}}
    \label{fig:ExtraTrainingTG}
\end{figure}
For $s = 1/64$, we observe a significant difference in performance between TINY and GradMax methods. While TINY models almost achieve the reference's performance, GradMax remains stuck 10 points below. This suggests that the framework proposed by GradMax is not sufficient to be able to start with an architecture far from full expressivity, i.e.~$\text{ResNet}_{1/64}$, 
while TINY is able to handle it.
As for the setting $s=1/4$,
both methods seem equivalent in terms of final performance and achieve full expressivity. 

The curves on \autoref{fig:TINYGradMaxgrowth}, which are extracted from \autoref{fig:AccCurveTINYGradMax} in the appendix,  show that TINY models have converged at the end of the growing process, while GradMax ones have not.
This latter effect
contrasts with
GradMax formulation which is to accelerate the gradient descent as fast as possible by adding neurons. 
Furthermore GradMax needs extra training to achieve full expressivity:
for the particular setting $s=1/64, \Delta t =1$, the extra training time required by GradMax is twice as high as TINY's, as shown in Figure~\ref{fig:ExtraTrainingTG}.
This need for extra training also appears for all settings in Table~\ref{tab:AccuracyInfiny}. In particular for $s = 1/64, \Delta t = 0.25$, the difference in performance after and before extra training goes up to 20 \% above the initial performance while it is only of 
6\% for TINY.




\subsection{Comparison with Random on CIFAR-100 : initialisation impact}
\label{sec:cifarinit}

In this section, we focus on the impact of the new neurons' initialization. To do so, we consider as a baseline the Random method, which initializes the new neurons according to a Gaussian distribution: $(\alpha_k^*, \omega_k^*)_{k=1}^K\sim \mathcal{N}(0, I_d)$ or a uniform distribution $\mathcal{U}[-1, 1]$. Also, when adding new neurons,
we now search for the best scaling using a line-search on the loss. Thus, we perform the operation $\theta_{\leftrightarrow}^K \leftarrow \gamma^* \theta_{\leftrightarrow}^K$, with the amplitude factor $\gamma^* \in \mathbb{R}$ defined as :
\begin{align}\label{eq:AmplFact}
\gamma^* := \argmin_{\gamma \in [-L, L]}\sum_i\ell(f_{\theta \oplus \gamma \theta_{\leftrightarrow}^K}(\vx_i), \vy_i) \hspace{1.5cm} \text{with}  \;\;\gamma \theta_{\leftrightarrow}^{K^*} = (\gamma\alpha^*_k, \gamma\omega^*_k)_{k=1}^K
\end{align}
with $L$ a positive constant. More details can be found in Appendix~\ref{sec:AMplFactNormalization} and in Algorithm~\ref{algo:Pareto_detail}. 
With such an amplitude factor, one can measure the quality of the directions generated by TINY and Random
by quantifying the maximal decrease of loss in these directions.

To better measure the impact of the initialisation method, and to distinguish it from the optimization process,
we do not perform any gradient descent.
This contrasts with the previous section where long training time after architecture growth was modifying the direction of the added neurons, dampening initialization impact with training time, especially as they were added with a small amplitude factor (cf Section \ref{sec:UsualNormalization}).




\begin{figure}
    \centering
    \includegraphics[scale = 0.5]{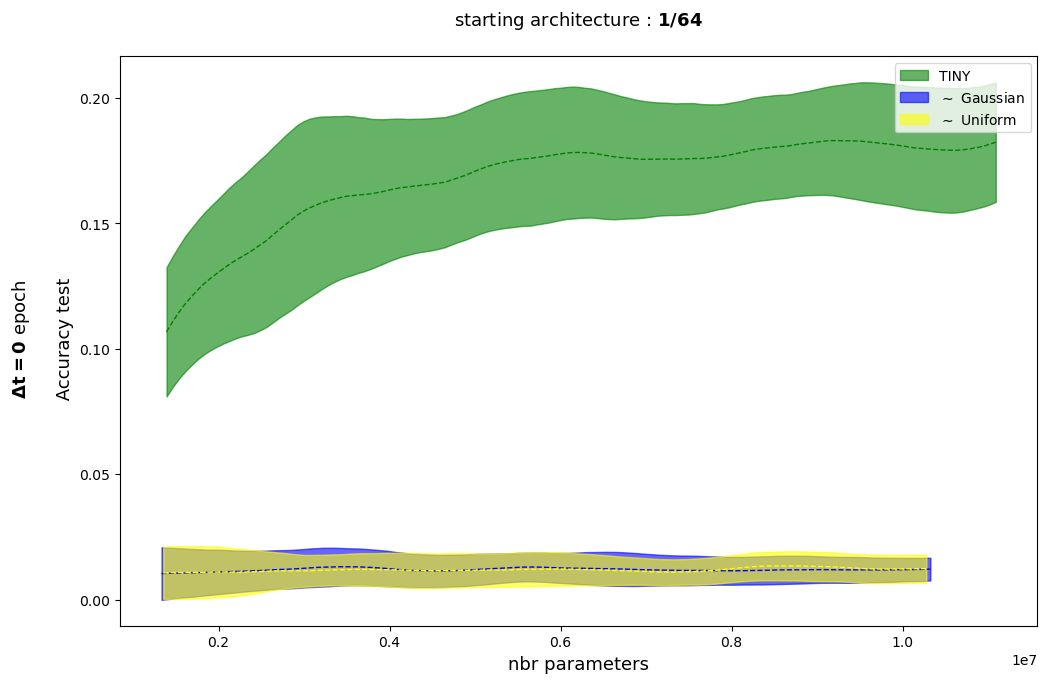}
    \caption{\rev{Test accuracy as a function of the number of parameters during 
    pure
    architecture growth 
    (no gradient steps performed, only neuron addition with a given initialization)
    from $\text{ResNet}_{1/64}$ to $\text{ResNet}_{18}$, averaged over four independent runs.}}
    \label{fig:Pareto_RT}
\end{figure}
With these two modifications to the protocol of previous section, we obtain Figure \ref{fig:Pareto_RT}.
We see the crucial impact of TINY  initialization compared to the Random one.
Indeed, TINY reaches more than 17\% accuracy just by adding neurons (without any further update), which accounts for about one quarter of the total accuracy with the full method (69\% in Table \ref{tab:AccuracyInfiny} using in plus gradient descent).
On the opposite, the Random initialization does not contribute to the accuracy through the growing process (just about 1\%); this can be explained and quantified as follows.  \rev{To study the random setting, we can model $\vDE(X)$ and  $\vDV(X)$ as independent variables where $\vDV \sim \mathcal{N}\left(0_d, \frac{1}{d}I_d\right)$  and $\vDE$ either $ \sim \mathcal{N}\left(0_d, \frac{1}{d}I_d\right)$  or $\sim \mathcal{U}[-\frac{1}{d}, \frac{1}{d}]$, $d$ being the dimension of $\vDV$ and $\vDE$}. From Equation (\ref{eq:Loss_Taylor_order_1}), the scalar product $\langle \mDE(X), \mDV(X)\rangle:= \frac{1}{n}\sum_i \vDV(\vx_i)^T\vDE(\vx_i)$ is a proxy for the expected decrease of loss after each architecture growth.
This quantity can be approximated by its standard deviation, ie $\frac{1}{\sqrt{n\,d}}$, which makes the expected relative gain of loss (for a gradient step) of the order of magnitude
of $\frac{1}{\sqrt{64}}$ for the first layer and $\frac{1}{\sqrt{512}}$ for the last layer when compared to the 
true gradient, and consequently when compared to TINY. Furthermore, one can take into account the effect of a line search over the random direction: in that case the expected relative loss gain is quadratic in the angle between the directions and therefore of the order of magnitude of $\frac{1}{64}$ or $\frac{1}{512}$ respectively (see Appendix \ref{app:randomgain}).

Note that
the search interval of Equation~(\ref{eq:AmplFact})
can be shrunk to $[0, L]$ with TINY initialization, as the first order development of the loss in Equation~(\ref{eq:Loss_Taylor_order_1}) is positive. This property is the direct consequence of the definition of $\mDE^*$ as the minimizer of the expressivity bottleneck (Eq.~(\ref{eq:probform})).
One can also note that 
we do not include GradMax in Figure~\ref{fig:Pareto_RT}, 
because its protocol initializes the on-going weights to zero ($\alpha_k \leftarrow 0$)
and imposes a small norm on its out-going weights ($||\omega_k|| = \epsilon$). Those two aspects make the amplitude factor $\gamma^*$ meaningless and the impact of the new neurons initialization invisible without gradient descent.

The code is available at \url{https://gitlab.inria.fr/mverbock/tinypub}.
\section{Conclusion}
We provided the theoretical principles of TINY, 
a method to grow the architecture of a neural net while training it; 
starting from a very thin architecture, TINY adds  neurons  where needed and yields a fully trained architecture at the end.
Our method relies on the functional gradient to find new directions that tackle the expressivity bottleneck, even for small networks, by expanding their space of parameters.
This way, we combine in the same framework gradient descent and neural architecture search, that is, optimizing the network parameters and its architecture at the same time, and this, in a way that guarantees convergence to 0 training error, thus escaping expressivity bottlenecks indeed.

\rev{While transfer learning works well on ordinary tasks, for it to succeed, it needs to fine-tune and use large architectures at deployment in order to extract and manipulate common knowledge. Our method has the advantage of being generic and could also produce smaller models as it adapts the architecture to a single task. }

\rev{It is already instantiated for linear and convolutional layers ;  extension to self-attention mechanisms (transformers) is part of future works.} 
Although common architectures consist of a succession of layers, a  research direction is to develop tools handling  
general computational graphs (such as U-net, Inception, Dense-Net), which offers the possibility to let the architecture graph grow and bypass manual design. 


Another possible development would be to study the statistical reliability of the TINY method, for instance using tools borrowed from random matrix theory.
Indeed statistical tests can be applied to the intermediate computations from which the new neurons are obtained. An interesting byproduct of this approach would be to define a threshold to select neurons found by Prop.~\ref{prop2}, based on statistical significance.

\section{Acknowledgments}

The authors address their deepest thanks 
to Stella Douka, Andrei Pantea, St\'{e}phane Rivaud \& François Landes for the 
exchanges and discussions on this project.

This work was supported by grants ANR-22-CE33-0015-01 and ANR-17-CONV-0003 operated by LISN to Sylvain Chevallier and by ANR-20-CE23-0025 operated by Inria to Guillaume Charpiat. 
This work was also funded by the European Union under GA no. 101135782 (MANOLO project).


Numerical computation was enabled by the 
scientific Python ecosystem: 
Matplotlib~\cite{matplotlib}, 
Numpy~\cite{numpy}, Scipy~\cite{scipy}, pandas~\cite{pandas}, PyTorch~\cite{pytorch}.








\bibliography{bibli}
\bibliographystyle{tmlr}

\appendix


\appendix

\bigskip
\bigskip


\textbf{Appendix outline}
\begin{itemize}
    \item \Cref{sec:BigA} details  the theoretical approach of TINY. 
    \item \Cref{sec:theorical_comparaison} reformulates related works with a common theoretical framework and compares those approaches.
    \item \Cref{sec:proofs} proves the propositions of Section \ref{part:expressivity}.
    \item \Cref{sec:greedyproof} details the proofs regarding the convergence (related to Section \ref{greedy}).
    \item \Cref{sec:tech_details} provides the hyper parameters for learning.
    \item \Cref{additionalresult} gives additional graphics associated to the result part.
\end{itemize}

For Appendices \ref{sec:theorical_comparaison} and \ref{sec:proofs}, we use  the trace scalar product and its associated norm. We note $\langle\;.\;, \;.\;\rangle := \langle\;.\;, \;.\;\rangle_{\Tr}$ and $||\;.\;|| := ||\;.\;||_{\Tr}$. One should remark that $||\;.\;|| = ||\;.\;||_{\Tr} = ||\;.\;||_2$.

\section{Theoretical details for part 2} \label{sec:BigA}

\subsection{Functional gradient}
\label{annexe:funcgrad}

The functional loss $\Loss$ is a functional that takes as input a function $f \in \mathcal{F}$ and outputs a real score:
 $$\mathcal{L} : \;f \in \mathcal{F} \;\mapsto\; \Loss(f) = \E_{(\vx, \vy) \sim \mathcal{D}}\left[ \ell(f(\vx), \vy) \right] \;\in \R\; .$$
The function space $\mathcal{F}$ can typically be chosen to be $L_2(\R^p \rightarrow \R^d)$, which is a Hilbert space. The directional derivative (or Gateaux derivative, or Fréchet derivative) of functional $\Loss$ at function $f$ in direction $v$ is defined as:
$$D\Loss(f)(v) = \lim_{\eps \rightarrow 0} \frac{\Loss(f+\eps v) - \Loss(f)}{\eps}$$
if it exists. Here $v$ denotes any function in the Hilbert space $\mathcal{F}$ and stands for the direction in which we would like to update $f$, following an infinitesimal step (of size $\eps$), resulting in a function $f + \eps v$.

If this directional derivative exists in all possible directions $v \in \mathcal{F}$ and moreover is continuous in $v$, then the Riesz representation theorem implies that there exists a unique direction $v^* \in \mathcal{F}$ such that:
$$\forall v \in \mathcal{F}, \;\;D\Loss(f)(v) = \langle v^* , v \rangle\;.$$
This direction $v^*$ is named the \emph{gradient} of the functional $\Loss$ at function $f$ and is denoted by $\nablaf\Loss(f)$.

Note that while the inner product $\langle \cdot , \cdot \rangle$ considered is usually the $L_2$ one, it is possible to consider other ones, such as Sobolev ones (e.g., $H^1$). The gradient  $\nablaf\Loss(F)$ depends on the chosen inner product and should consequently rather be denoted by  $\nablaf^{L_2}\Loss(f)$ for instance. 

Note that continuous functions from $\R^p$ to $\R^d$, as well as $C^\infty$ functions, are dense in $L_2(\R^p \rightarrow \R^d)$.

Let us now study properties specific to our loss design:  
 $\Loss(f) = \E_{(\vx, \vy) \sim \mathcal{D}}\left[ \ell(f(\vx), \vy) \right]$.
Assuming sufficient $\ell$-loss differentiability and integrability, we get, for any function update direction $v \in \mathcal{F}$ and infinitesimal step size $\eps \in \R$:
$$\Loss(f+\eps v) - \Loss(f) = \E_{(\vx, \vy) \sim \mathcal{D}}\left[ \ell(f(\vx) + \eps v(\vx), \vy) - \ell(f(\vx), \vy)  \right] $$
 $$ = \E_{(\vx, \vy) \sim \mathcal{D}}\left[ \nabla_\vu\ell(\vu,\vy)\big|_{\vu = f(\vx)} \cdot \eps v(\vx) + O\left(\eps^2 \|v(\vx)\|^2\right)  \right] $$
using the usual gradient of function $\ell$ at point $(\vu = f(\vx), \vy)$ w.r.t.~its first argument $\vu$, with the standard Euclidean dot product $\cdot$ in $\R^p$. Then the directional derivative is:
$$D\Loss(f)(v) = \E_{(\vx, \vy) \sim \mathcal{D}}\left[ \nabla_\vu\ell(\vu,\vy)\big|_{\vu = f(\vx)} \cdot v(\vx) \right] = \E_{\vx \sim \mathcal{D}}\left[ \E_{\vy \sim \mathcal{D}|\vx} \left[ \nabla_\vu\ell(\vu,\vy)\big|_{\vu = f(\vx)} \right] \cdot v(\vx) \right] $$
and thus the functional gradient for the inner product $\langle v, v'\rangle_{\E} := \E_{\vx \sim \mathcal{D}}\left[ v(\vx) \cdot v'(\vx) \right]$ is the function:
$$\nablaf^{\E} \Loss(f) : \vx \mapsto \E_{\vy \sim \mathcal{D}|\vx} \left[  \nabla_\vu\ell(\vu,\vy)\big|_{\vu = f(\vx)} \right]$$
which simplifies into:
$$\nablaf^{\E} \Loss(f) : \vx \mapsto \nabla_\vu\ell(\vu,\vy(\vx))\big|_{\vu = f(\vx)} $$
if there is no ambiguity in the dataset, i.e.~if for each $\vx$ there is a unique $\vy(\vx)$. 

Note that by considering the  $L_2(\R^p \rightarrow \R^d)$ inner product $\int v\cdot v'$ instead, one would respectively get:
$${\nablaf^{L_2} \Loss(f) : \vx \mapsto p_\mathcal{D}(\vx) \E_{\vy \sim \mathcal{D}|\vx} \left[  \nabla_\vu\ell(\vu,\vy)\big|_{\vu = f(\vx)} \right]}$$
and
$$\nablaf^{L_2} \Loss(f) : \vx \mapsto p_\mathcal{D}(\vx)\, \nabla_\vu\ell(\vu,\vy(\vx))\big|_{\vu = f(\vx)} $$
instead, where $p_\mathcal{D}(\vx)$ is the density of the dataset distribution at point $\vx$.
In practice one estimates such gradients using a minibatch of samples $(\vx, \vy)$, obtained by picking uniformly at random within a finite dataset, and thus the formulas for the two inner products coincide (up to a constant factor).

\subsection{Differentiation under the integral sign}\label{annexeth:mesure_theory}
Let $X$ be an open subset of $\mathbb{R}$, and $\Omega$  be a measure space. Suppose $f : X\times \Omega \xrightarrow[]{} \mathbb {R}$ satisfies the following conditions:
\begin{itemize}
    \item $f(x,\omega )$ is a Lebesgue-integrable function of $\omega$  for each $x\in X$.
    \item For almost all $\omega \in \Omega$  , the partial derivative $\frac{\partial}{\partial x}f$ of $f$ according to $x$ exists for all $x\in X$.
    \item There is an integrable function $\theta : \Omega \xrightarrow[]{} \mathbb{R}$ such that $ \left|\frac{\partial}{\partial x}(x,\omega )\right|\leq \theta (\omega )$ for all $x\in X$ and almost every $\omega \in \Omega$.
\end{itemize}
Then, for all $x\in X$,
\begin{align}
\frac {\partial}{\partial x}\int _{\Omega }f(x,\omega )\,d\omega 
=\int _{\Omega }
\frac{\partial}{\partial x} 
f(x,\omega )
\,d\omega 
\end{align}
See proof and details:~\cite{legallIntegrationProbabilitesProcessus2006}.

\subsection{Gradients and proximal point of view}
\label{appendgrad}

Gradients with respect to standard variables such as vectors are defined in the same way as functional gradients above: given a sufficiently smooth loss $\tLoss: \theta \in \Theta_{\!\A} \mapsto \tLoss(\theta) = \Loss(f_\theta) \in \R$, and an inner product $\cdot$ in the space $\Theta_{\!\A}$ of parameters $\theta$, the gradient $\nabla_\theta\tLoss(\theta)$ is the unique vector $\vtau \in \Theta_{\!\A}$ such that:
$$\forall \delta\theta \in \Theta_{\!\A}, \;\;\;\; \vtau \cdot \delta\theta = D_\theta\tLoss(\theta)(\delta\theta)$$
where $D_\theta\tLoss(\theta)(\delta\theta)$ is the directional derivative of $\tLoss$ at point $\theta$ in the direction $\delta\theta$, defined as in the previous section. This gradient depends on the inner product chosen, which can be highlighted by the following property. The opposite $-\nabla_\theta\tLoss(\theta)$ of the gradient  is the unique solution of the problem:
$$\argmin_{\delta\theta \in \Theta_{\!\A}} \left\{ D_\theta\tLoss(\theta)(\delta\theta) + \frac{1}{2} \left\| \delta\theta\right\|^2_P \right\}$$
where $\|\;\;\|_P$ is the norm associated to the chosen inner product.
Changing the inner product changes the way candidate directions $\delta\theta$ are penalized, leading to different gradients. This proximal formulation can be obtained as follows. For any $\delta\theta$, its distance to the gradient descent direction is:
\newcommand{\gt}{\nabla_\theta\tLoss(\theta)}
$$\left\|\delta\theta - \left(- \gt\right)\right\|^2 \;=\; \left\|\delta\theta \right\|^2  +2 \,\delta\theta \cdot \gt+ \left\|\gt\right\|^2 $$
$$ = 2 \left( \frac{1}{2} \left\|\delta\theta \right\|^2 + D_\theta\tLoss(\theta)(\delta\theta) \right) + K$$
where $K$ does not depend on $\delta\theta$.
For the above to hold, the inner product used has to be the one from which the norm is derived. By minimizing this expression with respect to $\delta\theta$, one obtains the desired property.

In our case of study, for the norm over the space $\Theta_{\!\A}$ of parameter variations, we consider a norm 
in the space of associated functional variations, i.e.:
$$ \left\| \delta\theta\right\|_P \;:=\; \left\| \frac{\partial f_\theta}{\partial \theta}\, \delta\theta\right\| $$
which makes more sense from a physical point of view, as it is more intrinsic to the task to solve and depends as little as possible on the parameterization (i.e.~on the architecture chosen).
This results in a functional move that is the projection of the functional one to the set of possible moves given the architecture. On the opposite, the standard gradient (using Euclidean parameter norm $\left\| \delta\theta\right\|$ in parameter space) yields a functional move obtained not only by projecting the functional gradient but also by multiplying it by a matrix $\frac{\partial f_\theta}{\partial \theta} \frac{\partial f_\theta}{\partial \theta}^T$ which can be seen as a strong architecture bias over optimization directions. 

We consider here that the loss $\Loss$ to be minimized is the real loss that the user wants to optimize, possibly including regularizers to avoid overfitting, and since the architecture is evolving during training, possibly to architectures far from usual manual design and never tested before, one cannot assume architecture bias to be desirable. We aim at getting rid of it in order to follow the functional gradient descent as closely as possible.

Searching for 
\begin{equation}
    \vvv^* \,=\, \argmin_{\vvv \in \TA} \|\vvv-\tvDV\|^2 \,=\, \argmin_{\vvv \in \TA} \left\{ D\Loss(f)(\vvv) + \frac{1}{2} \|\vvv\|^2 \right\}
\end{equation}
or equivalently for:
\begin{equation}
    \delta\theta^*   \;=\; \argmin_{\delta\theta \in \Theta_{\!\A}} \left\|\frac{\partial f_\theta}{\partial \theta}\, \delta\theta-\tvDV\right\|^2  \;=\; \argmin_{\delta\theta \in \Theta_{\!\A}} \left\{ D_\theta\Loss(f_\theta)(\delta\theta) + \frac{1}{2} \left\| \frac{\partial f_\theta}{\partial \theta}\, \delta\theta\right\|^2 \right\} =: -\nabla_\theta^{\TA} \Loss(f_\theta)\end{equation}
then appears as a natural goal.

\subsection{Example of expressivity bottleneck}
\label{append:ex}

\begin{minipage}{.66\columnwidth}
    \paragraph{Example.} Suppose one tries to estimate the function $y = \textcolor{red}{f_\text{true}}(x) = 2\sin(x) + x $ with a linear model $\textcolor{blue}{f_\text{predict}}(x) = \displaystyle a x + \displaystyle b$. Consider $(\displaystyle a, \displaystyle b) = (1, 0)$ and the square loss~$\Loss$ . For the dataset of inputs $(x_0, x_1, x_2, x_3) = (0, \frac{\pi}{2}, \pi, \frac{3\pi}{2})$, there exists no parameter update ($\delta a, \delta b$) that would improve prediction at $x_0, x_1, x_2$ and $x_3$ simultaneously, as the space of linear functions $\{ f : x \rightarrow ax  + b\hspace{0.25 cm} |\hspace{0.25 cm} \displaystyle a, \displaystyle b \in \mathbb{R} \}$ is not expressive enough.
 To improve the prediction at $x_0, x_1, x_2$ \textbf{and} $x_3$, one should look for another, more expressive functional space such that for $i = 0, 1, 2, 3$ the functional update $\Delta 
 f(x_i) := 
 f^{t+1}(x_i) - 
 f^{t}(x_i)$ goes into the same direction as the functional gradient $\textcolor{teal}{\displaystyle \vDV(x_i)} := - \nabla_{ 
 f(x_i)} \Loss(
 f(x_i), y_i) = - 2(
 f(x_i)- y_i)$ where $y_i = f_\text{true}(x_i)$.
   \end{minipage}
    \hfill
    \begin{minipage}{.3\columnwidth}
    \centering
    \includegraphics[scale = 0.43]{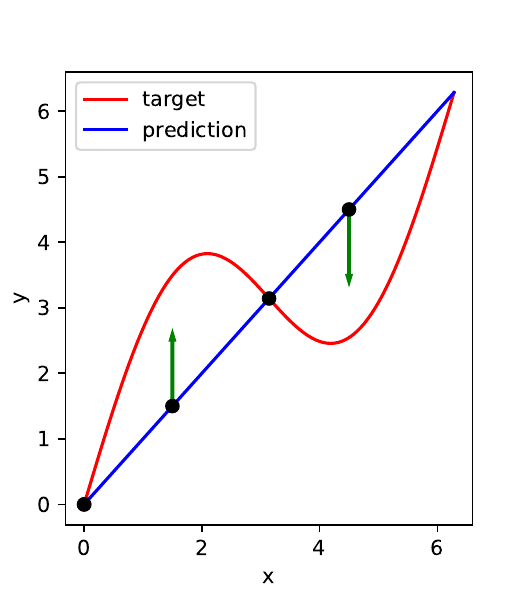}
    
    \captionof{figure}{Linear interpolation}
    \label{fig_intro_explain2}
    \smallskip
    \end{minipage}

\subsection{Problem formulation and choice of pre-activities}

\label{RQ:DE} 
There are several ways to design the problem of adding neurons, which we discuss now, in order to explain our choice of the pre-activities to express expressivity bottlenecks. 

Suppose one wishes to add $K$ neurons $\theta_{\leftrightarrow }^K := (\valpha_k, \vomega_k)_{k=1}^K$ to layer $l-1$, which impacts the activities $\va_l$ at the next layer, in order to improve its expressivity. These neurons could be chosen to have only null weights, or null input weights $\valpha_k$ and non-null output weights $\vomega_k$, or the opposite, or both non-null weights. Searching for the best neurons to add for each of these cases will produce different optimization problems.

Let us remind first that adding such $K$ neurons with weights  $\theta_{\leftrightarrow }^K := (\valpha_k, \vomega_k)_{k=1}^K$ changes the activities $\va_l$ of the (next) layer by
\begin{equation} \label{eq:deltaact}
\delta\va_l \;\;=\;\; \sum_{k=1}^K \vomega_k \;\sigma(\valpha_k^T\vb_{l-2}(\vx) )
\end{equation}

\paragraph{Small weight approximation}
\label{par:small_weighs_approx}
Under the hypothesis of small input weights $\alpha_k$,
and assuming smooth activation function $\sigma$ at $0$, 
the activity variation \ref{eq:deltaact} can be approximated by:
\begin{equation}
    \sigma'(0) \; \sum_{k=1}^K \vomega_k\valpha_k^T \vb_{l-2}(\vx)
\end{equation}
at first order in $\|\valpha_k\|$. We will drop the constant $\sigma'(0)$ in the sequel. 

This quantity is linear both in $\alpha_k$ and $\vomega_k$, therefore the first-order parameter-induced activity variations are easy to compute:
\begin{align*}
    \vDE^{l}\left(\vx, (\valpha_k)_{k=1}^K\right) = \frac{\partial \va_{l}(\vx)}{\partial(\;(\valpha_k)_{k=1}^K\;)}_{|(\valpha_k)_{k=1}^K = 0}(\valpha_k)_{k=1}^K = \sum_{k=1}^K \vomega_k \vb_{l-2}(\vx)^T\valpha_k \\
    \vDE^{l}\left(\vx, (\vomega_k)_{k=1}^K\right) = \frac{\partial \va_{l}(\vx)}{\partial(\;(\vomega_k)_{k=1}^K\;)}_{|(\vomega_k)_{k=1}^K = 0}(\vomega_k)_{k=1}^K = \sum_{k=1}^K \vomega_k \vb_{l-2}(\vx)^T\valpha_k
\end{align*}
so with a slight abuse of notation we have:
\begin{align*}
    \vDE^{l}\left(\vx, \theta_{ \leftrightarrow }^K\right) = \sum_{k=1}^K \vomega_k \valpha_k^T\vb_{l-2}(\vx)
\end{align*}
Note also that technically the quantity above is first-order in $\valpha_k$ and in $\vomega_k$ but second-order in the joint variable $\theta_{\leftrightarrow }^K = (\valpha_k, \vomega_k)$.

\paragraph{Adding neurons with 0 weights (both input and output weights).} In that case, one increases the number of neurons in the layer, but without changing the function (since only null quantities are added) and also without changing the gradient with respect to the parameters, thus not improving expressivity. 
Indeed, the added quantity (Eq.~\ref{eq:deltaact}) involves $0 \times 0$ multiplications, and consequently
the derivative $\left.\frac{\partial \va_{l}(\vx)}{\partial \theta_{ \leftrightarrow }^K}\right|_{\theta_{ \leftrightarrow }^K = 0}$ w.r.t. these new parameters, that is, $\vb_{l-2}(\vx)^T\valpha_k$ w.r.t. $\vomega_k$ and $\vomega_k \; \vb_{l-2}(\vx)^T$ w.r.t. $\va_k$ is 0, as both $\va_k$ and  $\vomega_k$ are 0.

\paragraph{Adding neurons with non-0 input weights and 0 output weights or the opposite.} In these cases, the addition of neurons will not change the function (because of multiplications by 0), but just the gradient. One of the 2 gradients (w.r.t.  $\va_k$ or w.r.t  $\vomega_k$) will be non-0, as the variable that is 0 has non-0 derivatives.

The question is then how to pick the best non-null variable, ($\va_k$ or  $\vomega_k$) such that the added gradient will be the most useful. The problem can then be formulated similarly as what is done in the paper.

\paragraph{Adding neurons with small yet non-0 weights.} 
In this case, both the function and its gradient will change when adding the neurons. Fortunately, Proposition~\ref{prop2} states that the best neurons to add in terms of expressivity (to get the gradient closer to the variation desired by the backpropagation) are also the best neurons to add to decrease the loss, i.e.~the function change they will imply goes into the right direction.

For each family $(\vomega_k)_{k=1}^K$, the tangent space in $\va_{l}$ restricted to the family $(\valpha_k)_{k=1}^K$, \textit{i.e.}~$\mathcal{T}_{\mathcal{A}}^{\va_{l}} := \set{\frac{\partial \va_{l}}{\partial(\valpha_k)_{k=1}^K}_{|(\valpha_k)_{k=1}^K = 0}(.) (\valpha_k)_{k=1}^K | (\valpha_k)_{k=1}^K \in \left(\mathbb{R}^{|\vb_{l-2}(\vx)|}\right)^K}$ varies
with the family $(\vomega_k)_{k=1}^K$, \textit{i.e.}~$\mathcal{T}_{\mathcal{A}}^{a_{l}} := \mathcal{T}_{\mathcal{A}}^{a_{l}}((\vomega_k)_{k=1}^K)$. Optimizing w.r.t. the $\vomega_k$ is equivalent to search for the best tangent space for the $\valpha_k$, while symmetrically optimizing w.r.t. the $\valpha_k$ is equivalent to find the best projection on the tangent space defined by the $\vomega_k$ (\Cref{fig:ChangingTS}).
\begin{figure}[h]
    \centering
    \includegraphics[scale = 0.4]{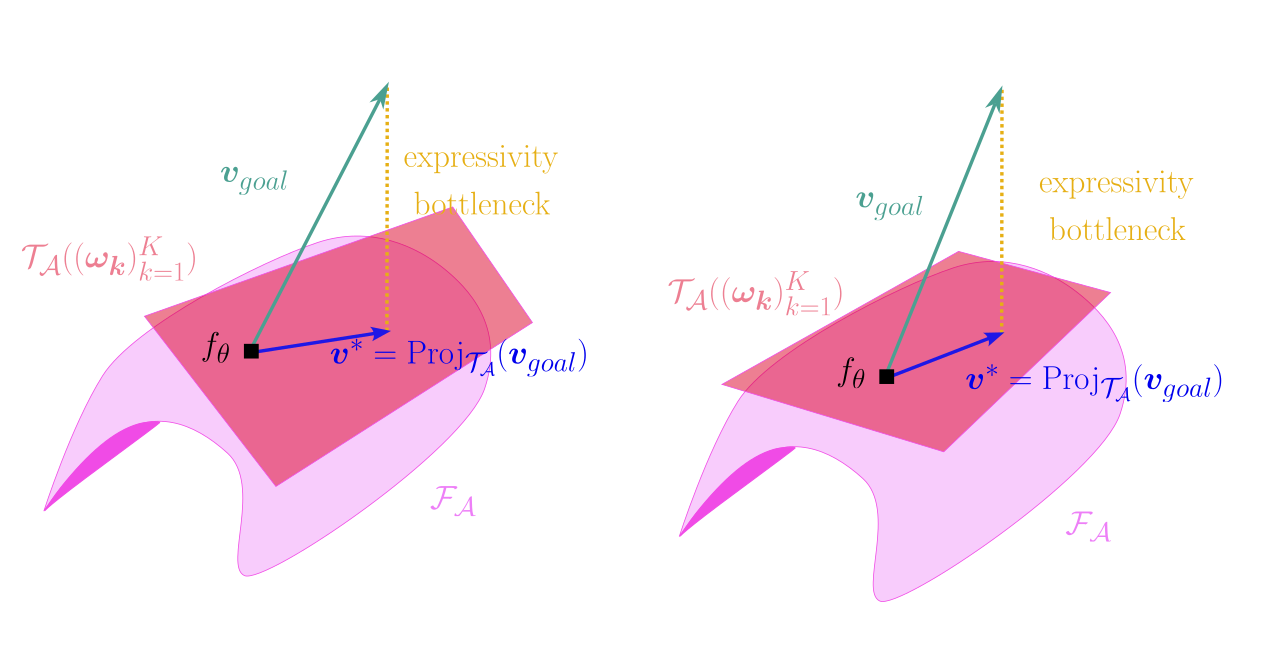}
    \caption{Changing the tangent space with different values of $(\vomega_k)_{k=1}^K$.
    }
    \label{fig:ChangingTS}
\end{figure}

\paragraph{Pre-activities vs. post-activities.}
The space of pre-activities $\va_l$ is a natural space for this framework, as they are formed with linear operations and we compute first-order variation quantities. Considering the space of post-activities $\vb_l = \sigma(\va_l)$ is also possible, though computing variations will be more complex. Indeed, without first-order approximation, the obtained problem is not manageable, because of the non-linear activation function $\sigma$ added in front of all quantities (while in the cases of pre-activations, quantity \ref{eq:deltaact} is linear in $\vomega_k$ and thus does not require approximation in $\vomega_k$, which allow considering large $\vomega_k$), and, with first-order approximation, it would add the derivative of the activation function, taken at various locations $\sigma'(\va_l)$ (while in the previous case the derivatives of the activation function were always taken at 0).



\subsection{Adding convolutional neurons}
To add a convolutional neuron at layer $l-1$, one should add a kernel at layer $l-1$ and expand one dimension to all the kernels in layer $l$ to match the new dimension of the post-activity, see \Cref{fig:add_convolution}.
\begin{figure}[h!]
    \centering
    \includegraphics[scale = 0.45]{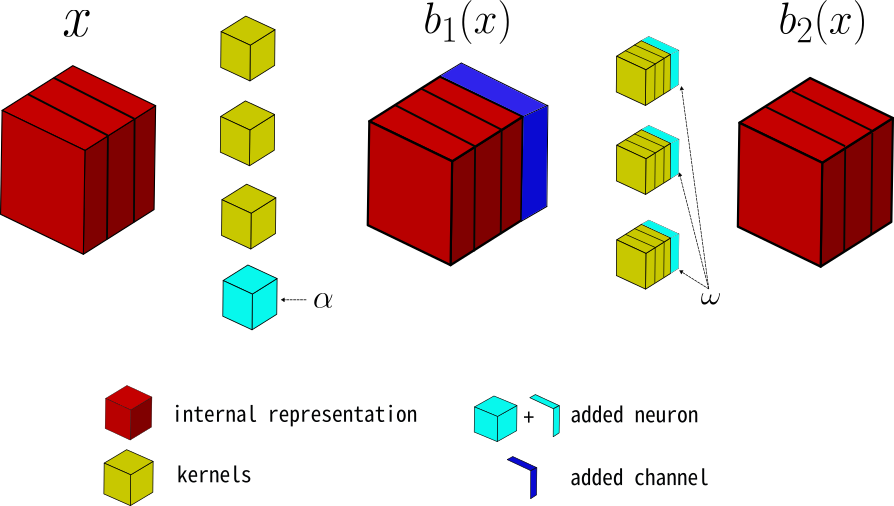}
    \caption{Adding one convolutional neuron at layer one for an input with tree channels.}
    \label{fig:add_convolution}
\end{figure}

\section{Theoretical comparison with other approaches}\label{sec:theorical_comparaison}

\subsection{GradMax method} \label{GradMaxcompare}
To facilitate reading we remove the layer index of each quantity, \textit{i.e.}~$\vb := \vb_{l-2},\; \mB := \mB_{l-2}, \;\mDV := \mDV^l$ and $\mDV_{proj} := \mDV_{proj}^l$. 

The theoretical approach of GradMax is to add neurons with zero fan-in weights and choose the fan-out weights that would decrease the loss as much as possible after one gradient step. We note $\mOmega$ the fan-out weights of such neurons and perform the addition at layer $l-1$ at time $t$. After one gradient step, \textit{i.e.}~$t \rightarrow t+1$, and considering a learning rate equal to $1$, the decrease of loss is :
\begin{align}
\mathcal{L}^{t+1} &\approx \mathcal{L}^t + \scalarp{\nabla_{\theta}\mathcal{L}, \delta \theta} + \scalarp{\nabla_{\Omega}\mathcal{L}, \delta \Omega} 
\end{align}
Taking the direction of the usual gradient descent, ie $\delta \theta = - \nabla_{\theta}\mathcal{L}$ and $\delta \Omega = - \nabla_{\Omega}\mathcal{L}$, we have :
\begin{align}
\mathcal{L}^{t+1} &\approx \mathcal{L}^t - ||\nabla_{\theta}\mathcal{L}||^2 - ||\nabla_{\Omega}\mathcal{L}||^2 
\end{align}
The output weights of the new neurons as formulated in the original paper \cite{evci2022gradmax} at eq (11) are the solution of :

\begin{align}
(\vomega_1^*, ..., \vomega_K^*) := \mOmega^* &= \argmax_{\mOmega} ||\nabla_{\mOmega}\mathcal{L}|| ^2 & s.t.\;\; ||\mOmega|| ^2 \leq c
\end{align}
we remark that :
\begin{align*}
     \norm{\nabla_{\mOmega}\mathcal{L}}^2 &= \norm{\sum_{i}\vb(\vx_i)\;\vDV^T(\vx_i)\;\mOmega}^2  \\
     &= \norm{\mB\mDV^T\mOmega}^2 \\
     &=  \norm{\tilde{\mN}\mOmega}^2 &\tilde{\mN} := \mB\mDV^T
\end{align*}

It follows that the fan-out of the neurons computed by GradMax are the solution of the problem :
\begin{equation}
\mOmega^* := \argmax_{\mOmega} \norm{\tilde{\mN}\mOmega} \quad s.t. \;\;\norm{\mOmega}^2 \leq c
\label{eq:GradMax} 
\end{equation}
To compare this optimization problem with TINY, we use the following proposition:
\begin{prop}\label{prop:equi_norm_prdscal}
$\forall \mD \in \mathbb{R}^{p, q}, \mB \in \mathbb{R}^{k, q},$
\begin{align*}
    \exists \, c \in \mathbb{R} \;\;\; \text{s.t.} \;\;\; \argmin_{\mH \in \R^{p,k}} \norm{\mD-\mH\mB}^2 = \argmax_{\mH, \;\; \norm{\mH\mB}^2 \leq c}\langle \mD, \mH\mB\rangle
\end{align*}
The proof can be found in \ref{proof:equivalence_min_norm_prdsc}.
\end{prop}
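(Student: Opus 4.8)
The plan is to push both optimization problems onto the linear subspace $S := \{\mH\mB : \mH \in \R^{p,k}\} \subseteq \R^{p,q}$ (it is a subspace because $\mH \mapsto \mH\mB$ is linear), to note that each of the two solution sets has the form $\{\mH : \mH\mB = M\}$ for a single optimal element $M \in S$, and then to check that, for a suitable $c$, this optimal $M$ is the same on both sides, namely the orthogonal projection $M^* := \proj_S(\mD)$ of $\mD$ onto $S$ for the Frobenius inner product. Since the fibers $\{\mH : \mH\mB = M\}$, $M \in S$, partition $\R^{p,k}$, it suffices to match the optimal value of the product $\mH\mB$ rather than of $\mH$ itself.

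First I would expand $\norm{\mD - \mH\mB}^2 = \norm{\mD}^2 - 2\,\scalarp{\mD,\,\mH\mB} + \norm{\mH\mB}^2$ and invoke the standard least-squares fact: over $M \in S$, the function $M \mapsto \norm{\mD - M}^2$ has the unique minimizer $M^* = \proj_S(\mD)$, characterized by $\mD - M^* \perp S$. Hence $\argmin_{\mH} \norm{\mD - \mH\mB}^2 = \{\mH : \mH\mB = M^*\}$. I then set $c := \norm{M^*}^2$.

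Next, for the constrained problem I would use the orthogonality $\mD - M^* \perp S$ to rewrite $\scalarp{\mD,\,M} = \scalarp{M^*,\,M}$ for every $M \in S$, and then apply Cauchy--Schwarz together with the constraint: $\scalarp{\mD,\,M} = \scalarp{M^*,\,M} \le \norm{M^*}\,\norm{M} \le \norm{M^*}\sqrt{c} = \norm{M^*}^2 = c$, with equality at $M = M^*$. When $M^* \neq 0$, the equality case of Cauchy--Schwarz forces $M$ to be a nonnegative multiple of $M^*$, and the norm constraint then pins it down to $M = M^*$; hence the argmax over $M$ is $\{M^*\}$, so $\argmax_{\mH,\, \norm{\mH\mB}^2 \le c} \scalarp{\mD,\,\mH\mB} = \{\mH : \mH\mB = M^*\}$, which is exactly the argmin set from the first step. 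The degenerate case $M^* = 0$ (i.e.~$\mD \perp S$, in particular when $\mB = 0$) I would handle separately: there $c = 0$, the feasible set is $\{\mH : \mH\mB = 0\}$, the objective $\scalarp{\mD,\,\mH\mB}$ vanishes identically on it, and $\norm{\mD - \mH\mB}^2 = \norm{\mD}^2 + \norm{\mH\mB}^2$ is minimized exactly on that same set, so again both solution sets coincide.

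I do not expect a genuine obstacle here; the only point requiring care is that when $\mB$ does not have full row rank, neither side is a singleton in $\mH$, so the statement must be read as an equality of solution \emph{sets} and the whole argument should be carried out in terms of the induced product $\mH\mB$ (equivalently, on $S$), never on $\mH$ directly. Existence and uniqueness come for free since $S$ is finite-dimensional and the feasible set is a closed (indeed compact) ball inside $S$.
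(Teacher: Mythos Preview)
Your proof is correct and takes a somewhat different route from the paper's. The paper expands $\norm{\mD - \mH\mB}^2$ and then passes to polar coordinates in $S$, writing $\mH\mB = h\,\mU$ with $h = \norm{\mH\mB}$ and $\mU$ a unit element of $S$; it observes that the optimal direction $\mU^* = \argmax_{\mU}\scalarp{\mD,\mU}$ is independent of $h$, then optimizes the scalar $h$ in the resulting one-dimensional quadratic, and (implicitly) takes $c = (h^*)^2$. You instead name the orthogonal projection $M^* = \proj_S(\mD)$ directly, set $c = \norm{M^*}^2$, and use the orthogonality $\mD - M^* \perp S$ together with Cauchy--Schwarz to pin down the constrained argmax. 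The two arguments coincide once one recognizes $h^*\mU^* = M^*$. Your version is more explicit about non-uniqueness in $\mH$ (you correctly read the statement as an equality of solution \emph{sets} and work over fibers of $\mH \mapsto \mH\mB$) and about the degenerate case $M^* = 0$, both of which the paper handles only informally (it writes the answer as a single element $h^*\mU^*$ of $S$ and relies on a $0/\norm{0}=0$ convention). Conversely, the paper's polar decomposition makes the separation-of-variables structure, and hence the constructive choice of $c$, a bit more transparent.
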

Taking $\mDV$ as $\mD$ and $\mDE = \mOmega\mA^T\mB$ as $\mH\mB$, we can reformulate TINY optimization problem \ref{eq:probform} as :
\begin{align}
    \mA^*, \mOmega^* &= \argmax_{\mA, \mOmega}\langle \mDE(\mA, \mOmega), \mDV_{proj}\rangle & s.t.\;\; \norm{\mDE(\mA, \mOmega)}^2 \leq c
\end{align}
We remark that
\begin{align*}
    \scalarp{ \mDE(\mA, \mOmega), \mDV_{proj}} &=  \scalarp{ \mOmega\mA^T\mB,\mDV_{proj} }  \\
    &=\Tr(\mB^T\mA\mOmega^T\mDV_{proj}) \\
    &= \Tr(\mA\mOmega^T\mDV_{proj}\mB^T) \\
    &= \scalarp{\mOmega\mA^T, \mDV_{proj} \mB^T }
\end{align*}
With the definition $\mN := \mB\mDV_{proj}^T$, 
\begin{align}\label{eq:scalarprod_AOmega_N}
    \scalarp{ \mDE(\mA, \mOmega), \mDV_{proj} } &= \scalarp{ \mA\mOmega^T, \mN }
\end{align}
For the constrain on $\mDE(\mA, \mOmega)$, we perform the change of variable $\tilde{\mA} := \mS^{\frac{1}{2}}\mA$, it follows that :
\begin{align*}
    \norm{\mDE(\mA, \mOmega)}^2 &= \norm{\mOmega\mA^T\mB}^2 \\
    &= \Tr(\mA\mOmega^T\mOmega\mA^T\mS) & \mS = \mB\mB^T\\
    &= \norm{\mOmega(\Shalf\mA)^T}\\
    &= \norm{\mOmega\tilde{\mA}^T}
\end{align*}
With the same change of variable, the initial scalar product ~\Cref{eq:scalarprod_AOmega_N} is :
\begin{align}
    \langle \mDE(\mA, \mOmega), \mDV_{proj}\rangle = \langle \mOmega\tilde{\mA}^T\Smhalf, \mN^T\rangle = \langle \mOmega\tilde{\mA}^T, \mN^T\Smhalf\rangle = \langle \tilde{\mA}\mOmega^T, \Smhalf\mN\rangle
\end{align}
TINY optimization problem is now equivalent to :
\begin{align}
    \hat{\mA}^*, \mOmega^* &= \argmax_{\hat{\mA}, \mOmega}\langle \tilde{\mA}\mOmega^T, \Smhalf\mN\rangle & s.t.\;\; \norm{\mOmega\tilde{\mA}^T}^2 \leq c
\end{align}
To maximize the scalar product, we choose  $\tilde{\mA}\mOmega^T = \Smhalf\mN$. A solution for $(\tilde{\mA}, \mOmega)$ is the (left, right) eigenvectors of the matrix $\Smhalf\mN$. It implies that :
\begin{equation}\label{eq:TINY_FormulatedAsGradMax}
    \mOmega^* := \argmax_{\mOmega}\norm{\Smhalf\mN\mOmega}^2 \quad s.t. \norm{\mOmega} \leq \tilde{c}
\end{equation}
One can note three differences between GradMax optimization problem and the last formulation of TINY (\Cref{eq:TINY_FormulatedAsGradMax}): 
\begin{itemize}
\item First, the matrix $\tilde{\mN}$ is not defined using the projection of the desired update $\mDV_{proj}^{l+1}$. As a consequence, GradMax does not take into account redundancy, and on the opposite will actually try to add new neurons that are as redundant as possible with the part of the goal update that is already feasible with already-existing neurons. 
\item Second, the constraint lies in the weight space for GradMax method while it lies in the pre-activation space in our case.
The difference is that GradMax relies on the Euclidean metric in the space of parameters, which arguably offers less meaning that the Euclidean metric in the space of activities. Essentially this is the same difference as between the standard L2 gradient w.r.t.~parameters and the natural gradient, which takes care of parameter redundancy and measures all quantities in the output space in order to be independent from the parameterization. In practice we do observe that the "natural" gradient direction improves the loss better than the usual L2 gradient.
\item Third, our fan-in weights are not set to 0 but directly to their optimal values (at first order).
\end{itemize}
We now prove the proposition \ref{prop:equi_norm_prdscal}.
\begin{proof}\label{proof:equivalence_min_norm_prdsc}
    Indeed,
    \begin{align}
        \argmin_{\mH}\norm{\mD - \mH\mB}^2 &= \argmin_{\mH}\norm{\mH\mB}^2 - 2\langle \mD, \mH\mB \rangle \\
        &=\argmin_{\mH}\norm{\mH\mB}^2 -2\norm{\mH\mB} \scalarp{\mD,\frac{\mH\mB}{\norm{\mH\mB}}} \\
        &= \argmin_{h\mU, \;\;\norm{\mH\mB} = h, \;\mU = \frac{\mH\mB}{\norm{\mH\mB}}} h^2-2h\langle \mD, \mU \rangle\\
    \end{align}
    We note $\displaystyle \mU^* : = \argmax_{\mU = \frac{\mH\mB}{\norm{\mH\mB}}} \scalarp{\mD, \mU}$ which depends on $\mB$ and $\mD$ but does not depend on $h$. Then :
    \begin{align}
        \argmin_{\mH}\norm{\mD - \mH\mB}^2 &= \argmin_{h\mU^*, \; h \;\geq \; 0} \; h^2-2h\scalarp{\mD,\mU^*}\\
        &= h^*\mU^*
     \end{align}
     
With the convention that $\frac{0}{\norm{0}} = 0$.
\end{proof}

\subsection{NORTH Preactivation}
In paper \cite{maile2022when}, fan-out weights are initialized to 0 while fan-in weights are initialized as $\valpha_i = \mS^{-1}\mB_{l-2}\textcolor{purple}{\mathbf{V}_{\mathbf{A}_{l-1}}\vr_i}$ where $\vr_i$ is a random vector and $\mathbf{V}_{\mathbf{Z}_{l-1}} \in \mathbb{R}^{n, |\ker(\mA_{l-1}^T)|}$ is a matrix consisting of orthogonal vectors of the kernel of pre-activations $\mA_{l-1}$. In our paper fan-in weights are initialized as $\valpha_i = \Smhalf\mB_{l-2}\mDV_{proj}^T\vv_i = \Smhalf\mB_{l-2}\textcolor{purple}{\proj_{\ker(\mB_{l-1})}\mDV^T\vv_i}$, where $\proj_{Ker(\mB_{l-1})} \in \mathbb{R}^{n, n}$ is the projector on the kernel of the linear application $\mB_{l-1}$ and $\vv_i$ are the right eigenvectors of the matrix $\mS^{-\frac{1}{2}}\mN$.

The main difference is thus that we use the backpropagation to find the best $\vv_i$ or $\vr_i$ directly, while the NORTH approach tries random directions $\vr_i$ to explore the space of possible neuron additions. 

\section{Proofs of Part 3}
\label{sec:proofs}

\subsection{Proposition \ref{prop1}}
\label{app:proof:prop1}
Denoting by $ \mM^+$ the generalized (pseudo-)inverse of $ \mM$, we have: 
$$
    \delta  {\mW_l^*} = \frac{1}{n}\mDV^{l}\mB_{l-1}^T\\
    \left( \frac{1}{n}\mB_{l-1} \mB_{l-1}^T\right)^+\;\text{and} \;\;\;\mDE_0^{l} = \frac{1}{n}\mDV^{l}\mB_{l-1}^T\left( \frac{1}{n}\mB_{l-1} \mB_{l-1}^T\right)^+\mB_{l-1} \; .
$$\\

\rule{\linewidth}{.5pt}
\begin{proof}

\textbf{Fully connected layers}

Consider the function
\begin{equation}\label{eq:best_update_squared_prb}
    g( \delta \mW) := \frob{\mDV^l - \delta \mW\mB_{l-1}}^2
\end{equation}
then:
\begin{align}
    g( \delta \mW +  \mH) &= || \mDV^{l} - \delta \mW\mB_{l-1}-   \mH\mB_{l-1}||^2 \\
    &= g(\delta \mW) - 2 \scalarp{ \mDV^{l} -  \delta \mW \mB_{l-1} ,  \mH \mB_{l-1} } + o(|| \mH||) \\
    \myhidetext{&= g(\delta \mW) - 2 \trace{ \left( \mDV^{l} -  \delta \mW \mB_{l-1} \right)^T  \mH \mB_{l-1} } + o(|| \mH||) \\
    &= g(\delta \mW) - 2 \trace{ \mB_{l-1} \left( \mDV^{l} -  \delta \mW \mB_{l-1} \right)^T  \mH } + o(|| \mH||) \\}
    &= g(\delta \mW) - 2 \scalarp{ \left( \mDV^{l} -  \delta \mW \mB_{l-1} \right)\mB_{l-1}^T ,  \mH } + o(|| \mH||)
\end{align}
By identification $\nabla_{\delta \mW}g(\delta \mW) = -2\left(\mDV^{l} -  \delta \mW \mB_{l-1} \right)\mB_{l-1}^T$, and thus:
\begin{align*}
    &\nabla_{ \delta \mW}g( \delta \mW) =0 \implies  \mDV^{l}\mB_{l-1}^T = \delta \mW \mB_{l-1} \mB_{l-1}^T 
\end{align*}
Using that $g$ is convex and the definition of the generalized inverse, we get:
$$
    \delta  {\mW_l^*} = \frac{1}{n}\mDV^{l}\mB_{l-1}^T\left( \frac{1}{n}\mB_{l-1} \mB_{l-1}^T\right)^+
$$
as one solution.

\myhidetext{
\textbf{For convolutional layer}, we defined as $\vb_i^c$ the input associated to the activation  $\va_l(X_i) \in \mathbb{R}^{k, p, p}$, such that for a convolution layer with one output channel, noted with parameter $\mW$, we have :
\begin{align}\label{unfoldB}
    Conv(\va_l(X_i)) = \mB_i^c\; vect(\mW)
\end{align}
\textit{Example} : considering the kernel of $Conv$ to  be $(2, 2)$, then at channel $k$ :
\begin{equation}
\vb_i^k = \begin{pmatrix} b_i^{k, 1, 1} & b_i^{k,1, 2}  &  . & b_i^{k, 1,  p} \\
                        b_i^{k, 2, 1} & b_i^{k, 2, 2} & . & b_i^{k, 2, p} \\
                        . & . & . & . \\
                        b_i^{k, p, 1} & . & . & b_i^{k, p, p}\end{pmatrix}
\end{equation}
And :
\begin{align}\label{eq :activity_convolution}
\mB^c_i &= \begin{pmatrix} b_i^{1, 1, 1} & b_i^{1, 1, 2} & b_i^{1, 2, 1} & b_i^{1, 2, 1} &  b_i^{2, 1, 1} & b_i^{2, 1, 2} & b_i^{2, 2, 1} & b_i^{2, 2, 1}b_i^{3, 1, 1} & . \\
                        b_i^{1, 1, 2} & b_i^{1, 1, 3} & b_i^{1, 2, 2} & b_i^{1, 2, 3} &  b_i^{2, 1, 2} & b_i^{2, 1, 3} & b_i^{2, 2, 2} & b_i^{2, 2, 3}b_i^{3, 1, 2} & . \\
                       .&.&.&.&.&.&.&.&.
                       \end{pmatrix}
\end{align}
Then the function to minimize is 
\begin{equation}
    g(\delta \mW) = \sum_i|| \mDV^l_i-   \mB_i^c\delta \mW||^2
\end{equation}
}

\textbf{Convolutional layers}\label{sec:def_B_convolution}

For convolutional layers we aim to solve (where we dropped the index $l-1$ for readability):

\begin{equation}
    \label{eq:original_natural_grad_conv}
    \argmin_{\dW} \frob{\mDV - \Conv_{\dW}(\mB)}
\end{equation}

We convert this in a linear regression by transforming the convolution in a matrix multiplication. First we reshape and permute:

\begin{itemize}
    \item $\dW \in (\cp, C, d[+1], d[+1])$ is transformed in $\dW_F \in (\cp, C d[+1] d[+1])$
    \item $\mDV \in (n, \cp, H[+1], W[+1])$ is transformed in $\mDV_F \in (n H[+1] W[+1], \cp)$
\end{itemize}

Then we can define $\unfPostact \in (n, C d[+1] d[+1], H[+1]W[+1])$ and its reshaped version $\unfPostact_F \in (nH[+1]W[+1], C d[+1] d[+1])$ that satisfies that $\unfPostact_F \trans{\dW_F} \in (n H[+1] W[+1], \cp)$ is a reshaped version of $\Conv_{\dW}(\mB)$. ($\unfPostact$ can be easily computed using \texttt{torch.nn.Unfold}.)

Hence \Cref{eq:original_natural_grad_conv} becomes:

\begin{equation}
    \argmin_{\dW_F} \frob{\mDV_F - \unfPostact_F \trans{\dW_F}}
\end{equation}

Using the same reasoning that for fully connected layers, we get an optimal solution:

\begin{equation}
    \trans{(\dW_F^*)} = \pinv{\left( \trans{(\unfPostact_F)} \unfPostact_F \right)} \trans{(\unfPostact_F)} \mDV_F
\end{equation}

\end{proof}

\rule{\linewidth}{.01pt} 
\subsection{Proposition \ref{prop2}}
We define the matrices $ \mN : =\frac{1}{n}  \mB_{l-2}  \left(\textcolor{teal}{\mDV_{proj}^{l}}\right)^T$ and $ \mS := \frac{1}{n}  \mB_{l-2} \mB_{l-2}^T$.
Let us denote its SVD by $ \mS =  \mO\Sigma\mO^T$, and note $\mS^{-\frac{1}{2}} := \mO\sqrt{\Sigma}^{-1} \mO^T$
and consider the SVD of the matrix $ \mS^{-\frac{1}{2}} \mN = \sum_{k=1}^{R} \lambda_k  \vu_k  \vv_k^T$ with $\lambda_1 \geq . .. \geq \lambda_R \geq 0$,
where $R$ is the rank of the matrix $ \mN$. Then:\medskip
    
\begin{prop}[\ref{prop2}]
The solution of \eqref{eq:probform} can be written as:
    \begin{itemize}
    \item optimal number of neurons: $K^{*} = R$
    \item their optimal weights: $({ \valpha}_k^{*}, { \vomega}_k^{*}) = (\sqrt{\lambda_k} S^{-\frac{1}{2}} \vu_k,  \sqrt{\lambda_k}\vv_k)$ for $k=1, ..., R$. 
    \end{itemize}
    Moreover for any number of neurons $K \leqslant R$, and associated scaled weights 
    ${\theta}^{K,*}_{\leftrightarrow}$, the  expressivity gain and the first order in $\eta$ of the loss improvement due to the addition of these $K$ neurons are equal and can be quantified very simply as a function of the eigenvalues $\lambda_k$:
    \begin{align*}
        &\Psi^l_{\theta \oplus {\theta}^{K,*}_{\leftrightarrow }} = \Psi^l_{\theta} - 
        \textcolor{red}{\sum_{k=1}^K \lambda_k^2} 
    \end{align*}
for fully-connected layers, with an inequality instead ($\leq$) for convolutional layers.
\end{prop}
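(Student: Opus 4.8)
The plan is to reduce Problem (\ref{eq:probform}) to a low-rank matrix approximation problem in a suitable Euclidean geometry, so that the Eckart--Young--Mirsky theorem applies directly. First I would write out the objective: using the small-weight approximation $\textcolor{magenta}{\vDE^{l}(\vx,\theta_{\leftrightarrow}^K)} = \sum_{k=1}^K \vomega_k\,\valpha_k^T\vb_{l-2}(\vx)$, the activity-update matrix over the minibatch is $\mDE^l(\theta_{\leftrightarrow}^K) = \mOmega\mA^T\mB_{l-2}$ where $\mA$ stacks the $\valpha_k$ as columns and $\mOmega$ stacks the $\vomega_k$. Hence (\ref{eq:probform}) reads $\min_{\mA,\mOmega}\frac{1}{n}\frob{\mOmega\mA^T\mB_{l-2} - \textcolor{teal}{\mDVl_{proj}}}^2$, i.e.\ we seek the best rank-$\leq K$ factorization $\mOmega\mA^T$ of a linear map, precomposed with $\mB_{l-2}$.

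The key step is the change of variable $\tilde{\mA} := \Shalf\mA$ (with $\mS = \frac{1}{n}\mB_{l-2}\mB_{l-2}^T$), which I would carry out exactly as in the GradMax comparison (\ref{GradMaxcompare}): one checks that $\frac{1}{n}\frob{\mOmega\mA^T\mB_{l-2}}^2 = \frob{\mOmega\tilde{\mA}^T}^2$ and that $\frac{1}{n}\scalarp{\mOmega\mA^T\mB_{l-2},\,\textcolor{teal}{\mDVl_{proj}}} = \scalarp{\tilde{\mA}\mOmega^T,\,\Smhalf\mN}$ with $\mN = \frac{1}{n}\mB_{l-2}(\textcolor{teal}{\mDVl_{proj}})^T$. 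Expanding the square, the objective becomes, up to the constant $\frac{1}{n}\frob{\textcolor{teal}{\mDVl_{proj}}}^2$,
\begin{equation*}
\frob{\mOmega\tilde{\mA}^T}^2 - 2\scalarp{\tilde{\mA}\mOmega^T,\,\Smhalf\mN} \;=\; \frob{\tilde{\mA}\mOmega^T - \Smhalf\mN}^2 - \frob{\Smhalf\mN}^2 .
\end{equation*}
So minimizing (\ref{eq:probform}) over $K$ neurons is exactly finding the best rank-$\leq K$ approximation of $\Smhalf\mN$ in Frobenius norm. By Eckart--Young--Mirsky, writing the SVD $\Smhalf\mN = \sum_{k=1}^R \lambda_k\vu_k\vv_k^T$ with $\lambda_1\geq\dots\geq\lambda_R>0$, the optimum for a budget $K$ is $\tilde{\mA}\mOmega^T = \sum_{k=1}^{\min(K,R)}\lambda_k\vu_k\vv_k^T$, any balanced factorization of which gives $\tilde{\valpha}_k^* = \sqrt{\lambda_k}\vu_k$, $\vomega_k^* = \sqrt{\lambda_k}\vv_k$, hence $\valpha_k^* = \Smhalf^{-1}\tilde{\valpha}_k^* = \sqrt{\lambda_k}\Smhalf^{-1}\vu_k = \sqrt{\lambda_k}\Smhalfti\vu_k$ with the convention on zero eigenvalues; since adding neurons beyond $K=R$ yields no further decrease, $K^*=R$. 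The residual at budget $K$ is $\frob{\Smhalf\mN}^2 - \sum_{k=1}^K\lambda_k^2$, and since $\Psi^l_\theta$ equals $\frac{1}{n}\frob{\textcolor{teal}{\mDVl_{proj}}}^2$ plus (by Pythagoras / Proposition~\ref{prop1}) the part already explained by $\delta\mW_l^*$ — or more directly, since $\Psi^l_{\theta\oplus\theta_{\leftrightarrow}^{K,*}}$ is the optimal value of (\ref{eq:mini_NG}) after adding the neurons, which differs from $\Psi^l_\theta$ precisely by the gain on the projected residual — we get $\Psi^l_{\theta\oplus\theta_{\leftrightarrow}^{K,*}} = \Psi^l_\theta - \sum_{k=1}^K\lambda_k^2$, which is Eq.~(\ref{eq:phi_lambdas}).

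I would expect two points to require the most care. The first is the convolutional case: there $\mDE^l(\theta_{\leftrightarrow}^K)$ is not simply $\mOmega\mA^T\mB_{l-2}$ but involves shared kernels across spatial positions (cf.\ the unfolding construction in \ref{sec:def_B_convolution}), so the map $(\mA,\mOmega)\mapsto\mDE^l$ is no longer the generic rank-$K$ family; one only has an inclusion of the reachable set into the rank-$K$ matrices composed with the unfolded operator, which yields the inequality $\Psi^l_{\theta\oplus\theta_{\leftrightarrow}^{K,*}} \leq \Psi^l_\theta - \sum_k\lambda_k^2$ rather than equality — I would argue this by noting the optimal unconstrained low-rank solution need not be realizable by a single convolutional kernel, but is still an admissible competitor giving an upper bound on the attained loss. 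The second subtlety is the handling of $\Smhalf$ when $\mS$ is only positive semi-definite: one must check that the optimal $\tilde{\mA}$ can be chosen in the range of $\Shalf$ (equivalently, components of $\vu_k$ in $\ker\mS$ contribute nothing to $\mOmega\tilde{\mA}^T$ that is realizable, since $\mB_{l-2}$ annihilates $\ker\mS$), so that $\valpha_k^* = \sqrt{\lambda_k}\Smhalfti\vu_k$ is well-defined and optimal despite the pseudo-inverse convention; this is exactly the same subtlety already flagged around the definition of $\mS^{-\frac12}$ and I would dispatch it with the $\ker\mS = \ker\mB_{l-2}^T$ identity. Everything else is the routine verification of the two identities in the change of variables and an invocation of Eckart--Young--Mirsky.
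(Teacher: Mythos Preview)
Your argument for fully-connected layers is correct and is essentially the paper's: reduce (\ref{eq:probform}) to best rank-$K$ Frobenius approximation of $\Smhalf\mN$ via the substitution $\tilde{\mA}=\Shalf\mA$, then invoke Eckart--Young--Mirsky. One small slip: the identification $\Psi^l_\theta = \tfrac{1}{n}\frob{\textcolor{teal}{\mDVl_{proj}}}^2$ is direct --- it \emph{is} the residual after the optimal $\delta\mW_l^*$, not that residual \emph{plus} the part already explained --- so your first attempt at that step is backwards, though your ``or more directly'' lands on the right statement.

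Your convolutional argument, however, points the wrong way and is not what the paper does. If the reachable set for convolutional additions is \emph{contained} in the generic rank-$K$ set (as you write), then minimizing over the smaller set gives a \emph{larger} residual, i.e.\ $\Psi^l_{\theta\oplus\theta^{K,*}_{\leftrightarrow}} \geq \Psi^l_\theta - \sum_k\lambda_k^2$, the opposite of what is claimed; and an unconstrained optimum that is ``not realizable by a single convolutional kernel'' cannot simultaneously be ``an admissible competitor''. The paper's route is different in kind. It introduces convolution-specific matrices $\mS,\mN$ built from the pixel-unfolded activities $\mB^t_{i,j}$ (Eqs.~(\ref{eq:pseudo_S})--(\ref{eq:pseudo_N})) and proves a \emph{pointwise upper bound}
\[
\tfrac{1}{n}\frob{\mDE(\mA,\mOmega)-\mDVp}^2 \;\leq\; \frob{\Shalf\mA_F\mOmega_F-\Smhalf\mN}^2 - \frob{\Smhalf\mN}^2 + \tfrac{1}{n}\frob{\mDVp}^2
\]
valid for every $(\mA,\mOmega)$, the key step being the elementary inequality $\Tr(\mM\mW)^2 \leq \rank(\mM\mW)\,\frob{\mM\mW}^2$ applied per output pixel and summed (this is what produces the rank factor $r$ in the definition of $\mS$). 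One then minimizes the right-hand surrogate exactly by Eckart--Young to obtain $(\mA^*,\mOmega^*)$ with surrogate value $\Psi^l_\theta - \sum_k\lambda_k^2$, and evaluating the pointwise bound at this specific pair gives the claimed $\leq$. So the inequality arises not from a constraint-inclusion argument but from a deliberately constructed tractable majorant whose minimizer is still a bona fide convolutional neuron.
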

\textit{Proof.}\\
To facilitate reading we remove the layer index of each quantity, ie $\mB := \mB_{l-2},\; \mDE^l(\mA, \mOmega) := \mDE(\mA, \mOmega)$ and $\mDV^l_{proj} := \mDV_{proj}$.  We fix $n$ and $\vx_1, ...., \vx_n$ on which we solve the expressivity bottleneck formula. 

To solve this problem, we consider the input of the incoming connections $\mB$ and the desired change in the output of the outgoing connections $\mDV_{proj}$. Hence if we note $L(\mA)$ and  $L(\mOmega)$ the additional connections of the expanded representation and $\sigma$ the non linearity, we optimize the following proxy problem:

\begin{equation}
    \argmin_{\mA, \mOmega} \frac{1}{n} \norm{(L(\mOmega) \circ \sigma \circ L(\mA))(\mB) - \mDV_{proj}}_{\Tr}
\end{equation}

We solve this problem at first order by linearizing the non-linearity $\sigma$. We denote $\Lin_{(a, b)}(W)$ the fully connected layer with input size $a$, output size $b$ and weight matrix $\mW$. We also note $C[+1]$ and $C[-1]$ the layer width at layer $l+1$ and $l-1$ with the convention that C[0] is the dimension of the input $\vx$. With those notations, for fully connected layers, we have for the additions of $K$ neurons:

\begin{equation}
    \argmin_{\mA, \mOmega} \frac{1}{n} \norm{\Lin_{(\cp, K)}(\mOmega) ( \Lin_{(K, \cm)}(\mA) (\mB)) - \mDV_{proj}}_2
\end{equation}

With the same notations, for convolutional layers, we have for the additions of $K$ intermediate channels:

\begin{equation}
    \argmin_{\mA, \mOmega} \frac{1}{n} \norm{\Conv_{(\cp, K)}(\mOmega) ( \Conv_{(K, \cm)}(\mA) (\mB)) - \mDV_{proj}}_2
\end{equation}

If we note $\mDE (\mA, \mOmega)$ the result of $\mB$ after applying the layers parametrized by $\mA$ and $\mOmega$, in both cases we aim to optimize:

\begin{equation}
    \argmin_{\mA, \mOmega} \frac{1}{n} \norm{\mDE (\mA, \mOmega) - \mDV_{proj}}_{\Tr}
\end{equation}

First we will transform the resolution of the problem in solving the following optimization problem:

\begin{equation}\label{eq:DEminusDV}
    \argmin_{\mA, \mOmega} \norm{\Shalf \mA \mOmega^T - \Smhalf\mN}_2
\end{equation}
where $\mS$ depends of $\mB$ and $\mN$ of $\mB$ and $\mDV_{proj}$.


If we note $\mS = \mO \Lambda \trans{\mO}$ the SVD of $\mS$, we define the square root of $\mS$ as $\Shalf := \mO \sqrt{\Lambda} \trans{\mO}$ and $\Smhalf  := \mO \sqrt{\Lambda^{-1}} \trans{\mO}$ with the convention $0^{-1} = 0$.

\subsubsection{Fully connected layers}

For a fully connected layer, we have 
\begin{equation}\label{eq:DE_lin}
    \mDE (\mA, \mOmega) = \Lin_{(\cp, K)}(\mOmega) ( \Lin_{(K, \cm)}(\mA) (\mB)) = \mOmega \mA^T\mB
\end{equation}


\begin{lemma}
    \label{lemma:linear_quadratic_problem}
    Let $\mY \in \R(t, n), \mX \in \R(s, n),\mC \in \R(t, s)$

    We define:
    \begin{gather}
        \mS := \frac{1}{n} \mX\trans{\mX} \in \R(s, s) \\
        \mN := \frac{1}{n} \mX\trans{\mY} \in \R(s, t)
    \end{gather}

    \begin{equation}
        \frac{1}{n}\norm{\mC\mX - \mY}^2 = \norm{\mC \Shalf-  \mN^T\Smhalf}^2 - \norm{\Smhalf \mN}^2 + \frac{1}{n} \norm{\mY}^2 
    \end{equation}
    
\end{lemma}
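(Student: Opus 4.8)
The plan is to expand the squared Frobenius norm $\frac{1}{n}\norm{\mC\mX - \mY}^2$ and to complete the square in $\mC$. Writing $\norm{M}^2 = \Tr(M M^T)$, I would first expand
\[
\frac{1}{n}\norm{\mC\mX - \mY}^2 = \frac{1}{n}\Tr(\mC\mX\trans{\mX}\trans{\mC}) - \frac{2}{n}\Tr(\mC\mX\trans{\mY}) + \frac{1}{n}\norm{\mY}^2 = \Tr(\mC\mS\trans{\mC}) - 2\Tr(\mC\mN) + \frac{1}{n}\norm{\mY}^2,
\]
using the definitions $\mS = \frac1n \mX\trans{\mX}$ and $\mN = \frac1n \mX\trans{\mY}$. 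The first two terms are exactly what one gets by expanding $\norm{\mC\Shalf - \mN^T\Smhalf}^2$, provided the cross term matches; this is where the convention $0^{-1}=0$ in the definition of $\Smhalf$ needs care, since $\Shalf\Smhalf$ is not the identity but the orthogonal projector onto the range of $\mS$.

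Concretely, I would compute
\[
\norm{\mC\Shalf - \mN^T\Smhalf}^2 = \Tr(\mC\Shalf\Shalf\trans{\mC}) - 2\Tr(\mC\Shalf\Smhalf\mN) + \Tr(\mN^T\Smhalf\Smhalf\mN) = \Tr(\mC\mS\trans{\mC}) - 2\Tr(\mC\,\mP\,\mN) + \norm{\Smhalf\mN}^2,
\]
where $\mP := \Shalf\Smhalf = \mO \,\mathrm{diag}(\mathbf{1}_{\Lambda>0})\, \trans{\mO}$ is the projector onto $\Ima(\mS)$. Comparing with the first display, the lemma will follow once I show $\Tr(\mC\,\mP\,\mN) = \Tr(\mC\mN)$, i.e.~that $\mP\mN = \mN$. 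This holds because $\mN = \frac1n \mX\trans{\mY}$ has all its columns in $\Ima(\mX) = \Ima(\mX\trans{\mX}) = \Ima(\mS)$, on which $\mP$ acts as the identity. Granting this, rearranging the second display gives $\Tr(\mC\mS\trans{\mC}) - 2\Tr(\mC\mN) = \norm{\mC\Shalf - \mN^T\Smhalf}^2 - \norm{\Smhalf\mN}^2$, and substituting into the first display yields the claimed identity.

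The main obstacle — really the only subtlety — is the rank-deficient case: when $\mS$ is not invertible, $\Smhalf$ is only a pseudo-inverse square root and $\Shalf\Smhalf \neq \mI$. The identity is rescued precisely by the fact that $\Ima(\mN)\subseteq\Ima(\mS)$, so I would state and use the elementary linear-algebra fact $\Ima(\mX\trans{\mX}) = \Ima(\mX)$ (equivalently $\ker(\mX\trans{\mX})=\ker(\trans{\mX})$) explicitly. Everything else is bookkeeping with the cyclic property of the trace and $\Shalf\Shalf = \mS$, $\Smhalf\Smhalf = \mS^{+}$.
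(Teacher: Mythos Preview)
Your proposal is correct and follows essentially the same approach as the paper: expand both sides using the trace, reduce the comparison to showing that the projector $\mP=\Shalf\Smhalf$ onto $\Ima(\mS)$ fixes $\mN$, and justify this via $\Ima(\mN)\subseteq\Ima(\mX)=\Ima(\mS)$. The paper organizes this into two auxiliary lemmas (one for $\Shalf\Smhalf\mX\trans{\mY}=\mX\trans{\mY}$ and one for the completing-the-square identity under that hypothesis), but the substance is identical to what you wrote.
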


Proof in \ref{proof:lemma_linear_quadratic}

Hence using~\Cref{lemma:linear_quadratic_problem} with $\mC \gets \mOmega\mA^T$, $\mY \gets \mDV_{proj}$ and $\mB \gets \mX$, we have:
\begin{equation}
    \begin{gathered}
        \frac{1}{n} \norm{\Lin_{(\cp, K)}(\mOmega) ( \Lin_{(K, \cm)}(\mA) (\mB)) - \mDV_{proj}}^2 \\
        =\\
        \frac{1}{n}\norm{\mOmega\mA^T\mB - \mDV_{proj}}^2\\
        =\\
        \norm{\mOmega\mA^T\Shalf - \mN^T\Smhalf}^2 - \norm{\Smhalf\mN}^2 + \frac{1}{n} \norm{\mDV_{proj}}^2
    \end{gathered}    
\end{equation}

With:
\begin{gather}
    \mS := \frac{1}{n} \mB\trans{\mB} \in \R(\cm, \cm) \\
    \mN := \frac{1}{n} \mB\trans{\mDV_{proj}} \in \R(\cm, \cp)
\end{gather}

\subsubsection{Convolutional connected layers}

We have $\mA \in \R(K, \cm, d, d)$ and $\mOmega \in \R(\cp, K, d[+1], d[+1])$ where $d, d[+1]$ is the kernel size at $l$ and $l+1$. 
\myhidetext{We note $\mA_F \in \mathbb{R}(\cm d d, K)$ the flatten version of $\mA$ and $\mA_k := \mA_F[k, :]$. We also note $\mOmega_{m, k} \in \mathbb{R}(d[+1]d[+1])$ the flatten version of $\mOmega[m, k, :, :]$. Using this, we define $\mOmega_k := \begin{pmatrix}
    \mOmega_{k, 1} &\cdots &\mOmega_{k, \cp}
\end{pmatrix}$ and $\mOmega_F := \begin{pmatrix}
    \mOmega_1 \\ \vdots \\ \mOmega_K \end{pmatrix}$.
}
We note $\AlphaF$ the flatten and transposed version of $\Alpha$ of shape $(\cm d d, K)$ and $\valpha_k := \AlphaF[:, k] \in (\cm d d, 1)$. We will now consider $\mOmega$ with the last order flatten \textit{i.e.}  $\mOmega \in \R(\cp, K, d[+1] d[+1])$. We also note $\vomega_{k, m} := \mOmega[m, k] \in (d[+1] d[+1], 1)$. Using this we define $\mOmegam := \begin{pmatrix}
    \transp{\vomega_{1, m}} \\ \vdots \\ \transp{\vomega_{K, m}} \end{pmatrix} \in (K, d[+1]d[+1])$ and $\OmegaF := \begin{pmatrix}
        \mOmega[1]_F & \cdots & \mOmega[m]_F \end{pmatrix} \in (K, \cp d[+1]d[+1])$.


We define the tensor $\mT$ such that for a pixel $j$ of the output of the convolutional layer, $\mT_j$ is a linear application that select the pixels of the input of the convolutional layer that are used to compute the pixel $j$ of the output in a flatten version image (flatten only on the space not on the channels). $\mT \in \R(H[+1]W[+1], d[+1]d[+1], HW)$ where $H$ and $W$ are the height and width of the intermediate image and $H[+1]$ and $W[+1]$ are the height and width of the output image.

As previously, we have $\mB^c$ the unfolded version of $\mB$ such that $\mB^c \in \R(n, \cm dd, HW)$ satisfying $\Conv(\mB_i)$ is equal with the correct reshape to $\mA \mB^c_i$.

In addition, we use $j$ as an index on the space of pixels instead of having a couple $h,w$ for height and width. With those notations we have:

\begin{align}
    \mDE (\mA, \mOmega)[i, m, j]
    &= \Conv_{(\cp, K)}(\mOmega) ( \Conv_{(K, \cm)}(\mA) (\mB_i))[m, j]  \\
    &= \sum_k^K \vomega_{m, k}^T \mT_j \trans{(\mB^c_i)} \valpha_k \label{eq:BijT}
\end{align}

In the following for simplicity, we note $\mB^t_{i, j} := \mT_j \trans{(\mB^c_i)}$. To find the best neurons to add we solve the expressivity bottleneck as :
\begin{align}\label{eq:EB_conv}
 \argmin_{\mA, \mOmega}&\frac{1}{n}\sum_{i}\sum_{j}\sum_{m}\norm{\mDVp_i^{(j, m)} - \sum_{k = 1}^K\vomega_{m, k}^T\mB_{i, j}^t\valpha_k}^2\\
\end{align}
Using the properties of the trace, it follows that :
\begin{align}
\sum_{i, j, m}\norm{\mDVp_i^{(j, m)} - \sum_{k = 1}^K\vomega_{m, k}^T\mB_{i, j}^t\valpha_k }^2
=& \sum_{i, j, m} \norm{\mDVp_i^{(j, m)} - \sum_k\trace{\mB_{i, j}^t\valpha_k\vomega_{m, k}^T} }^2 \\
=& \sum_{i, j, m} \norm{\mDVp_i^{(j, m)} - \trace{\mB_{i, j}^t\overbrace{\sum_k\valpha_k\vomega_{m, k}^T}^{\mF_m}} }^2\\
=& \sum_{i, j, m} \norm{\mDVp_i^{(j, m)} - flat(\mB_{i, j}^t)^T flat(\mF_m)}^2 \\
=& \sum_{i, j} \norm{\mDVp_i^{(j)} - flat(\mB_{i, j}^t)^T\mF}^2 \\
\end{align}
With $\mF := \begin{pmatrix}
    flat(\mF_1) & ...& flat(\mF_{C[+1]})
\end{pmatrix}$.\\

We remark that $\mDE (\mA, \mOmega)$ is a linear function of the matrix $\mF$ which implies that the solution of \ref{eq:EB_conv} is the same as the one for linear layer. Replacing $\mOmega\mA$ by $\mF$ in \ref{eq:DE_lin} and following the same reasoning as for linear layer, it follows that \ref{eq:EB_conv} is equivalent to :
\begin{align}
    \argmin_{\mF} \norm{\Shalf \mF  - \Smhalf\mN}_2
\end{align}
with  $\mS := \sum_{i, j}flat(\mB_{i, j}^t)flat(\mB_{i, j}^t)^T$ and $\mN := \sum_{i, j}\mDVp^j flat(\mB_{i, j}^t)^T$. 

However, we remark that the dimension of $\mS \in \mathbb{R}(C[-1]d[+1]^2d^2, C[-1]d[+1]^2d^2)$ is quite large and that computing the SVD of such matrix is costly. To avoid expensive computation, we approximate \ref{eq:EB_conv} by defining the matrix  $\mS$ and $\mN$ as ~\ref{eq:pseudo_S} and \ref{eq:pseudo_N}. We now prove that ~\ref{prop2}, ~\ref{prop3} and ~\Cref{eq:Loss_Taylor_order_1} still hold with such new definitions of $\mS$ and $\mN$.

\begin{lemma}
    \label{lemma:conv_quadratic_problem}
Let $r := \min(\mB_{1, 1}^t.shape)$, we define:
\begin{gather}\label{eq:pseudo_S}
    \mS := \frac{r}{n}\sum_{i = 1}^n \sum_{j = 1}^{H[+1]W[+1]} \trans{(\mB^t_{i, j})} (\mB^t_{i, j}) \in (\cm dd, \cm dd)\\ 
    \mN_m := \frac{1}{n} \sum_{i, j}^{n, H[+1]W[+1]} {\mDVp}_{i, j, m} \trans{(\mB^t_{i, j})} \in (\cm dd, d[+1]d[+1])\\
    \mN := \begin{pmatrix} \mN_1 \cdots \mN_{\cp} \end{pmatrix} \in (\cm dd, \cp d[+1]d[+1])\label{eq:pseudo_N}
\end{gather}
We have:
\begin{equation}
    \frac{1}{n} \norm{\mDE(\mA , \mOmega) - {\mDV_{proj}}}^2 \leq \norm{\Shalf \mA_F \mOmega_F - \Smhalf \mN}^2 - \norm{\Smhalf \mN}^2 + \frac{1}{n}\norm{{\mDVp}}^2
\end{equation}

\end{lemma}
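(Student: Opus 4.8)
The plan is to reproduce the completion-of-square computation underlying \Cref{lemma:linear_quadratic_problem}. The only new feature is that the matrices $\mS$ and $\mN$ defined in \eqref{eq:pseudo_S} and \eqref{eq:pseudo_N} are a ``compressed'' surrogate (of size $\cm dd$ rather than $\cm dd\,d[+1]d[+1]$) for the exact Gram-type matrices, so the identity available in the fully-connected case degrades into the claimed inequality; the loss is exactly offset by the scaling factor $r$ that has been inserted into the definition of $\mS$.

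First I would put $\mDE$ into a bilinear form. From \eqref{eq:BijT}, for every $(i,j,m)$ one has $\mDE(\mA,\mOmega)[i,m,j] = \sum_{k}\transp{\vomega_{m,k}}\,\mB^t_{i,j}\,\valpha_k = \trace{\mB^t_{i,j}\,\mF_m}$, where $\mF_m := \mA_F\,\mOmegam = \sum_k \valpha_k\transp{\vomega_{m,k}} \in \R(\cm dd,\,d[+1]d[+1])$; collecting the blocks, $\mF := \mA_F\,\OmegaF = \begin{pmatrix}\mF_1 & \cdots & \mF_{\cp}\end{pmatrix}$. Expanding the squared Frobenius norm and identifying the cross term with $\mN$ gives
$$\frac1n\norm{\mDE(\mA,\mOmega)-\mDVp}^2 \;=\; \frac1n\norm{\mDVp}^2 \;-\; 2\scalarp{\mN,\mF} \;+\; \frac1n\sum_{i,j,m}\trace{\mB^t_{i,j}\mF_m}^2 .$$

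Next I would bound the quadratic term. For any $P\in\R(p,q)$ and $Q\in\R(q,p)$, the product $PQ$ has rank at most $r:=\min(p,q)$, so writing the SVD of $PQ$ and applying Cauchy--Schwarz to its singular values yields $\trace{PQ}^2 \leq r\,\frob{PQ}^2$. Taking $P=\mB^t_{i,j}$, $Q=\mF_m$ and summing, $\frac1n\sum_{i,j,m}\trace{\mB^t_{i,j}\mF_m}^2 \leq \frac rn\sum_{i,j,m}\frob{\mB^t_{i,j}\mF_m}^2 = \sum_m\trace{\transp{\mF_m}\,\mS\,\mF_m} = \norm{\Shalf\mF}^2$, by the definition of $\mS$ in \eqref{eq:pseudo_S}. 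To conclude I would complete the square: let $\Shalf\Smhalf$ be the orthogonal projector onto $\Ima(\mS)$ (convention $0^{-1}=0$); each column of $\mN_m$ is a linear combination of the (transposed) rows of the $\mB^t_{i,j}$, hence lies in $\sum_{i,j}\Ima\bigl(\transp{(\mB^t_{i,j})}\bigr) = \Ima(\mS)$, so $\Shalf\Smhalf\mN = \mN$ and $\scalarp{\Shalf\mF,\Smhalf\mN} = \scalarp{\mF,\mN}$. Therefore $\norm{\Shalf\mF-\Smhalf\mN}^2 = \norm{\Shalf\mF}^2 - 2\scalarp{\mF,\mN} + \norm{\Smhalf\mN}^2$, and substituting the bound on the quadratic term into the expansion above gives exactly $\norm{\Shalf\mA_F\OmegaF - \Smhalf\mN}^2 - \norm{\Smhalf\mN}^2 + \frac1n\norm{\mDVp}^2$, which is the assertion since $\mF = \mA_F\OmegaF$.

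The main obstacle is the pairing of the rank--trace inequality $\trace{PQ}^2 \leq r\,\frob{PQ}^2$ with the exact scaling $r$ built into $\mS$: this is the single point where the fully-connected identity is forced to become an inequality, and one must additionally check the range inclusion $\Ima(\mN)\subseteq\Ima(\mS)$ so that the two linear terms coincide exactly rather than merely up to a projection. Everything else is bookkeeping, once one is careful about the flattening and transposition conventions relating $\mA$, $\mOmega$, $\mA_F$, $\OmegaF$ and $\mB^t_{i,j}$.
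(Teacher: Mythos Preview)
Your proof is correct and follows essentially the same route as the paper's: expand the squared norm, bound the quadratic term via the trace--rank inequality $\trace{PQ}^2\le r\,\frob{PQ}^2$ (the paper's \Cref{lemma:trace_inequality}), compute the linear term exactly as $\scalarp{\mF,\mN}$, and complete the square via \Cref{lemma:quadratic_problem}. The only cosmetic difference is that the paper assumes $\mS$ invertible at the last step, whereas you justify the range inclusion $\Ima(\mN)\subseteq\Ima(\mS)$ directly, which is slightly more general.
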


Proof in \ref{proof:conv_quadratic_problem}

\begin{lemma}
    \label{lemma:optimal_neurons}
    For $\mS \in \R(s, s), \mN \in \R(s, t), \mA \in \R(s, K), \mOmega \in \R(K, t)$.

    We note $\mU \Lambda \mV$ the singular value decomposition of $\Smhalf\mN$ and $\mU_K$ the first $K$ columns of $\mU$, $V_K$ the first $K$ lines of $\mV$, $\Lambda_K$ the first $K$ singular values of $\Lambda$ and $\Lambda_{K+1:}$ the other singular values of $\Lambda$.
    
    We define:
    \begin{gather}
        \mA^* := \Smhalf \mU_K \sqrt{\Lambda_K}\\
        \mOmega^* := \sqrt{\Lambda_K} \mV_{K}
    \end{gather}

    Then:
    \begin{equation}
        \label{eq:expressivity_majoration}
        \min_{\mA, \mOmega} \frac{1}{n}\norm{\mDE (\mA, \mOmega) - \mDV_{proj}}^2 \leq \frac{1}{n}\norm{\mDE (\mA^*, \mOmega^*) - \mDV_{proj}}^2 = -\norm{\Lambda_{K}}^2 + \frac{1}{n}\norm{\mDV_{proj}}^2
    \end{equation}
    with equality in the linear case. Using expressivity bottlenecks notations, this rewrites:
        \begin{equation}
        \Psi^l_{\theta \oplus {\theta_{\leftrightarrow}^K}^*} \leq \Psi^l_{\theta} - \sum_{k=1}^K\lambda_k^2 \; .
    \end{equation}
\end{lemma}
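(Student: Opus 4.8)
The plan is to cast the minimisation over the added neurons as a rank-$K$ low-rank matrix approximation problem, solve it with the Eckart--Young--Mirsky theorem, and then check that the weights $\mA^{*},\mOmega^{*}$ prescribed in the statement attain the optimum; the $\Psi$-form is a mere rewriting at the end. First I would use the two preceding lemmas to pass from $\tfrac1n\norm{\mDE(\mA,\mOmega)-\mDVp}^{2}$ to the matrix quadratic $\norm{\Shalf\mA\mOmega^{T}-\Smhalf\mN}^{2}$. For fully-connected layers, $\mDE(\mA,\mOmega)=\mOmega\mA^{T}\mB$, and \Cref{lemma:linear_quadratic_problem} applied with $\mC\gets\mOmega\mA^{T}$, $\mX\gets\mB$, $\mY\gets\mDVp$ (which produces exactly the $\mS=\tfrac1n\mB_{l-2}\mB_{l-2}^{T}$ and $\mN=\tfrac1n\mB_{l-2}\mDVp^{T}$ of the statement) gives the exact identity
\[
\tfrac1n\norm{\mDE(\mA,\mOmega)-\mDVp}^{2}\;=\;\norm{\Shalf\mA\mOmega^{T}-\Smhalf\mN}^{2}-\norm{\Smhalf\mN}^{2}+\tfrac1n\norm{\mDVp}^{2};
\]
for convolutional layers, \Cref{lemma:conv_quadratic_problem} gives the same right-hand side (with $\mA,\mOmega$ replaced by their flattened versions $\mA_{F},\mOmega_{F}$), but as an upper bound. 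In both cases the last two terms are constants, so it remains to minimise $\norm{\Shalf\mA\mOmega^{T}-\Smhalf\mN}^{2}$.

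Next, I would note that $\Shalf\mA\mOmega^{T}$ always has rank at most $K$ (it factors through a $K$-dimensional space), so, writing the thin SVD $\Smhalf\mN=\mU\Lambda\mV=\sum_{k=1}^{R}\lambda_{k}\vu_{k}\vv_{k}^{T}$ with $\lambda_{1}\geq\dots\geq\lambda_{R}>0$, the low-rank approximation theorem yields $\min_{\mA,\mOmega}\norm{\Shalf\mA\mOmega^{T}-\Smhalf\mN}^{2}\geq\min_{\rank\mM\leq K}\norm{\mM-\Smhalf\mN}^{2}=\norm{\Lambda_{K+1:}}^{2}$, attained by the truncation $\mU_{K}\Lambda_{K}\mV_{K}$. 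Then I would verify that the prescribed $(\mA^{*},\mOmega^{*})$ realise it: by construction $\mA^{*}{\mOmega^{*}}^{T}=\Smhalf\mU_{K}\Lambda_{K}\mV_{K}$ (the two $\sqrt{\Lambda_{K}}$ factors combining), hence $\Shalf\mA^{*}{\mOmega^{*}}^{T}=(\Shalf\Smhalf)\,\mU_{K}\Lambda_{K}\mV_{K}$; with the convention $0^{-1}=0$, $\Shalf\Smhalf=\proj_{\Ima(\mS)}$ is the orthogonal projector onto $\Ima(\mS)$, and the columns of $\mU_{K}$, being left singular vectors of $\Smhalf\mN$, lie in $\Ima(\Smhalf\mN)\subseteq\Ima(\Smhalf)=\Ima(\mS)$ and are therefore fixed by it. This gives $\Shalf\mA^{*}{\mOmega^{*}}^{T}=\mU_{K}\Lambda_{K}\mV_{K}$ and $\norm{\Shalf\mA^{*}{\mOmega^{*}}^{T}-\Smhalf\mN}^{2}=\norm{\Lambda_{K+1:}}^{2}$, so the bound is tight.

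Finally, substituting this value back into the identity/inequality of the first step and using $\norm{\Smhalf\mN}^{2}=\norm{\Lambda}^{2}=\norm{\Lambda_{K}}^{2}+\norm{\Lambda_{K+1:}}^{2}$, I obtain $\tfrac1n\norm{\mDE(\mA^{*},\mOmega^{*})-\mDVp}^{2}=-\norm{\Lambda_{K}}^{2}+\tfrac1n\norm{\mDVp}^{2}$ for fully-connected layers and ``$\leq$'' for convolutional ones; together with the lower bound of the second step this is moreover the value of the minimum in the fully-connected case. The expressivity-bottleneck rewrite then follows by identifying $\tfrac1n\norm{\mDVp}^{2}$ with $\Psi^{l}_{\theta}$ (the bottleneck after the optimal within-layer update $\delta\mW^{*}$ of \Cref{prop1}), noting that $\mDVp=\mDVl-\mDE^{l}(\delta\mW^{*})$ makes $\min_{\mA,\mOmega}\tfrac1n\norm{\mDE(\mA,\mOmega)-\mDVp}^{2}$ equal to the new bottleneck $\Psi^{l}_{\theta\oplus{\theta_{\leftrightarrow}^{K}}^{*}}$, and recalling that the $\lambda_{k}$ are the singular values of \Cref{prop2}: thus $\Psi^{l}_{\theta\oplus{\theta_{\leftrightarrow}^{K}}^{*}}\leq\Psi^{l}_{\theta}-\sum_{k=1}^{K}\lambda_{k}^{2}$, with equality for fully-connected layers.

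The step I expect to be the main obstacle is the case where $\mS$ is positive semi-definite but not positive definite: then $\Shalf$ and $\Smhalf$ must be defined through pseudoinverses, and one has to argue carefully --- via the projector identity $\Shalf\Smhalf=\proj_{\Ima(\mS)}$ and the inclusion of the columns of $\mU_{K}$ in $\Ima(\mS)$ --- that the unconstrained truncated-SVD optimum of $\Smhalf\mN$ is actually representable as some $\Shalf\mA\mOmega^{T}$, which is what makes the Eckart--Young lower bound tight here rather than merely a bound. On the convolutional side the analogous difficulty is encapsulated in \Cref{lemma:conv_quadratic_problem}: the surrogate matrices $(\mS,\mN)$ used there only upper-bound the true reconstruction error, which is precisely why the equality degrades to an inequality for convolutional layers.
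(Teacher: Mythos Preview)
Your proposal is correct and follows essentially the same route as the paper: reduce via \Cref{lemma:linear_quadratic_problem}/\Cref{lemma:conv_quadratic_problem} to minimising $\norm{\Shalf\mA\mOmega-\Smhalf\mN}^2$, apply the low-rank approximation theorem (Eckart--Young), read off $(\mA^*,\mOmega^*)$, and compute $\norm{\Lambda_{K+1:}}^2-\norm{\Lambda}^2=-\norm{\Lambda_K}^2$. The one place you go slightly further than the paper is the PSD-but-not-PD case: the paper's proof simply writes ``as we suppose that $\mS$ is invertible, we can use the change of variable $\widetilde{\mA}=\Shalf\mA$'' and proceeds, whereas you supply the projector argument $\Shalf\Smhalf=\proj_{\Ima(\mS)}$ together with $\Ima(\mU_K)\subseteq\Ima(\mS)$ to show the truncated SVD is still reachable --- a small but genuine improvement in rigour that does not change the overall strategy.
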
 
\begin{proof}
    Using~\Cref{lemma:linear_quadratic_problem} 
    and~\Cref{lemma:conv_quadratic_problem}:
    \begin{equation}
        \label{eq:expressivity_inequality}
        \frac{1}{n} \norm{\mDE (\mA, \mOmega) - \mDV_{proj}}^2 \leq \norm{\Shalf \mA \mOmega - \Smhalf\mN}^2 - \norm{\Smhalf\mN}^2 + \frac{1}{n}\overbrace{\norm{\mDV_{proj}}^2}^{\Psi^l_{\theta}}
    \end{equation}
    
    Hence we minimize the second term of the right term, that is:

    \begin{equation}
        \argmin_{\mA, \mOmega} \norm{\Shalf\mA \mOmega - \Smhalf\mN}
    \end{equation}

    As we suppose that $S$ is invertible, we can use the  change of variable $\widetilde{\mA} = \Shalf \mA$ thus we have:

    \begin{equation}
        \min_{\mA, \mOmega} \norm{\Shalf \mA \mOmega - \Smhalf\mN} = \min_{\widetilde{\mA}, \mOmega} \norm{\widetilde{\mA} \mOmega - \Smhalf\mN}
    \end{equation}

    The solution of such problems is given by the paper \cite{ApproxMatrix} and is:
    \begin{gather}
        \widetilde{\mA}^* = \mU_K \sqrt{\Lambda_K}\\
        \mOmega^* = \sqrt{\Lambda_K} \mV_{K}
    \end{gather}

    To recover $\mA^*$ we simply have to multiply by $\Smhalf$ on the left side of $\widetilde{\mA}^*$. By definition of the SVD and the construction of $(\mA^*, \mOmega^*)$ we have:
    \begin{align}
        \norm{\Shalf \mA^* \mOmega^* - \Smhalf\mN}^2 - \norm{\Smhalf\mN}^2 
        = \norm{\Lambda_{K+1:}}^2 - \norm{\Lambda}^2 
        = -\norm{\Lambda_K}^2
    \end{align}
    Using this and \Cref{eq:expressivity_inequality} we immediately get the desired \Cref{eq:expressivity_majoration}. To conclude, we can also rewrite this with the bottleneck expression:
    \begin{equation}
        \Psi_{\theta\oplus {\theta_{\leftrightarrow}^K}} := \min_{\mA, \mOmega} \frac{1}{n} \norm{\mDE(\mA, \mOmega) - \mDV_{proj}}^2 \leq  \Psi_{\theta}^l - \sum_{k=1}^K \lambda_k^2
    \end{equation}
    
\end{proof}
\rule{\linewidth}{.5pt}
We now prove all the lemmas.

\begin{lemma}\label{lemme:3}
    \label{lemma:quadratic_problem}
    For $\mS \in \R(s, s), \mN \in \R(s, t), \mC \in \R(t, s)$.

    If $\mN = \Shalf \Smhalf \mN$, we have:
    \begin{equation}
        \scalarp{\mC^T, \mS\mC^T} - 2\scalarp{\mN, \mC^T} = \norm{\Shalf \mC^T - \Smhalf \mN}^2 - \norm{\Smhalf \mN}^2
    \end{equation} 
\end{lemma}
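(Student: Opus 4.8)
The plan is to prove this as a direct algebraic identity: expand the squared norm on the right-hand side, cancel the term $\norm{\Smhalf\mN}^2$, and then match the two remaining pieces with the two terms on the left, using the symmetry of $\Shalf$, the relation $\Shalf\Shalf=\mS$, and the hypothesis $\mN = \Shalf\Smhalf\mN$.

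First I would write, with the trace inner product $\scalarp{\mX,\mY}=\Tr(\mX^T\mY)$ and bilinearity,
\[
\norm{\Shalf\mC^T - \Smhalf\mN}^2 \;=\; \norm{\Shalf\mC^T}^2 \;-\; 2\,\scalarp{\Shalf\mC^T,\; \Smhalf\mN} \;+\; \norm{\Smhalf\mN}^2 ,
\]
so that the right-hand side of the lemma equals $\norm{\Shalf\mC^T}^2 - 2\,\scalarp{\Shalf\mC^T,\; \Smhalf\mN}$. For the quadratic term, since $\Shalf$ is symmetric and $\Shalf\Shalf=\mS$ (immediate from the definition $\Shalf=\mO\sqrt{\mLambda}\,\mO^T$), I get $\norm{\Shalf\mC^T}^2 = \Tr(\mC\,\Shalf^T\Shalf\,\mC^T) = \Tr(\mC\mS\mC^T) = \scalarp{\mC^T,\;\mS\mC^T}$. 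For the cross term, I would write $\scalarp{\Shalf\mC^T,\;\Smhalf\mN} = \Tr\!\big((\Shalf\mC^T)^T\Smhalf\mN\big) = \Tr(\mC\,\Shalf\,\Smhalf\,\mN)$ and then invoke the hypothesis $\Shalf\Smhalf\mN=\mN$ to obtain $\Tr(\mC\,\Shalf\,\Smhalf\,\mN) = \Tr(\mC\mN) = \scalarp{\mN,\;\mC^T}$. Substituting these two identities into the expansion gives
\[
\norm{\Shalf\mC^T - \Smhalf\mN}^2 - \norm{\Smhalf\mN}^2 \;=\; \scalarp{\mC^T,\;\mS\mC^T} \;-\; 2\,\scalarp{\mN,\;\mC^T},
\]
which is exactly the claim.

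The computation is routine; the only point that deserves care — and the reason the hypothesis $\mN=\Shalf\Smhalf\mN$ is imposed — is that, under the convention $0^{-1}=0$, the product $\Shalf\Smhalf = \mO\,\mathbf{1}_{\mLambda\neq 0}\,\mO^T$ is the orthogonal projector onto $\Ima\mS$ rather than the identity, so without assuming that the columns of $\mN$ lie in $\Ima\mS$ the cross term would not collapse to $\scalarp{\mN,\mC^T}$. Beyond that, the only thing to watch is keeping the transposes consistent through the trace manipulations.
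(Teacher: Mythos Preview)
Your proof is correct and essentially identical to the paper's: both expand the squared norm, use $\Shalf\Shalf=\mS$ for the quadratic term, and invoke the hypothesis $\mN=\Shalf\Smhalf\mN$ to simplify the cross term (the paper moves $\Shalf$ across the inner product via symmetry, you trace through $\Tr(\mC\,\Shalf\Smhalf\mN)$, which is the same computation). Your closing remark on why the hypothesis is needed---that $\Shalf\Smhalf$ is only the projector onto $\Ima\mS$---is a helpful addition not made explicit in the paper.
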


\begin{proof}\label{proof:lemme3}
    \begin{itemize}
        \item For the first term we have:
        \begin{align}
            \scalarp{\mC^T, \mS\mC^T}
            &= \scalarp{\mC^T, \Shalf \Shalf \mC^T} \\
            &= \scalarp{\Shalf \mC^T, \Shalf \mC^T} \\
            &= \norm{\Shalf \mC^T}^2
        \end{align}

        \item For the second term we have:
        \begin{align}
            \scalarp{\mN, \mC^T}
            &= \left<\Shalf \Smhalf \mN, \mC^T\right> \\
            &= \left<\Smhalf \mN, \Shalf \mC^T\right>
        \end{align}
    \end{itemize}
    
    Hence we have that:
    \begin{align}
        \scalarp{\mC^T, \mS\mC^T} - 2\scalarp{\mN, \mC^T}
        &= \norm{\Shalf \mC^T}^2 - 2\left<\Smhalf \mN, \Shalf \mC^T\right> + \norm{\Smhalf \mN}^2 - \norm{\Smhalf \mN}^2 \\
        &= \norm{\Shalf \mC^T - \Smhalf \mN}^2 - \norm{\Smhalf \mN}^2
    \end{align}
\end{proof}

\begin{proof}\label{proof:lemma_linear_quadratic}
    By developing the scalar product we get:
    \begin{align}\label{eq:EquiPrdScal}
        \frac{1}{n} \norm{\mC\mX - \mY}^2 
        &= \frac{1}{n} \norm{\mY}^2 - 2 \scalarp{\mY, \frac{1}{n} \mC\mX} + \frac{1}{n}\norm{\mC\mX}^2\\
        &= \frac{1}{n} \norm{\mY}^2 - 2 \scalarp{\mY, \frac{1}{n}\mC\mX} + \frac{1}{n}\scalarp{\mC\mX, \mC\mX}\\
        &= \frac{1}{n} \norm{\mY}^2 - 2 \scalarp{\mY^T, \frac{1}{n}\left(\mC\mX\right)^T} + \frac{1}{n}\scalarp{\left(\mC\mX\right)^T, \left(\mC\mX\right)^T}\\
        &= \frac{1}{n}\norm{\mY}^2 - 2 \scalarp{\frac{1}{n}\mX\mY^T, \mC^T} + \scalarp{\mC^T, \frac{1}{n}\mX\trans{\mX}\mC^T}\label{122}
    \end{align}
We now use the two following lemma:
    \begin{lemma}\label{lemme:1}
    \label{lemma:local_invertibliity}
    Let $\mY \in \R(t, n), \mX \in \R(s, n)$ and \(\mS := \mX\trans{\mX} \in \R(s, s)\).

    \begin{equation}
        \Shalf \Smhalf  \mX \mY^T = \mX \mY^T
    \end{equation}
\end{lemma}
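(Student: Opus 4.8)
The plan is to recognize the product $\Shalf\Smhalf$ as the orthogonal projector onto the column space of $\mS$, and then to observe that the columns of $\mX\mY^T$ already lie in that column space, so the projector acts as the identity on $\mX\mY^T$.

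First I would diagonalize the symmetric positive semi-definite matrix $\mS = \mX\mX^T$ as $\mS = \mO\mLambda\mO^T$, with $\mO$ orthogonal and $\mLambda = \mathrm{diag}(\lambda_1, \ldots, \lambda_s)$, $\lambda_i \geq 0$. By the definitions used in the paper, $\Shalf = \mO\sqrt{\mLambda}\,\mO^T$ and $\Smhalf = \mO\sqrt{\mLambda}^{-1}\mO^T$ with the convention $0^{-1} = 0$, so $\Shalf\Smhalf = \mO\mD\mO^T$ where $\mD$ is diagonal with a $1$ in position $i$ when $\lambda_i > 0$ and a $0$ otherwise. This matrix, i.e.~$\mD$ conjugated by $\mO$, is precisely the orthogonal projector onto the span of the eigenvectors of $\mS$ associated with nonzero eigenvalues, that is onto $\Ima(\mS)$; in particular it leaves unchanged any matrix whose columns lie in $\Ima(\mS)$.

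Second, I would check that $\Ima(\mS) = \Ima(\mX)$. The inclusion $\Ima(\mX\mX^T) \subseteq \Ima(\mX)$ is immediate. For the reverse, if $\mX\mX^T\vv = 0$ then $\norm{\mX^T\vv}^2 = \vv^T\mX\mX^T\vv = 0$, hence $\ker(\mX\mX^T) = \ker(\mX^T)$; since $\mS$ is symmetric, $\Ima(\mS) = \ker(\mS)^\perp = \ker(\mX^T)^\perp = \Ima(\mX)$.

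Finally, every column of $\mX\mY^T$ is a linear combination of the columns of $\mX$, hence belongs to $\Ima(\mX) = \Ima(\mS)$, and applying the projector $\Shalf\Smhalf$ therefore leaves $\mX\mY^T$ unchanged, which gives $\Shalf\Smhalf\mX\mY^T = \mX\mY^T$. I do not anticipate any real obstacle here; the only point that needs a little attention is the treatment of the zero eigenvalues of $\mS$ through the $0^{-1} = 0$ convention, which is exactly what makes $\Shalf\Smhalf$ an orthogonal projector rather than the identity, so that the lemma is genuinely a statement about $\mX\mY^T$ lying in the right range rather than a triviality.
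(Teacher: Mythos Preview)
Your proof is correct and follows essentially the same approach as the paper: both arguments establish that the columns of $\mX\mY^T$ lie in $\Ima(\mS)$ and that $\Shalf\Smhalf$ acts as the identity there. The paper reaches $\mX\mY^T \in \Ima(\mS)$ by decomposing (the columns of) $\mY^T$ along $\Ima(\mX^T)\oplus_\perp \ker(\mX)$ to write $\mX\mY^T = \mS I$ directly, whereas you go through the equality $\Ima(\mS)=\Ima(\mX)$ via a kernel argument; your route is slightly more explicit about the projector interpretation but the substance is the same.
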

\begin{proof}\label{proof:lemme1}
    Let decompose $\mY$ on $\Ima(\mX^T) \oplus_{\bot} \ker(\mX)$: $\mY=\mX^T I + \mK$.
    \[ \mX \mY^T = \mX \mX^T I + \mX K = \mX \mX^T I = \mS I \]
    Hence $\mX \mY^T \in \Ima(\mS)$, hence as $\mS_{| \Ima(\mS)}$ is invertible, we have: $\Smhalf  \Shalf \mX \mY^T = \mX \mY^T$.
\end{proof}

Continuing the demonstration from ~\Cref{122}, by applying ~\Cref{lemma:quadratic_problem}, we have:
 \begin{align}
     \frac{1}{n} \norm{\mC\mX - \mY}^2  &= \frac{1}{n}\norm{\mY}^2 - 2 \scalarp{\mN, \mC^T} + \scalarp{\mC^T, \mS\mC^T} \\
     &=\frac{1}{n}\norm{\mY}^2  + \norm{\Shalf \mC^T - \Smhalf \mN}^2 - \norm{\Smhalf \mN}^2\\
     &=\frac{1}{n}\norm{\mY}^2  + \norm{\mC\Shalf - \mN^T\Smhalf }^2 - \norm{\Smhalf \mN}^2
 \end{align}
\end{proof}

\myhidetext{
\begin{lemma}
Let $H = \max(\mB_{i, j}^t.shape)$, we define:
\begin{gather}\label{eq:pseudo_S}
    \mS := \frac{H}{n}\sum_{i = 1}^n \sum_{j = 1}^{H[+1]W[+1]} \trans{(\mB^t_{i, j})} (\mB^t_{i, j}) \in (\cm dd, \cm dd)\\ 
    \mN_m := \frac{1}{n} \sum_{i, j}^{n, H[+1]W[+1]} {\mDVp}_{i, j, m} \trans{(\mB^t_{i, j})} \in (\cm dd, d[+1]d[+1])\\
    \mN := \begin{pmatrix} \mN_1 \cdots \mN_{\cp} \end{pmatrix} \in (\cm dd, \cp d[+1]d[+1])\label{eq:pseudo_N}
\end{gather}
We have:
\begin{equation}
    \frac{1}{n} \norm{\mDE(\mA , \mOmega) - {\mDV_{proj}}}^2 \leq \norm{\Shalf \mA_F \mOmega_F - \Smhalf N}^2 - \norm{\Smhalf \mN}^2 + \frac{1}{n}\norm{{\mDV_{proj}}}^2
\end{equation}
\end{lemma}
}

\begin{proof}
    \label{proof:conv_quadratic_problem}

We have:
\begin{equation}
    \frac{1}{n}\norm{\mDE(\mA, \mOmega) - {\mDV_{proj}}}^2 
    =
    \frac{1}{n}\norm{\mDE(\mA, \mOmega)}^2 
    - \frac{2}{n}\scalarp{\mDE(\mA, \mOmega),  {\mDV_{proj}}}^2 
    + \frac{1}{n}\norm{- {\mDV_{proj}}}^2
\end{equation}


We will now simplify the first two terms separately.

\paragraph{Norm simplification}

\begin{lemma}
    \label{lemma:trace_inequality}
    For any square matrix  $\mA \in \R^{(n, n)}$,$\tr(\mA)^2 \leq\rank(\mA)\norm{\mA}^2$.
\end{lemma}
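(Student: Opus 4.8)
The plan is to prove this via the singular value decomposition combined with the Cauchy--Schwarz inequality. Write $r := \rank(\mA)$ and let $\mA = \mU \mSigma \trans{\mV}$ be an SVD, where $\mU, \mV \in \R^{(n,n)}$ are orthogonal and $\mSigma = \mathrm{diag}(\sigma_1, \dots, \sigma_n)$ with $\sigma_1 \geq \dots \geq \sigma_r > 0 = \sigma_{r+1} = \dots = \sigma_n$. In particular $\norm{\mA}^2 = \sum_{i=1}^r \sigma_i^2$.

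First I would reduce the trace to a weighted sum of singular values. By cyclicity of the trace, $\tr(\mA) = \tr(\mU\mSigma\trans{\mV}) = \tr\!\big(\mSigma \trans{\mV}\mU\big)$. Setting $\mQ := \trans{\mV}\mU$, which is orthogonal, this yields $\tr(\mA) = \sum_{i=1}^n \sigma_i\, Q_{ii} = \sum_{i=1}^r \sigma_i\, Q_{ii}$, the tail vanishing because $\sigma_i = 0$ for $i > r$. Since each column of the orthogonal matrix $\mQ$ is a unit vector, $Q_{ii}^2 \leq \sum_j Q_{ij}^2 = 1$, so $|Q_{ii}| \leq 1$ for every $i$, and therefore $|\tr(\mA)| \leq \sum_{i=1}^r \sigma_i$.

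Then Cauchy--Schwarz applied to the vectors $(\sigma_1,\dots,\sigma_r)$ and $(1,\dots,1) \in \R^r$ gives
\begin{equation*}
\tr(\mA)^2 \;\leq\; \Big(\sum_{i=1}^r \sigma_i\Big)^2 \;\leq\; \Big(\sum_{i=1}^r \sigma_i^2\Big)\Big(\sum_{i=1}^r 1\Big) \;=\; r\,\norm{\mA}^2 \;=\; \rank(\mA)\,\norm{\mA}^2 ,
\end{equation*}
which is the claimed inequality.

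There is no genuine obstacle here; the only points needing a little care are the bound $|Q_{ii}| \leq 1$, which relies on $\mQ$ being orthogonal, and the observation that only the first $r = \rank(\mA)$ singular values contribute to the sum, so that the Cauchy--Schwarz step produces the factor $\rank(\mA)$ rather than the ambient dimension $n$. (One could alternatively argue via eigenvalues using Schur's inequality $\sum_i |\lambda_i|^2 \leq \norm{\mA}^2$ and the fact that $\mA$ has at most $\rank(\mA)$ nonzero eigenvalues, but the SVD argument above is more elementary and avoids any diagonalizability subtleties.)
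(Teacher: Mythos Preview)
Your proof is correct and follows essentially the same route as the paper: reduce the trace to a sum over singular values via the SVD and cyclicity, then apply Cauchy--Schwarz to pick up the factor $\rank(\mA)$. The paper uses the truncated SVD and applies Cauchy--Schwarz directly to the Frobenius inner product $\langle \mV\mU,\Sigma\rangle$, whereas you use the full SVD and pass through the intermediate bound $|\tr(\mA)|\leq\sum_{i=1}^{r}\sigma_i$ before Cauchy--Schwarz; these are cosmetic differences in the same argument.
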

\begin{proof}
    Using the truncated SVD we have $\mA = \mU \Sigma \mV$ with $\Sigma$ a diagonal and $\mU \in \R^{(n, \rank(\mA))}, \mV \in \R^{(\rank(\mA), n)}$ truncated orthonormal matrices.

    We have:

\begin{align}
\tr(\mA)^2 
&= \tr(\mU \Sigma \mV)^2 \\
&= \tr(\mV\mU \Sigma )^2 \\
&= \scalarp{\mV\mU, \Sigma}^2 \\
(\text{Cauchy-Swarz})\quad&\leq \norm{\mV\mU}^2 \norm{\Sigma}^2 
 \end{align}

 As $\mU, \mV$ are truncated orthonormal matrices, we have:
$$\norm{\mV\mU}^2 = \tr(\mU^T\mV^T\mV\mU) = \tr(\mU^T\mU) = \tr(I_{\rank(\mA)}) = \rank(\mA)$$

Hence:
\begin{align}
\tr(\mA)^2
&\leq \rank(\mA) \norm{\Sigma}^2
\end{align}

As $\mU, \mV$ are truncated orthonormal matrices, we have:
$$\norm{\Sigma}^2 = \tr(\Sigma^T \Sigma) = \tr((\mV \Sigma \mU) \mU \Sigma \mV) = \tr(\mA^T \mA) = \norm{\mA}^2$$

We conclude that:
\begin{align}
\tr(\mA)^2
&\leq \rank(\mA) \norm{\mA}^2
\end{align}

\end{proof}

\begin{lemma}
    \label{lemma:aux_q1}
    For $\mM \in (m, n), (\vu_k)_{k \in[[K]]} \in (m)^K, (\vv_k)_{k \in[[K]]} \in (n)^K$ and with $\mW :=  \sum_{k \in \bbrack{K}} \vv_k \vu_k^T \in (n, m)$ we have:
    \begin{equation}
        \norm{\sum_{k \in \bbrack{K}} \vu_k^T\mM \vv_k }^2 = \trace{\mM \mW}^2
    \end{equation}
\end{lemma}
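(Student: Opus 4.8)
The identity to establish is between two scalars: the left-hand side $\sum_{k\in\bbrack{K}} \vu_k^T\mM\vv_k$ is a sum of $1\times 1$ quantities, and the right-hand side $\trace{\mM\mW}$ is the trace of an $m\times m$ matrix (since $\mW = \sum_k \vv_k\vu_k^T \in \R^{n\times m}$ and $\mM \in \R^{m\times n}$). The plan is therefore to prove the cleaner unsquared identity $\sum_{k\in\bbrack{K}} \vu_k^T\mM\vv_k = \trace{\mM\mW}$ and then square both sides, which gives the stated equality because $\norm{\cdot}$ applied to a scalar is just its absolute value.

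First I would note that each summand $\vu_k^T\mM\vv_k$, being a scalar, equals the trace of the corresponding $1\times 1$ matrix, so $\vu_k^T\mM\vv_k = \trace{\vu_k^T(\mM\vv_k)}$. Applying the cyclic invariance of the trace moves the factor $\vu_k^T$ to the back: $\trace{\vu_k^T(\mM\vv_k)} = \trace{(\mM\vv_k)\vu_k^T} = \trace{\mM\,\vv_k\vu_k^T}$, where now $\mM\,\vv_k\vu_k^T \in \R^{m\times m}$. Summing over $k$ and using linearity of the trace together with the definition $\mW := \sum_{k\in\bbrack{K}} \vv_k\vu_k^T$ yields $\sum_k \vu_k^T\mM\vv_k = \trace{\mM\sum_k \vv_k\vu_k^T} = \trace{\mM\mW}$.

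Squaring this scalar identity gives $\norm{\sum_{k\in\bbrack{K}} \vu_k^T\mM\vv_k}^2 = \trace{\mM\mW}^2$, which is the claim. There is no genuine obstacle; the only point requiring a bit of care is keeping track of the matrix shapes so that each trace rearrangement is valid (in particular that $\mM\,\vv_k\vu_k^T$ and $\mM\mW$ are square $m\times m$ matrices), but this is routine.
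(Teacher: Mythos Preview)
Your proof is correct and follows essentially the same approach as the paper: both arguments use that each $\vu_k^T\mM\vv_k$ is a scalar (hence equal to its own trace), apply cyclic invariance and linearity of the trace to obtain $\sum_k \vu_k^T\mM\vv_k = \trace{\mM\mW}$, and then square. The only cosmetic difference is that you apply the trace term-by-term before summing, whereas the paper sums first and then takes the trace.
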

\begin{proof}
    Let $i \in I$:

    \begin{align}
        \norm{\sum_{k \in \bbrack{K}} \vv\vu_k^T\mM\vv_k}^2 
        &= \left(\sum_{k \in \bbrack{K}} \vu_k^T \mM \vv_k\right)^2 \\ 
        &= \trace{\sum_{k \in \bbrack{K}} \vu_k^T \mM \vv_k}^2 \\
        &= \trace{\mM \sum_{k \in \bbrack{K}} \vv_k \vu_k^T}^2 \\
        &= \trace{\mM \mW}^2
    \end{align}
\end{proof}

\begin{lemma}
    \label{lemma:aux_q2}
    For $(\mM_i)_{i\in I} \in (m, n)^I$ such that $\forall i \in I, \rank(\mM_i \mW) \leq H$ and with $\mW \in (n, m)$ we have:
    \begin{equation}
        \sum_{i \in I} \trace{\mM_i \mW}^2  \leq  \scalarp{\mW, H \sum_{i \in I} \trans{\mM_i} \mM_i \mW}
    \end{equation}
\end{lemma}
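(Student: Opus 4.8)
The plan is to reduce the inequality to a pointwise application of \Cref{lemma:trace_inequality} followed by the standard identity that expresses a Frobenius norm as a trace scalar product. First I would note that, since $\mM_i$ has shape $(m,n)$ and $\mW$ has shape $(n,m)$, the product $\mM_i \mW$ is a \emph{square} matrix of shape $(m,m)$, so \Cref{lemma:trace_inequality} is applicable to it and gives, for each $i \in I$,
\[
\trace{\mM_i \mW}^2 \;\leq\; \rank(\mM_i \mW)\,\norm{\mM_i \mW}^2 \;\leq\; H\,\norm{\mM_i \mW}^2 ,
\]
where the last step is exactly where the hypothesis $\rank(\mM_i \mW) \leq H$ is used, replacing the instance-dependent rank by the uniform bound $H$. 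Summing over $i \in I$ then yields $\sum_{i \in I} \trace{\mM_i \mW}^2 \leq H \sum_{i \in I} \norm{\mM_i \mW}^2$.

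Next I would rewrite each remaining term using cyclicity of the trace, namely $\norm{\mM_i \mW}^2 = \trace{\trans{\mW}\trans{\mM_i}\mM_i \mW} = \scalarp{\mW, \trans{\mM_i}\mM_i \mW}$, and then push the sum inside the scalar product by linearity in the second argument:
\[
H \sum_{i \in I} \scalarp{\mW, \trans{\mM_i}\mM_i \mW} \;=\; \scalarp{\mW,\; H\sum_{i \in I} \trans{\mM_i}\mM_i \mW},
\]
which is precisely the asserted bound.

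The only points requiring a little care are verifying that $\mM_i \mW$ is square so that \Cref{lemma:trace_inequality} legitimately applies, and invoking the rank hypothesis at the right moment (to uniformize the rank factor before summing). Beyond that there is no analytic obstacle: once \Cref{lemma:trace_inequality} is granted, the whole argument is elementary trace bookkeeping, so I do not expect any genuinely hard step.
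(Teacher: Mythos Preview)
Your proposal is correct and mirrors the paper's own proof essentially line for line: apply \Cref{lemma:trace_inequality} to each $\mM_i\mW$, bound the rank by $H$, sum, and rewrite $\norm{\mM_i\mW}^2$ as $\scalarp{\mW,\trans{\mM_i}\mM_i\mW}$ before pulling the sum inside by linearity. There is nothing to add.
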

\begin{proof}
    Let $i \in I$:

    Using \Cref{lemma:trace_inequality} with $\mA \gets \mM_i \mW$
    \begin{align}
        \trace{\mM_i \mW}^2 &\leq \rank(\mM_i\mW)||\mM_i \mW||^2\\
        &\leq H \norm{\mM_i \mW}^2 \quad H := \min(\mM_i.shape)
    \end{align}

    Hence we have:
    \begin{align}
        \sum_{i \in I} \norm{\sum_{k \in \bbrack{K}} \vu_k^T\mM_i\vv_k}^2 
        &\leq H\sum_{i \in I} ||\mM_i \mW||^2 \\
        &= H\sum_{i \in I} \scalarp{\mM_i \mW, \mM_i \mW} \\
        &= H\sum_{i \in I} \scalarp{\mW, \trans{\mM_i} \mM_i \mW} \\
        &= H\scalarp{\mW, \sum_{i \in I} \trans{\mM_i} \mM_i \mW}
    \end{align}
\end{proof}

Using first~\Cref{lemma:aux_q1} with $\vu_k  \gets \vomega_{k, m}$, $\vv_k \gets \valpha_k$ and $\mM \gets \mB^t_{i, j}$, we have:

\begin{align}
    \frac{1}{n}\norm{\mDE(\mA, \mOmega)}^2  
    &=  \frac{1}{n}\sum_{m}^{\cp}\sum_{i, j}^{n, H[+1]W[+1]} \norm{\sum_k^K \vomega_{k, m}^T (\mB^t_{i, j}) \valpha _k}^2 \\
    &=  \frac{1}{n}\sum_{m}^{\cp}\sum_{i, j}^{n, H[+1]W[+1]} \trace{\mB^t_{i, j} \sum_k^K \valpha_k \transp{\vomega_{k, m}}}^2 \\
    &=  \frac{1}{n}\sum_{m}^{\cp}\sum_{i, j}^{n, H[+1]W[+1]} \trace{\mB^t_{i, j} \AlphaF \mOmegam}^2
\intertext{Using~\Cref{lemma:aux_q2} with $i \gets (i, j)$, $\mM_i \gets \mB^t_{i, j}$ and $\mW \gets \AlphaF \mOmegam$:}
    &\leq  \sum_{m}^{\cp} \scalarp{\AlphaF \mOmegam\;,\; \frac{r}{n} \sum_{i, j}^{n, H[+1]W[+1]} \trans{(\mB^t_{i, j})} (\mB^t_{i, j}) \AlphaF \Omega[m]_F} \\
    &= \sum_{m}^{\cp} \scalarp{\mA_F \mOmegam, \mS \mA_F \mOmegam} \label{eq:approxS}\\
    &= \scalarp{\mA_F \mOmega_F, \mS \mA_F \mOmega_F}
\end{align}

\paragraph{Scalar product simplification}

\begin{lemma}
    \label{lemma:aux_l}
    For $\mM \in (m, n), \vu \in (m), \vv \in (n)$ we have:
    \begin{equation}
       \vu^T \mM \vv = \scalarp{\vv \vu^T, \trans{\mM}}
    \end{equation}
\end{lemma}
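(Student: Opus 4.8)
## Proof plan for Lemma \ref{lemma:aux_l}

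The plan is to unfold both sides as sums over indices and check they coincide. This is a purely mechanical computation, so the ``main obstacle'' is really just bookkeeping: being careful about which index runs over the rows of $\mM$ and which over the columns, and matching the trace-scalar-product convention $\scalarp{\mA, \mB} = \tr(\trans{\mA}\mB)$ used throughout the appendix.

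First I would write the left-hand side in coordinates: for $\mM \in \R^{(m,n)}$, $\vu \in \R^m$, $\vv \in \R^n$, we have $\vu^T \mM \vv = \sum_{a=1}^m \sum_{b=1}^n u_a \, M_{ab} \, v_b$. Next I would expand the right-hand side. The outer product $\vv\vu^T$ is the $(n,m)$ matrix with entries $(\vv\vu^T)_{ba} = v_b u_a$, and $\trans{\mM}$ is the $(n,m)$ matrix with entries $(\trans{\mM})_{ba} = M_{ab}$. Therefore, using $\scalarp{\mA,\mB} = \sum_{b,a} A_{ba} B_{ba}$ for matrices of the same shape,
\begin{equation*}
\scalarp{\vv\vu^T, \trans{\mM}} = \sum_{b=1}^n \sum_{a=1}^m (\vv\vu^T)_{ba} (\trans{\mM})_{ba} = \sum_{b=1}^n \sum_{a=1}^m v_b u_a M_{ab},
\end{equation*}
which is exactly the expression obtained for the left-hand side. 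Hence the two sides are equal.

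Alternatively, and perhaps more in the spirit of the other lemmas in this section, one can argue with the trace directly: $\vu^T\mM\vv$ is a scalar, so it equals its own trace, and by cyclicity of the trace $\tr(\vu^T \mM \vv) = \tr(\mM \vv \vu^T) = \tr\big(\trans{(\trans{\mM})}\,(\vv\vu^T)\big) = \scalarp{\vv\vu^T, \trans{\mM}}$. I would present whichever of these is shorter; either way the proof is a one-liner and I do not anticipate any genuine difficulty beyond keeping the transpose placements straight.
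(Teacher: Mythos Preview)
Your proposal is correct. Your trace-based alternative is essentially the paper's argument: the paper notes that the scalar equals its transpose, rewrites $\vu^T\mM\vv = \vv^T\mM^T\vu = \scalarp{\vv,\mM^T\vu}$, and then pulls $\vu$ across to get $\scalarp{\vv\vu^T,\mM^T}$, which amounts to the same cyclicity/transpose manipulation you describe; the coordinate computation is a fine alternative.
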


\begin{proof}
    \begin{align}
        \vu^T \mM \vv 
        &= \trans{(\vu^T \mM v )} \\
        &= \trans{\vv} \trans{\mM} \vu \\
        &= \scalarp{\vv, \trans{\mM} \vu} \\
        &= \scalarp{\vv \vu^T, \trans{\mM}} 
    \end{align}

\end{proof}

We have:
\begin{align}
    \frac{1}{n}\scalarp{\mDE(\mA, \mOmega),  {\mDV_{proj}}}^2
    &= \frac{1}{n}\sum_{m}^{\cp}\sum_{i, j}^{n, H[+1]W[+1]} \sum_k^K \vomega_{k, m}^T \mB^t_{i, j} \valpha _k  {\mDV_{proj}}_{i, j, m}\\
    ({\mDV_{proj}}_{i, j, m} \in (1))\quad&= \sum_{m}^{\cp} \sum_k^K \vomega_{k, m}^T \frac{1}{n} \sum_{i, j}^{n, H[+1]W[+1]}(\mB^t_{i, j} {\mDV_{proj}}_{i, j, m}) \valpha _k  \\
\intertext{Using~\Cref{lemma:aux_l} with $\mM\gets \sum_{i, j}^{n, H[+1]W[+1]} \mDVp_{i, j, m} \trans{(\mB^t_{i, j})}$}
    &= \sum_{m}^{\cp}  \scalarp{ \sum_k^K \valpha _k\vomega_{k, m}^T, \frac{1}{n}\sum_{i, j}^{n, H[+1]W[+1]} {\mDV_{proj}}_{i, j, m} \trans{(\mB^t_{i, j})}  } \\
    &= \sum_{m}^{\cp}  \scalarp{\mA_F \mOmegam, \mN_m } \\
    &= \scalarp{ \mA_F \mOmega_F, \mN}
\end{align}

\paragraph{Conclusion}

In total, we get:
\begin{equation}
    \frac{1}{n} \norm{\mDE(\mA , \mOmega) - {\mDV_{proj}}}^2 \leq \scalarp{\mA_F \mOmega_F, \mS \mA_F \mOmega_F} - 2 \scalarp{\mA_F \mOmega_F, \mN } + \frac{1}{n}\norm{\mDV_{proj}}^2
\end{equation}

If we suppose that $\mS$ is invertible, we can apply~\Cref{lemma:quadratic_problem} and get the result.

\end{proof}
\rule{\linewidth}{.5pt}
\begin{prop}
Solving \ref{eq:probform} 
is equivalent to minimizing the loss $\Loss$ at order one in $\textcolor{magenta}{\mDE^{l}}$. Furthermore performing 
an update of architecture with $\gamma \delta \mW^*$ (\ref{eq:dwl}) and
a neuron addition with $\gamma {\theta_{\leftrightarrow}^K}^*$ (\ref{prop2}), has an impact on the loss at first order in 
$\gamma$ as :
\begin{align} 
    \Loss(f_{\theta \oplus \theta_{\leftrightarrow}^K})& =  
    \Loss(f_{\theta})- \gamma \left(\sigma_{l-1}'(0)\Delta_{\theta_{\leftrightarrow}^{K, *}} + \Delta_{\delta \mW^*}\right) + o(\gamma)
\end{align}
with 
\begin{align}
    \Delta_{\theta_{\leftrightarrow}^{K, *}} &:=\frac{1}{n}\left\langle  \textcolor{teal}{\mDVl_{proj}},\;  \textcolor{magenta}{\mDE^{l}(\theta_{\leftrightarrow}^{K, *})} \right\rangle_{\Tr} = \sum_{k=1}^K\lambda_k^2\\
    \Delta_{\delta \mW^*} &:= \frac{1}{n}\left\langle  \textcolor{teal}{\mDVl},\;  \textcolor{magenta}{\mDE^{l}(\delta \mW^*)}\right\rangle_{\Tr} \geq 0
\end{align}
\end{prop}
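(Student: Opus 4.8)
The plan is to perform a first-order Taylor expansion of the per-sample loss $\ell(f_{\theta \oplus \gamma\theta_{\leftrightarrow}^{K,*}}(\vx_i), \vy_i)$ in the small parameter $\gamma$, and to identify the linear term with the two scalar products claimed. First I would note that the combined update $\theta \oplus \gamma\theta_{\leftrightarrow}^{K,*}$ changes the pre-activation functions $\va_l$ at layer $l$ by exactly $\gamma\,\textcolor{magenta}{\mDE^l(\delta\mW^*)} + \gamma\sigma_{l-1}'(0)\,\textcolor{magenta}{\mDE^l(\theta_{\leftrightarrow}^{K,*})} + o(\gamma)$: the first contribution comes from updating the existing weights $\mW_l$ by $\gamma\delta\mW^*$ (linear, exact, as $\textcolor{magenta}{\mDE^l(\delta\mW)} = \delta\mW\,\mB_{l-1}$), and the second comes from the $K$ freshly added neurons, whose effect on $\va_l$ is $\sum_k \vomega_k\,\sigma_{l-1}(\valpha_k^T\vb_{l-2})$, which at first order in the (small, $\propto\sqrt{\gamma}$) weights equals $\sigma_{l-1}'(0)\sum_k\vomega_k\valpha_k^T\vb_{l-2} = \sigma_{l-1}'(0)\,\textcolor{magenta}{\mDE^l(\theta_{\leftrightarrow}^{K,*})}$ — this is exactly the small-weight approximation of Appendix~\ref{RQ:DE}. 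Crucially, because $\valpha_k^*$ and $\vomega_k^*$ are each $\propto\sqrt{\gamma}$ after scaling, cross terms are $O(\gamma^{3/2})$ and drop out, so the two contributions simply add at order $\gamma$.

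Next I would push this perturbation through the rest of the network and the loss: by the chain rule and the definition of $\textcolor{teal}{\vDVl} = -\nabla_{\va_l}\Loss(f_\theta)$ (the quantity backpropagation computes), a functional perturbation $\delta\va_l$ of the layer-$l$ pre-activations changes the loss by $-\frac{1}{n}\langle \textcolor{teal}{\mDVl}, \delta\mA_l\rangle_{\Tr} + o(\|\delta\mA_l\|)$. Substituting $\delta\mA_l = \gamma\,\textcolor{magenta}{\mDE^l(\delta\mW^*)} + \gamma\sigma_{l-1}'(0)\,\textcolor{magenta}{\mDE^l(\theta_{\leftrightarrow}^{K,*})}$ gives
\begin{align*}
\Loss(f_{\theta \oplus \gamma\theta_{\leftrightarrow}^{K,*}}) = \Loss(f_\theta) - \gamma\Big(\sigma_{l-1}'(0)\tfrac{1}{n}\langle\textcolor{teal}{\mDVl}, \textcolor{magenta}{\mDE^l(\theta_{\leftrightarrow}^{K,*})}\rangle_{\Tr} + \tfrac{1}{n}\langle\textcolor{teal}{\mDVl}, \textcolor{magenta}{\mDE^l(\delta\mW^*)}\rangle_{\Tr}\Big) + o(\gamma).
\end{align*}
For the $\delta\mW^*$ term this is already the definition of $\Delta_{\delta\mW^*}$; it is $\geqslant 0$ because $\delta\mW^*\mB_{l-1}$ is the orthogonal projection of $\textcolor{teal}{\mDVl}$ onto the span of the rows of $\mB_{l-1}$ (Proposition~\ref{prop1}), so $\langle\textcolor{teal}{\mDVl}, \textcolor{magenta}{\mDE^l(\delta\mW^*)}\rangle_{\Tr} = \|\textcolor{magenta}{\mDE^l(\delta\mW^*)}\|_{\Tr}^2 \geqslant 0$. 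For the neuron term I would replace $\textcolor{teal}{\mDVl}$ by $\textcolor{teal}{\mDVl_{proj}} = \textcolor{teal}{\mDVl} - \textcolor{magenta}{\mDE^l(\delta\mW^*)}$ in the scalar product: this is legitimate because $\textcolor{magenta}{\mDE^l(\theta_{\leftrightarrow}^{K,*})}$ lives in the span of the rows of $\mB_{l-2}$ (times $\vomega$'s) and, by the construction in Proposition~\ref{prop2} via the SVD of $\Smhalf\mN$ where $\mN$ is built from $\textcolor{teal}{\mDVl_{proj}}$, is orthogonal to $\textcolor{magenta}{\mDE^l(\delta\mW^*)}$ for the trace inner product — hence $\langle\textcolor{teal}{\mDVl}, \textcolor{magenta}{\mDE^l(\theta_{\leftrightarrow}^{K,*})}\rangle_{\Tr} = \langle\textcolor{teal}{\mDVl_{proj}}, \textcolor{magenta}{\mDE^l(\theta_{\leftrightarrow}^{K,*})}\rangle_{\Tr} =: \Delta_{\theta_{\leftrightarrow}^{K,*}}$. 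Finally, evaluating $\langle\textcolor{teal}{\mDVl_{proj}}, \textcolor{magenta}{\mDE^l(\theta_{\leftrightarrow}^{K,*})}\rangle_{\Tr}$ with the explicit optimal weights $(\valpha_k^*,\vomega_k^*) = (\sqrt{\lambda_k}\Smhalf\vu_k, \sqrt{\lambda_k}\vv_k)$ reduces, after the change of variable $\tilde{\mA} = \Shalf\mA$ used in the proof of Proposition~\ref{prop2}, to $\sum_k \lambda_k^2$, because $\textcolor{magenta}{\mDE^l(\theta_{\leftrightarrow}^{K,*})}$ corresponds to the rank-$K$ truncation $\sum_{k\leqslant K}\lambda_k\vu_k\vv_k^T$ of $\Smhalf\mN$ and its trace-inner-product against $\mN$ (i.e.\ against $\textcolor{teal}{\mDVl_{proj}}$ repackaged) gives $\sum_k\lambda_k^2$; alternatively this follows immediately from Equation~\eqref{eq:phi_lambdas} together with $\Delta_{\theta_{\leftrightarrow}^{K,*}} = \tfrac12(\Psi^l_\theta - \Psi^l_{\theta\oplus\theta_{\leftrightarrow}^{K,*}})$-type bookkeeping of the quadratic expansion.

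The first sentence of the proposition — that solving~\eqref{eq:probform} with $\textcolor{teal}{\mDVl_{proj}}$ is equivalent to minimizing $\Loss$ at order one in $\gamma\textcolor{magenta}{\mDE^l}$ — then follows by comparing the minimization in~\eqref{eq:probform}, namely $\min\frac1n\|\textcolor{magenta}{\mDE^l(\theta_{\leftrightarrow}^K)} - \textcolor{teal}{\mDVl_{proj}}\|_{\Tr}^2 = \min\frac1n(\|\textcolor{magenta}{\mDE^l}\|^2 - 2\langle\textcolor{magenta}{\mDE^l}, \textcolor{teal}{\mDVl_{proj}}\rangle + \|\textcolor{teal}{\mDVl_{proj}}\|^2)$, with the order-$\gamma$ loss decrease $-\gamma\sigma_{l-1}'(0)\frac1n\langle\textcolor{teal}{\mDVl_{proj}}, \textcolor{magenta}{\mDE^l(\theta_{\leftrightarrow}^K)}\rangle$: at the scale where $\|\textcolor{magenta}{\mDE^l}\|^2$ is second-order, maximizing the linear loss gain and minimizing the quadratic distance to $\textcolor{teal}{\mDVl_{proj}}$ coincide, which is precisely Proposition~\ref{prop:equi_norm_prdscal} applied here. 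I expect the main obstacle to be the careful bookkeeping of orders in $\gamma$ — in particular justifying that the cross terms between $\delta\mW^*$ and the new neurons, and the higher-order terms in the activation expansion $\sigma_{l-1}(\valpha_k^T\vb_{l-2}) = \sigma_{l-1}'(0)\valpha_k^T\vb_{l-2} + O(\|\valpha_k\|^2)$, are genuinely $o(\gamma)$ once the scaling $(\valpha_k,\vomega_k) = (\sqrt{\gamma}\valpha_k^*, \sqrt{\gamma}\vomega_k^*)$ is in force — and the orthogonality argument letting one swap $\textcolor{teal}{\mDVl}$ for $\textcolor{teal}{\mDVl_{proj}}$ in the neuron scalar product, which relies on the specific SVD structure from Proposition~\ref{prop2}.
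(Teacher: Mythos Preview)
Your overall scaffolding---Taylor-expand the loss in $\gamma$, identify the two contributions $\gamma\,\mDE^l(\delta\mW^*)$ and $\gamma\sigma_{l-1}'(0)\,\mDE^l(\theta_{\leftrightarrow}^{K,*})$ to the pre-activation change, pair against $\mDV^l=-\nabla_{\va_l}\Loss$, and evaluate $\langle\mDVl_{proj},\mDE^l(\theta_{\leftrightarrow}^{K,*})\rangle=\sum_k\lambda_k^2$ via the SVD formulas---matches the paper. The evaluation of $\sum_k\lambda_k^2$ is exactly the content of the paper's Lemma~\ref{lemma:scalarp_value}, proved there by expanding $\frac1n\|\mDE-\mDVp\|^2$ and using $\frac1n\|\mDE(\mA^*,\mOmega^*)\|^2=\|\Lambda_K\|^2$.

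There is, however, a genuine gap in your swap $\langle\mDVl,\mDE^l(\theta_{\leftrightarrow}^{K,*})\rangle=\langle\mDVl_{proj},\mDE^l(\theta_{\leftrightarrow}^{K,*})\rangle$. You justify this by claiming that $\mDE^l(\theta_{\leftrightarrow}^{K,*})$ is orthogonal to $\mDE^l(\delta\mW^*)$ ``by the construction in Proposition~\ref{prop2} via the SVD of $\Smhalf\mN$''. This orthogonality does \emph{not} follow from the SVD structure: $\mDE^l(\delta\mW^*)$ lies in the row span of $\mB_{l-1}$ while $\mDE^l(\theta_{\leftrightarrow}^{K,*})$ lies in the row span of $\mB_{l-2}$, and these spans are in general not orthogonal (this is precisely the point of Appendix~\ref{append:equiv}). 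The SVD only guarantees orthogonality \emph{among} the new neurons' contributions, not between them and the already-existing update.

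The paper avoids this issue altogether. It uses the version of the statement in the main body (Proposition~\ref{prop:decrease_loss}), where one solves~\eqref{eq:probform} with $\mDVl_{proj}=\mDVl-\mDE^l(\gamma\,\delta\mW^*)$, i.e.\ the best-update contribution is itself scaled by $\gamma$. Then the algebraic identity
\[
\langle\mDVl,\delta\va^l_a\rangle=\langle\mDVl-\gamma\,\mDE^l(\delta\mW^*),\delta\va^l_a\rangle+\gamma\,\langle\mDE^l(\delta\mW^*),\delta\va^l_a\rangle
\]
makes the cross term carry an explicit extra factor of $\gamma$, so once multiplied by the outer $\gamma$ it is $O(\gamma^2)=o(\gamma)$ and can be absorbed---no orthogonality needed. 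Your argument would go through verbatim if you replace the orthogonality claim by this order-counting trick.
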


\rule{\linewidth}{.5pt}
To prove such proposition we use the following lemma :
\begin{lemma}
    \label{lemma:scalarp_value}
    We note $\mDE (\mA, \mOmega)$ the result of $\mB$ after applying the layers parameterized by $\mA$ and $\mOmega$. 
    Let us consider 
    $\mDE (\mA^*, \mOmega^*)$ where $\mA^*$ and 
    $\mOmega^*$ 
    are
    defined in \ref{prop2}. Then:
    \begin{equation}
       \frac{1}{n} \scalarp{\mDVp, \mDE (\mA^*, \mOmega^*)} = \norm{\Lambda_{K}}^2
    \end{equation}
\end{lemma}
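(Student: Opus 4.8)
The plan is to use the explicit form of the optimal neurons given by \cref{prop2} — namely $\mA^{*}=\Smhalf\mU_K\sqrt{\Lambda_K}$ and $\mOmega^{*}=\trans{\mV_K}\sqrt{\Lambda_K}$, equivalently $\vomega_k^{*}=\sqrt{\lambda_k}\,\vv_k$ — together with the linear-layer identity $\mDE(\mA,\mOmega)=\mOmega\trans{\mA}\mB$ established in the proof of \cref{prop2}, and to reorganise the trace scalar product so that the matrix $\mN=\frac1n\mB\trans{\mDVp}$ surfaces. I will carry the computation out for fully-connected layers; the convolutional case follows by the same reduction that replaces $\mOmega\trans{\mA}$ by the matrix $\mF$ in the proof of \cref{prop2}.

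First I would make $\mN$ appear. Writing $\mDE^{*}:=\mDE(\mA^{*},\mOmega^{*})=\mOmega^{*}\trans{(\mA^{*})}\mB$ and using cyclicity of the trace,
\begin{equation*}
\tfrac1n\scalarp{\mDVp,\mDE^{*}}
=\tfrac1n\trace{\trans{\mDVp}\,\mOmega^{*}\trans{(\mA^{*})}\mB}
=\trace{\big(\tfrac1n\mB\trans{\mDVp}\big)\mOmega^{*}\trans{(\mA^{*})}}
=\trace{\trans{(\mA^{*})}\,\mN\,\mOmega^{*}} .
\end{equation*}
Then, since $\mS$ is symmetric positive semi-definite, $\Smhalf$ is symmetric, so $\trans{(\mA^{*})}=\sqrt{\Lambda_K}\,\trans{\mU_K}\,\Smhalf$ and
\begin{equation*}
\trans{(\mA^{*})}\,\mN\,\mOmega^{*}
=\sqrt{\Lambda_K}\;\trans{\mU_K}\,\big(\Smhalf\mN\big)\,\trans{\mV_K}\,\sqrt{\Lambda_K}.
\end{equation*}
Finally I would insert the SVD $\Smhalf\mN=\mU\Lambda\mV=\sum_{k=1}^{R}\lambda_k\vu_k\trans{\vv_k}$: orthonormality of the columns of $\mU$ gives $\trans{\mU_K}\,\mU\Lambda\mV=\Lambda_K\mV_K$, and orthonormality of the rows of $\mV$ gives $\mV_K\trans{\mV_K}=\mI$, whence $\trans{(\mA^{*})}\,\mN\,\mOmega^{*}=\sqrt{\Lambda_K}\,\Lambda_K\,\sqrt{\Lambda_K}=\Lambda_K^{2}$, and taking the trace yields $\tfrac1n\scalarp{\mDVp,\mDE^{*}}=\trace{\Lambda_K^{2}}=\sum_{k=1}^{K}\lambda_k^{2}=\norm{\Lambda_K}^{2}$.

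I do not expect a genuine obstacle; the only point requiring care is keeping the transpose conventions on $\mA^{*},\mOmega^{*},\mV_K$ consistent with $\mDE(\mA,\mOmega)=\mOmega\trans{\mA}\mB$, so that all shapes match. As an alternative that avoids reintroducing the SVD, one can use the polarisation identity $\tfrac1n\scalarp{\mDVp,\mDE^{*}}=\tfrac12\big(\tfrac1n\norm{\mDE^{*}}^{2}+\tfrac1n\norm{\mDVp}^{2}-\tfrac1n\norm{\mDE^{*}-\mDVp}^{2}\big)$, evaluate the last term by \cref{lemma:optimal_neurons} (with equality in the linear case), and compute $\tfrac1n\norm{\mDE^{*}}^{2}=\scalarp{\mA^{*}\trans{(\mOmega^{*})},\,\mS\,\mA^{*}\trans{(\mOmega^{*})}}=\norm{\Shalf\mA^{*}\trans{(\mOmega^{*})}}^{2}=\norm{\mU_K\Lambda_K\mV_K}^{2}=\norm{\Lambda_K}^{2}$, using $\Shalf\Smhalf\mU_K=\mU_K$ since the columns of $\mU_K$ lie in $\Ima(\mS)$; both routes give the value $\norm{\Lambda_K}^{2}$.
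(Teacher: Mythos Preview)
Your proposal is correct. Interestingly, what you label the ``alternative'' route --- polarisation combined with \cref{lemma:optimal_neurons} and a direct evaluation of $\tfrac1n\norm{\mDE^*}^2$ --- is exactly the paper's own proof: expand the squared norm from \cref{lemma:optimal_neurons}, invoke $\tfrac1n\norm{\mDE^*}^2=\norm{\Lambda_K}^2$ ``by construction'', and isolate the cross term.

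Your main route is genuinely different: it bypasses \cref{lemma:optimal_neurons} altogether, pushes the trace cyclically until $\mN=\tfrac1n\mB\trans{\mDVp}$ appears, substitutes the explicit $\mA^*,\mOmega^*$, and reads off $\Lambda_K^2$ directly from SVD orthogonality. The paper's path is shorter because it recycles an already-proved lemma; your direct computation is self-contained and shows explicitly where each $\lambda_k^2$ originates, at the cost of the transpose bookkeeping you flag. One small caveat: in the convolutional case the paper only establishes an inequality (\cref{lemma:conv_quadratic_problem}), so your remark that it ``follows by the same reduction'' would need the linear-case equality --- but the paper's own proof of this lemma is equally restricted to the fully-connected setting, so this is not a gap in your argument.
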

\begin{proof}
    Starting from \cref{lemma:optimal_neurons} 
    \begin{equation}
        \frac{1}{n}\norm{\mDE (\mA^*, \mOmega^*) - \mDVp}^2 = -\norm{\Lambda_{K}}^2 + \frac{1}{n}\norm{\mDVp}^2
    \end{equation}
    Hence by developing the norm, we have:
    \begin{equation}
        \frac{1}{n}\norm{\mDE (\mA^*, \mOmega^*)}^2 -\frac{2}{n}\scalarp{\mDE (\mA^*, \mOmega^*), \mDVp} = -\norm{\Lambda_{K}}^2
    \end{equation}

    Moreover by construction we have $\frac{1}{n}\norm{\mDE (\mA^*, \mOmega^*)}^2 = \norm{\Lambda_K}^2$ and therefore we get:
    \begin{equation}
        -\frac{2}{n}\scalarp{\mDE (\mA^*, \mOmega^*), \mDVp} = -2\norm{\Lambda_{K}}^2
    \end{equation}
    which concludes the proof.
\end{proof}

We now prove the main proposition. Suppose that each quantity is added to the architecture with an amplitude factor $\gamma$ \textit{i.e.} the best update is then $\gamma\, \delta \mW^*$ and the new neurons are $\{\sqrt{\gamma}\valpha_i^*, \sqrt{\gamma}\vomega_i^*\}_i$.

Using the Fréchet derivative on $\gamma$, we have the following:

\begin{equation}\label{eq:dev_loss}
    \loss(\va^l + \gamma\,\delta \va^l) = \loss(\va^l) + \scalarp{\nabla_{\va^l}\mathcal{L}, \gamma\,\delta \va^l}+ o(\gamma)
\end{equation}

On one hand, performing an update of architecture, ie $\mW^* \xleftarrow{} \mW + \gamma\,\delta \mW^*$, changes the activation function $\va^l$ by $\gamma\,\delta \va^l_u := \mV(\gamma\,\delta \mW^*)$. Then, 
as explained in \Cref{par:small_weighs_approx}, adding neurons $(\mA^*, \mOmega^*)$ at layer $l-1$ changes the activation function $\va^l$ by :
\begin{equation}\label{eq:dev_al}
    \gamma\;\delta \va^l_a = \sigma'(0) \,\gamma\,\mDE (\mA^*, \mOmega^*) +  o(\gamma) \; .
\end{equation}
We now suppose $\delta \va^l_u \ne -\delta \va^l_a$ and perform a first order development in $\gamma$. Then combining Equations (\ref{eq:dev_loss}) and (\ref{eq:dev_al}), we have :

\begin{equation}
    \loss(\mA^*, \mOmega^*) = \loss +  \scalarp{\nabla_{\va^l}\mathcal{L}, \gamma\left(\delta \va^l_u + \delta \va^l_a\right)}+ o(\gamma) \; .
\end{equation}
Using that $\vDV(\vx_i) := - \nabla_{\va^l(\vx_i)}\ell(\vx_i)$
and that $\mathcal{L} = \frac{1}{n}\sum_i\ell(\vx_i)$,
it follows that :
\begin{align}
    \loss(\mA^*, \mOmega^*) &= \loss - \frac{1}{n}\scalarp{\mDV,  \gamma\left(\delta \va^l_u + \delta \va^l_a\right)}+ o(\gamma) \\
    &= \loss -  \frac{\gamma}{n}\Big(\scalarp{\mDV, \delta \va^l_u} + \scalarp{\mDV - \mV(\gamma \delta \mW^*),  \delta \va^l_a} + \gamma\scalarp{\delta \va^l_u, \delta \va^l_a} \Big) + o(\gamma) \; .
\end{align}

Using \ref{lemma:scalarp_value} we have :
\begin{equation}
    \loss(\mA^*, \mOmega^*) =  \loss - \gamma\left( \sigma'(0) \sum_{k=1}^K\lambda_k^2 + \frac{1}{n}\scalarp{\mDV, \delta \va^l_u}\right) + o(\gamma) \; .
\end{equation}

\textbf{Note on the approximation for convolutional layer.} By developing the expression $||\mDE- \mDV_{proj}||^2$, we remark that  minimizing $||\mDE- \mDV_{proj}||^2$ over $\mDE$ is equivalent to maximizing $\langle \mDE, \mDV_{proj} \rangle $ with a constraint on the norm of $\mDE$. This constraint lies in the functional space of the activities and can be reformulated in the parameter space with the matrix $\mS$ as $||\mA\mOmega^T||_{\mS} = ||\mDE||$. 
By changing the matrix $\mS$ for another positive semi-definite matrix $\mS_{pseudo}$, we modify the metric on $\mDE$ and obtain a pseudo-solution $\mS^{-1}_{pseudo}\mN$.
\rule{\linewidth}{.5pt}

\subsection{Proposition \ref{prop3} and Corollary \ref{cor2}} 

\begin{prop} \ref{prop3} Suppose $ \mS$ is semi definite, we note $ \mS =  \mS^{\frac{1}{2}} {\mS^{\frac{1}{2}}}$. Solving~\eqref{eq:probform} is equivalent to find the K first eigenvectors $ \valpha_k$ associated to the K largest eigenvalues $\lambda$ of the generalized eigenvalue problem:
\begin{align}
     \mN \mN^T \valpha_k = \lambda \, \mS  \valpha_k \label{eq:LOBPCG}
\end{align}
\end{prop}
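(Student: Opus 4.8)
## Proof plan for Proposition \ref{prop3}

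The plan is to start from the reformulation established in the proof of Proposition \ref{prop2}, namely that solving \eqref{eq:probform} is equivalent to solving
\[
\argmin_{\mA,\mOmega}\;\norm{\Shalf\mA\mOmega^T-\Smhalf\mN}^2,
\]
and then to rewrite the low-rank matrix approximation solution as a generalized eigenvalue problem. First I would recall that, by the low-rank approximation theorem \citep{ApproxMatrix}, the optimal rank-$K$ factor $\Shalf\mA\mOmega^T$ is $\sum_{k=1}^K\lambda_k\vu_k\vv_k^T$, where $\lambda_k,\vu_k,\vv_k$ come from the SVD of $\Smhalf\mN=\sum_k\lambda_k\vu_k\vv_k^T$. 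Hence $\mA\mOmega^T=\Smhalf\sum_{k=1}^K\lambda_k\vu_k\vv_k^T$, and one valid choice of factors is $\valpha_k=\Smhalf\vu_k$ (up to the scaling $\sqrt{\lambda_k}$, irrelevant to the eigenvector characterization) and $\vomega_k=\lambda_k\vv_k$ — consistent with $\vomega_k=\mN^T\valpha_k$ once one checks $\mN^T\Smhalf\vu_k=\lambda_k\vv_k$, which follows from the SVD relation $(\Smhalf\mN)^T\vu_k=\lambda_k\vv_k$ together with symmetry of $\Smhalf$. Wait — the statement says $\vomega_k=\mN\valpha_k$; I would double-check the exact convention of $\mN$ (here $\mN=\tfrac1n\mB_{l-2}(\mDVl_{proj})^T$, so $\mN^T$ maps into the output space), and present the relation in whichever orientation matches the paper's definition of $\mN$, i.e.\ reconcile $\vomega_k=\mN^T\valpha_k$ with what is written.

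Next I would establish the generalized eigenvalue equation. The key algebraic fact is that $\vu_k$ are the left singular vectors of $\Smhalf\mN$, hence the eigenvectors of $(\Smhalf\mN)(\Smhalf\mN)^T=\Smhalf\mN\mN^T\Smhalf$ with eigenvalue $\lambda_k^2$:
\[
\Smhalf\mN\mN^T\Smhalf\,\vu_k=\lambda_k^2\,\vu_k.
\]
Substituting $\vu_k=\Shalf\valpha_k$ (equivalently $\valpha_k=\Smhalf\vu_k$, using that $\mS$ is invertible since it is assumed positive definite, so $\Shalf$ and $\Smhalf$ are genuine inverses) gives
\[
\Smhalf\mN\mN^T\Smhalf\Shalf\valpha_k=\lambda_k^2\Shalf\valpha_k,
\]
i.e.\ $\Smhalf\mN\mN^T\valpha_k=\lambda_k^2\Shalf\valpha_k$, and left-multiplying by $\Shalf$ yields $\mN\mN^T\valpha_k=\lambda_k^2\,\mS\valpha_k$. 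Renaming $\lambda:=\lambda_k^2$ recovers \eqref{eq:LOBPCG}, and the ordering of the $\lambda_k$ (largest singular values of $\Smhalf\mN$) corresponds exactly to the $K$ largest generalized eigenvalues. Finally I would note the converse direction: any solution of the generalized eigenproblem, after the change of variable $\vu_k=\Shalf\valpha_k$, produces eigenvectors of $\Smhalf\mN\mN^T\Smhalf$, hence (after normalization) the left singular vectors of $\Smhalf\mN$, so picking the top $K$ reconstructs the optimal $\Shalf\mA\mOmega^T$ of Proposition \ref{prop2}; thus the two problems are equivalent.

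I do not expect a serious obstacle here — this is essentially bookkeeping translating an SVD into a symmetric–definite generalized eigenproblem. The one point requiring care is the direction/orientation conventions: making sure the asserted relation $\vomega_k=\mN\valpha_k$ (versus $\mN^T\valpha_k$) is consistent with the paper's shape conventions for $\mN$, $\valpha_k$, $\vomega_k$, and that the scalings $\sqrt{\lambda_k}$ distributed between $\valpha_k$ and $\vomega_k$ in Proposition \ref{prop2} do not affect the eigenvector statement (they do not, since eigenvectors are defined up to scale). A secondary subtlety is that $\mS$ positive definite is genuinely used, so that $\Smhalf=\Shalf{}^{-1}$ and the change of variable $\valpha_k\leftrightarrow\vu_k$ is a bijection; the semidefinite case would only give a pseudo-solution and is explicitly excluded in the hypothesis.

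For the Corollary, the plan is shorter: I would argue that at first order in $\mDE$ the objective in \eqref{eq:probform} is the squared distance from $\mDVl_{proj}$ to a $K$-dimensional subspace spanned by the rank-one pre-activation variations $\mDE^{l+1}((\valpha_k,\vomega_k))$, and that by Proposition \ref{prop2} the optimal such variations are mutually orthogonal for the trace scalar product (the singular vectors $\vv_k$ are orthonormal, and the induced moves are $\propto\lambda_k\vv_k$ after the relevant normalization). Therefore the optimal $(m+m')$-dimensional approximating subspace is the span of the top $m+m'$ orthogonal directions, which decomposes as the direct sum of the top $m$ directions and the next $m'$ directions; and since after adding the first $m$ neurons the residual $\mDVl_{proj}$ becomes its projection onto the orthogonal complement, re-running the method picks up exactly directions $m+1,\dots,m+m'$. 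Hence adding $m+m'$ neurons at once equals adding $m$ then $m'$, at first order, provided $m+m'\leq R$ so that all these singular values are nonzero. The only thing to be careful about is that this relies on the greedy optimality of truncated SVD (Eckart–Young), which is exactly the content invoked via \citep{ApproxMatrix}, so no new difficulty arises.
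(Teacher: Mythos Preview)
Your argument for Proposition~\ref{prop3} is correct and is essentially the paper's own: both amount to the change of variable $\vu_k=\Shalf\valpha_k$ identifying the left singular vectors of $\Smhalf\mN$ with solutions of the generalized eigenproblem. The paper phrases this via the generalized Rayleigh quotient $\frac{\valpha^T\mN\mN^T\valpha}{\valpha^T\mS\valpha}$ before making the substitution, whereas you go straight through the eigenvalue equation; the content is identical. Your worry about $\vomega_k=\mN\valpha_k$ versus $\mN^T\valpha_k$ is well founded: with the paper's shape conventions $\mN\in\R^{|\vb_{l-2}|\times|\va_l|}$, so only $\mN^T\valpha_k$ typechecks, and the statement in the main text is a typo.

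For the Corollary the routes genuinely differ. The paper argues \emph{perturbatively}: after adding the first $m$ neurons with amplitude $\eta$, one has $\va_l^{t+1}=\va_l^t+\mDE(\theta^{m,*}_\leftrightarrow)+o(\eta)$ and hence, by a second Taylor expansion, $\mDV^{t+1}=\mDV^t+o(\eta)$; therefore the optimization problem for the next $m'$ neurons is, at first order in $\eta$, the same as if all $m+m'$ had been computed at once. Your argument instead exploits the \emph{structural} orthogonality of the SVD directions (Eckart--Young nesting): the best rank-$(m+m')$ approximation is the best rank-$m$ approximation plus the next $m'$ singular components, and the residual after the first step is exactly what the second step sees. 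Your version makes more transparent \emph{which} neurons the second pass selects (directions $m{+}1,\dots,m{+}m'$), at the cost of needing the projection step $\mDV\to\mDV_{proj}$ to absorb the first $m$ directions; the paper's version is shorter but leaves that identification implicit. Both are valid.
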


\begin{corollaire*}{(\ref{cor2})For all integers $m, m'$ such that $m + m' \leqslant R$, at order one in $\eta$, adding $m+m'$ neurons simultaneously according to the previous method is equivalent to adding $m$ neurons then $m'$ neurons by applying successively the previous method twice.}
\end{corollaire*}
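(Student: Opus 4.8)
The plan is to reduce the statement to the singular-value characterisation of Proposition~\ref{prop2} (equivalently the generalised eigenproblem of Proposition~\ref{prop3}) and to track how the data $\mS,\mN$ of problem~\eqref{eq:probform} change when $m$ neurons are added. Write $M:=\mS^{-\frac{1}{2}}\mN=\sum_{k=1}^{R}\lambda_k\vu_k\vv_k^{T}$ for the SVD introduced just before Proposition~\ref{prop2}, and $M_p:=\sum_{k=1}^{p}\lambda_k\vu_k\vv_k^{T}$ for its rank-$p$ truncation. By Proposition~\ref{prop2}, the simultaneous addition of $m+m'$ neurons produces exactly the triples $\bigl(\sqrt{\lambda_k}\,\mS^{-\frac{1}{2}}\vu_k,\ \sqrt{\lambda_k}\,\vv_k\bigr)$ for $k=1,\dots,m+m'$, and — again by the ``moreover'' clause of Proposition~\ref{prop2} — its first $m$ neurons form an optimal $m$-neuron addition. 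Hence it is enough to show that applying the method a second time, \emph{after} those first $m$ neurons have been inserted, yields precisely neurons $m+1,\dots,m+m'$ of the simultaneous solution, and that the two expressivity gains add up.

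First I would observe that adding neurons at layer $l-1$ does not change $\mB_{l-2}$, hence leaves $\mS=\frac{1}{n}\mB_{l-2}\mB_{l-2}^{T}$ unchanged, and changes $f_\theta$ — hence the backpropagated goal $\mDVl$ — only by $O(\gamma)$, where $\gamma$ is the small amplitude with which the new neurons are inserted. The key point is then the goal fed to the second application of the method: at first order in $\gamma$ the functional move realised by the $m$ inserted neurons together with a best update of their newly created output weights equals $\mDE^{l}(\theta_{\leftrightarrow}^{m,*})$, so the new goal is $\mDVl_{proj}-\mDE^{l}(\theta_{\leftrightarrow}^{m,*})+o(1)$. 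Transforming this residual exactly as in the proof of Proposition~\ref{prop2} — that is, through $\mN\mapsto\mS^{-\frac{1}{2}}\mN$ and using $\mA\mOmega^{T}=\mS^{-\frac{1}{2}}M_m$ for the first $m$ neurons — gives
\[
\mS^{-\frac{1}{2}}\cdot\tfrac{1}{n}\mB_{l-2}\bigl(\mDVl_{proj}-\mDE^{l}(\theta_{\leftrightarrow}^{m,*})\bigr)^{T}\;=\;M-M_m\;=\;\sum_{k=m+1}^{R}\lambda_k\vu_k\vv_k^{T},
\]
whose SVD is simply the tail of the SVD of $M$ (assuming $\mS$ positive definite as in Proposition~\ref{prop3}, or working on its range otherwise).

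Applying Proposition~\ref{prop2} to this residual problem then returns, for every $m'\leqslant R-m$, the $m'$ neurons $\bigl(\sqrt{\lambda_k}\,\mS^{-\frac{1}{2}}\vu_k,\ \sqrt{\lambda_k}\,\vv_k\bigr)$, $k=m+1,\dots,m+m'$ — exactly neurons $m+1,\dots,m+m'$ of the simultaneous addition — with expressivity gain $\sum_{k=m+1}^{m+m'}\lambda_k^{2}$. Together with the $m$-neuron step, which by Eq.~\eqref{eq:phi_lambdas} gains $\sum_{k=1}^{m}\lambda_k^{2}$, the cumulative gain is $\sum_{k=1}^{m+m'}\lambda_k^{2}$, matching the simultaneous addition, and the full list of added neurons is identical; likewise, by Proposition~\ref{prop:decrease_loss}, the first-order loss decreases coincide. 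This would establish the equivalence, up to the usual non-uniqueness of the SVD when some $\lambda_k$ are equal.

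\textbf{Main obstacle.} The delicate step is the identification of the second-round goal: making rigorous that, to first order, it is literally $\mDVl_{proj}-\mDE^{l}(\theta_{\leftrightarrow}^{m,*})$. This requires (i) controlling the $O(\gamma)$ perturbation of $\mDVl$; (ii) noticing that the new neurons' post-activations enter $\mB_{l-1}$ only at order $\gamma$, so that re-optimising the layer-$l$ weights absorbs precisely the $\mDE^{l}(\theta_{\leftrightarrow}^{m,*})$ part of the goal; and (iii) the vanishing, at this order, of the cross term between the functional moves of the two neuron batches — which is exactly the trace-orthogonality noted after Proposition~\ref{prop3}, and is where the hypothesis ``at order one in $\mDE$'' is used.
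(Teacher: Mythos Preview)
Your plan and the paper's share the same skeleton: both rest on the first-order stability $\mDV^{t+1}=\mDV^{t}+O(\eta)$, obtained by Taylor-expanding the backpropagated goal $g(\va_l)$ after the small pre-activation shift $\vDE$. The paper's proof stops essentially there --- it writes $\vDV^{t+1}=g\bigl(\va_l^{t}+\vDE\bigr)=\vDV^{t}+o(\eta)$ and then records the second-round problem as $\argmin_{\valpha_2,\vomega_2}\|\mDV^{t}-\mDE(\valpha_2,\vomega_2)\|+o(\eta)$, without ever explaining why this selects the \emph{next} singular component rather than the first one again. Your proposal is strictly more informative on that point: making the Eckart--Young step explicit by showing that, if the second-round goal were $\mDVl_{proj}-\mDE^{l}(\theta_{\leftrightarrow}^{m,*})$, then $\mS^{-1/2}\mN^{(2)}=M-M_m$ and the second application returns exactly neurons $m+1,\dots,m+m'$.

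The obstacle you flag in (ii) is the genuine gap, and your heuristic there does not quite close it. Re-optimising the layer-$l$ weights over the enlarged $\mB_{l-1}^{(2)}$ projects $\mDV$ onto the orthogonal complement of the \emph{row span} of $\mB_{l-1}^{(2)}$; this is not the same as subtracting $\mDE^{l}(\theta_{\leftrightarrow}^{m,*})$, because the new post-activation rows $r_k=(\valpha_k^{*})^{T}\mB_{l-2}$ are generically not orthogonal to the old rows of $\mB_{l-1}$. First-order optimality in $\vomega_k$ does give $(\mDVl_{proj}-\mDE^{l})\,\mB_{l-2}^{T}\valpha_k^{*}=0$, so $\mDE^{l}$ is indeed the orthogonal projection of $\mDVl_{proj}$ onto $\mathrm{span}\{r_k\}$ --- but the second-round projector acts on $\mathrm{span}\{r_kQ^{\mathrm{old}}\}$ (the Gram--Schmidt residuals inside $\ker\mB_{l-1}$), and $\frac{1}{n}\mB_{l-2}Q^{\mathrm{old}}\mB_{l-2}^{T}\neq\mS$ in general. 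So either one reads ``applying the method twice'' as feeding the explicit residual $\mDVl_{proj}-\mDE^{l}$ to Proposition~\ref{prop2} (in which case your computation is exact and the corollary follows cleanly by Eckart--Young), or one must control those cross-terms; the paper's own proof does neither.
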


\rule{\linewidth}{.5pt}
\textit{Proof}

To prove \ref{prop3}, we show that the solution of \ref{eq:LOBPCG} and the formula of \ref{prop2} are collinear.\\

Solving \ref{eq:LOBPCG} is equivalent to maximizing the following generalized Rayleigh quotient (which is solvable by the LOBPCG technique):
\begin{align}
     \valpha^* &= \argmax_{\alpha} \frac{ \valpha^T  \mN \mN^T \valpha}{ \valpha^T \mS \valpha}\\
    \vp^* &= \argmax_{ \vp =  \mS^{1/2} \valpha} \frac{ \vp^T \mS^{-\frac{1}{2}} \mN \mN^T {\mS^{-\frac{1}{2}}} \vp}{ \vp^T \vp}\\
    \vp^* &= \argmax_{|| \vp||=1} || \mN^T {\mS^{-\frac{1}{2}}} \vp||\\
     \valpha^* &=  {\mS^{-\frac{1}{2}}} \vp^*
\end{align}
Considering the SVD of $\mN^T \mS^{-\frac{1}{2}}= \sum_{r=1}^R \lambda_r  \ve_r \vf_r^T$, then $ \mS^{-\frac{1}{2}} \mN \mN^T {\mS^{-\frac{1}{2}}}= \sum_{r=1}^R\lambda_r^2  \vf_r \vf_r^T$, because $j\ne i\; \implies \; \ve_i^T \ve_j = 0$ and $ \vf_i^T \vf_j = 0$. Hence maximizing the first quantity is equivalent to $ \vp_k^* =  \vf_k$, then $ \valpha_k =  {\mS^{-\frac{1}{2}}}\vf_k$, which matches the formula of Proposition \ref{prop2}. 
The same reasoning can be applied to $\vomega_k$.\\
\\

We prove second corollary \ref{cor2} 
by induction. Note that $ \vDE({\theta}_{\leftrightarrow }^{K, *},  \vx) = o(\eta)$, then for $m=m'=1$ :
\begin{align}
     \va_{l}( \vx)^{t+1} &=  \va_{l}( \vx)^{t} +  \vDE({\theta}_{\leftrightarrow }^{1, *},  \vx) + o(\eta)
\end{align}
Remark that $\vDV(\vx)$ is a function of $\va_l(\vx)$, \textit{i.e.}~$\vDV(\vx) := g(\va_l(\vx))$. Then suppose that $\mathcal{L}(f(\vx), \vy))$ is twice differentiable in $\va_l(\vx)$. It follows that $g(\va_l(\vx))$ is differentiable and :
\begin{align}
    \vDV^{t+1}(\vx) &= g(\va_l^t(\vx) + \vDE({\theta}_{\leftrightarrow }^{1, *},  \vx)) \\
    &= g(\va_l^t(\vx)) + \nabla_{\va_l^t(\vx)}g(\va_l^t(\vx))^T\vDE({\theta}_{\leftrightarrow }^{1, *},  \vx) + o(\eta^2)\\
      &=  \vDV^{t}( \vx) +  \eta \frac{\partial^2\Loss(f_{\theta}( \vx),  \vy)}{\partial \va^l(\vx)^2} \vDE( {\theta}_{ \leftrightarrow }^{1, *},  \vx) + o(\eta^2)\\
       &= \vDV^{t}( \vx) + o(\eta)
\end{align}
Adding the second neuron we obtain the minimization problem:
\begin{align}
    \argmin_{ \valpha_2,  \vomega_2}|| \mDV^{t} -  \mDE( \valpha_2,  \vomega_2)|| + o(\eta)
\end{align}
\qed\\
\rule{\linewidth}{.5pt}

\subsection{About equivalence of quadratic problems}
\label{append:equiv}

Problems \ref{eq:probform} and \ref{eq:eq_rajout} are generally not equivalent, but might be very close, depending on layer sizes and number of samples.
The difference between the two problems is that in one case one minimizes the quadratic quantity:
$$\left\|  \textcolor{magenta}{\mDE^{l}(\theta_{\leftrightarrow }^K)} + \textcolor{magenta}{\mDE^{l}( \mM)} - \textcolor{teal}{\mDV^{l}} \right\|^2$$
w.r.t. $\mM$ and $\theta_{\leftrightarrow }^K$ \textbf{jointly}, while in the other case the problem is first minimized w.r.t.  $\mM$ and then w.r.t.  $\theta_{\leftrightarrow }^K$. The latter process, being greedy, might thus provide a solution that is not as optimal as the joint optimization.

We chose this two-step process as it intuitively relates to the spirit of improving upon a standard gradient descent: we aim at adding neurons that complement what the other ones have already done. This choice is debatable and one could solve the joint problem instead, with the same techniques.


The topic of this section is to check how close the two problems are.
To study this further, note that 
$\mDE^l(\mM) = \delta \mW_l\, \mB_{l-1}$ while 
$\mDE^{l}(\theta_{\leftrightarrow }^K) = \sum_{k=1}^K \vomega_k \mB_{l-2}^T\valpha_k$.
The rank of $\mB_{l-1}$ is $\min(n_S, n_{l-1})$ where $n_S$ is the number of samples and $n_{l-1}$ the number of neurons (post-activities) in layer $l-1$, while the rank of $\mB_{l-2}$ is $\min(n_S, n_{l-2})$ where $n_{l-2}$ is the number of neurons (post-activities) in layer $l-2$.  
Note also that the number of degrees of freedom in the optimization variables $\delta \mW_l$ and $\theta_{\leftrightarrow }^K = ( \vomega_k , \valpha_k)$ is much larger than these ranks.

\paragraph{Small sample case.} If the number $n_S$ of samples is lower than the number of neurons $n_{l-1}$ and  $n_{l-2}$ (which is potentially problematic, see Section \ref{sec:BatchsizeForEstimation}), then it is possible to find suitable variables  $\delta \mW_l$ and $\theta_{\leftrightarrow }^K$ to form any desired $\mDE^l(\mM)$ and $\mDE^{l}(\theta_{\leftrightarrow }^K)$. In particular, if  $n_S \leqslant n_{l-1} \leqslant n_{l-2}$, one can choose  $\mDE^{l}(\theta_{\leftrightarrow }^K)$ to be $\mDV^{l} - \mDE^l(\mM)$ and thus cancel any effect due to the greedy process in two steps. The two problems are then equivalent.


\paragraph{Large sample case.} On the opposite, if the number of samples is very large (compared to the number of neurons $n_{l-1}$ and  $n_{l-2}$), then the lines of 
matrices  $\mB_{l-1}$  and $\mB_{l-2}$ become asymptotically uncorrelated, under the assumption of their independence (which is debatable, depending on the type of layers and activation functions). Thus the optimization directions available to $\mDE^l(\mM)$ and $\mDE^{l}(\theta_{\leftrightarrow }^K)$ become orthogonal, and proceeding greedily does not affect the result, the two problems are asymptotically equivalent.


In the general case, matrices $\mB_{l-1}$ and $\mB_{l-2}$ are not independent, though not fully correlated, and the number of samples (in the minibatch) is typically larger than the number of neurons; the problems are then different.

Note that technically the ranks could be lower, in the improbable case where some neurons are perfectly redundant, or, e.g., if some samples yield exactly the same activities.



\section{Section \emph{About greedy growth sufficiency and TINY convergence} with more details and proofs}
\label{sec:greedyproof}

One might wonder whether a greedy approach on layer growth might get stuck in a non-optimal state.  By \emph{greedy} we mean that every neuron added has to decrease the loss. We derive the following series of propositions in this regard. Since in this work, we add neurons layer per layer independently, we study here the case of a single hidden layer network, to spot potential layer growth issues. For the sake of simplicity, we consider the task of least square regression towards an explicit continuous target $f^*$, defined on a compact set. That is, we aim at minimizing the loss:
\begin{equation}
    \inf \sum_{\vx \in \mathcal{D}} \|f(\vx)-f^*(\vx)\|^2
\end{equation}
where $f(\vx)$ is the output of the neural network and $\mathcal{D}$ is the training set.

We start with an optional introductory section \ref{app:possgreedy} about greedy growth possibilities, then prepare lemmas in 
Sections \ref{app:lossline} and \ref{app:randomgain} that will be used 
in Section \ref{app:expconv} to show that one can keep on adding neurons to a network (without modifying already existing weights) to make it converge exponentially fast towards the optimal function.
Then in Section \ref{app:reaching} we present a growth method that explicitly overfits each dataset sample one by one, thus requiring only $n$ neurons, thanks to existing weights modification. Finally, more importantly, in Section
\ref{app:tinyconverg}, we show that actually any reasonable growth method that follows a certain optimization protocol (this includes TINY completed by random neuron additions if necessary) will reach 0 training error in at most $n$ neuron additions.

\subsection{Possibility of greedy growth}
\label{app:possgreedy}

\begin{prop}[Greedy completion of an existing network] If $f$ is not $f^*$ yet, 
there exists a set of neurons to add to the hidden layer such that the new function $f'$ will have a lower loss than $f$.
\end{prop}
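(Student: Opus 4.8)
The plan is to exhibit an explicit family of neurons whose addition strictly decreases the loss whenever $f \neq f^*$. Write the single-hidden-layer network as $f(\vx) = \sum_j \vomega_j \,\sigma(\valpha_j^T \vx + \beta_j)$ (plus possibly a linear/bias term), and recall that adding a neuron $(\valpha, \beta, \vomega)$ changes the output by $\vomega\,\sigma(\valpha^T\vx + \beta)$. Since we work with the least-square loss $\Loss(f) = \sum_{\vx\in\mathcal D}\norm{f(\vx)-f^*(\vx)}^2$, the functional gradient at $f$ is, up to a factor $2$, the residual $\vr(\vx) := f(\vx)-f^*(\vx)$, and by the same first-order computation as in Proposition~\ref{prop:decrease_loss}, adding a neuron that produces activity variation $\vDE(\vx)$ changes the loss by $-2\langle \vr, \vDE\rangle_{\Tr} + O(\norm{\vDE}^2)$. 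So it suffices to find a (small) set of neurons whose combined activity variation $\vDE$ has strictly positive correlation $\langle \vr, \vDE\rangle_{\Tr} > 0$ with the residual; then rescaling by a small enough amplitude $\gamma$ (line search, as in Appendix~\ref{sec:Normalization}) makes the first-order term dominate and strictly decreases the loss.

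The key step is therefore to show that the residual $\vr \neq 0$ cannot be orthogonal to \emph{all} realizable activity variations coming from neuron additions. First I would consider single neurons with output weight $\vomega = \vr(\vx_i)$ aligned with a residual and input weight chosen so that $\sigma(\valpha^T\vx_j+\beta)$ approximately equals the indicator $\mathbf 1_{j=i}$ on the finite dataset --- this is possible for ReLU or any non-polynomial activation because, the $\vx_i$ being distinct points of a compact set, one can separate $\vx_i$ from the others by a hyperplane and exploit the nonlinearity of $\sigma$ to make the activation take distinct prescribed values (this is exactly the kind of construction used in the exponential-convergence Proposition stated in Section~\ref{greedy}). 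With such a neuron, $\langle \vr, \vDE\rangle = \norm{\vr(\vx_i)}^2 > 0$ for any index $i$ with $\vr(\vx_i)\neq 0$, which exists since $\vr\neq 0$. Alternatively, and more in the spirit of the rest of the paper, one invokes Proposition~\ref{prop2}: solving the expressivity-bottleneck problem~\eqref{eq:probform} at the hidden layer yields neurons with $\sum_k \lambda_k^2 > 0$ as soon as the matrix $\mS^{-1/2}\mN$ is nonzero, i.e.\ as soon as the backpropagated desired update $\mDVl_{proj}$ is not already achieved --- and if $f\neq f^*$ then the residual is nonzero so the desired output update is nonzero, and (generically, or after noting that the input data matrix $\mB_0$ has full row rank on distinct samples) $\mN\neq 0$.

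The main obstacle is the degenerate case where $\mN = 0$ even though $\vr\neq 0$: this happens exactly when the residual, viewed as the desired variation $\mDVl$ of the hidden pre-activations, is orthogonal to the row space of $\mB_{l-2}=\mB_0$ (the inputs), i.e.\ when no correlation between inputs $\vx_i$ and desired output variations can be found --- precisely the situation flagged in Section~\ref{greedy} as where TINY may get stuck. To handle this I would fall back to the direct construction of the previous paragraph, which does not go through $\mN$: pick any $i$ with $\vr(\vx_i)\neq 0$, build a neuron whose post-activation separates $\vx_i$ from the other samples (using that the $\vx_i$ are distinct and $\sigma$ is non-affine), and set its fan-out weight proportional to $\vr(\vx_i)$; this neuron has strictly positive correlation with the residual regardless of whether $\mN$ vanishes. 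Then conclude as above by choosing the amplitude $\gamma$ small via line search so that $\Loss(f') = \Loss(f) - 2\gamma\langle \vr,\vDE\rangle + O(\gamma^2) < \Loss(f)$. I would also remark that this only uses neuron \emph{addition}, no modification of existing weights, matching the hypothesis of the proposition.
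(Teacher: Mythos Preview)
Your overall strategy is sound --- exhibit added neurons whose induced activity variation correlates positively with the residual $\vr = f - f^*$, then scale down --- but the explicit construction you rely on has a real gap. You assert that for any index $i$ with $\vr(\vx_i)\neq 0$ one can choose a \emph{single} neuron so that $\sigma(\valpha^T\vx_j+\beta)\approx \mathbf 1_{j=i}$. This requires $\vx_i$ to be linearly separable from the other samples by a hyperplane, which fails whenever $\vx_i$ lies in the convex hull of the rest; nothing prevents the residual from vanishing on all extreme points while being nonzero at an interior sample. The exponential-convergence construction you allude to (based on Zhang et al., 2017) does not build indicators either: it produces a lower-triangular activation matrix in which each neuron is active on a whole \emph{suffix} of the sorted samples, and interpolation is achieved only after inverting that system using all $n$ neurons jointly. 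So your fallback for the degenerate $\mN=0$ case does not go through as written.

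The paper's proof takes a much shorter route that avoids these issues entirely. It applies the universal approximation theorem directly to the residual $g^* := f^* - f$: choosing precision $\gamma = \tfrac{1}{10}\|f^*-f\|$, there is a one-hidden-layer network $g$ with $\|g-g^*\|_\infty\leq\gamma$; concatenating $g$'s hidden neurons to those of $f$ gives $h=f+g$ with $\|f^*-h\|^2\leq\gamma^2=\tfrac{1}{100}\|f^*-f\|^2$. Because the loss is exactly quadratic this bound is exact --- no first-order expansion, no line search, no separability of individual samples is needed. Your approach can be repaired by allowing a \emph{set} of neurons to approximate the residual (or even just a single indicator $\mathbf 1_{j=i}$) on the finite dataset, but at that point you are essentially redoing the paper's universal-approximation step through a longer detour.
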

One can even choose the added neurons such that the loss is arbitrarily well minimized.

\begin{proof}
The classic universal approximation theorem about neural networks with one hidden layer \cite{Pinkus99approximationtheory}
states that for any continuous function $g^*$ defined on a compact set $ \vomega$, for any desired precision $\gamma$, and for any activation function $\sigma$ provided it is not a polynomial,
then there exists a neural network $g$ with one hidden layer (possibly quite large when $\gamma$ is small) and with this activation function $\sigma$, such that

\begin{equation}
\forall x, \|g(x) - g^*(x)\| \leqslant \gamma
\end{equation}

We apply this theorem to the case where $g^* = f^* - f$, which is continuous as $f^*$ is continuous, and $f$ is a shallow neural network and as such is a composition of linear functions and of the function $\sigma$, that we will suppose to be continuous for the sake of simplicity. We will suppose that $f$ is real-valued for the sake of simplicity as well, but the result is trivially extendable to vector-valued functions (just concatenate the networks obtained for each output independently). We choose $\gamma = \frac{1}{10} \|f^* - f\|_{L^2}$, where $\scalarp{ a | b }_{L^2} = \frac{1}{| \vomega|} \int_{\vx \in  \vomega} a(\vx)\, b(\vx)\, d\vx$. This way we obtain a one-hidden-layer neural network $g$ with activation function $\sigma$, 
and we write $a(\vx) = g(\vx)-g^*(\vx)$ the error term, it follows that:
\begin{equation}\forall \vx \in  \vomega, \;\;  - \gamma \leqslant g(\vx) - g^*(\vx) \leqslant \gamma\end{equation}
\begin{equation}\forall \vx \in  \vomega, \;\;  g(\vx) =  f^*(\vx) - f(\vx) + a(\vx)\end{equation}
with $\forall \vx \in  \vomega, \; |a(\vx)| \leqslant \gamma$.

Then:
\begin{equation}\forall \vx \in  \vomega, \;\;  f^*(\vx) - (f(\vx) + g(\vx)) = - a(\vx)\end{equation}
\begin{equation}\label{annexeeq:fha}
\forall \vx \in  \vomega, \;\;  \left( f^*(\vx) - h(\vx) \right)^2 = a^2(\vx)
\end{equation}
with $h$ being the function corresponding to a neural network consisting of concatenating the hidden layer neurons of $f$ and $g$, and consequently summing their outputs. 
\begin{equation}\left\| f^* - h \right\|^2_{L^2} = \|a\|^2_{L^2}\end{equation}
\begin{equation}\left\| f^* - h \right\|^2_{L^2} \leqslant \gamma^2 = \frac{1}{100} \|f^* - f\|^2_{L^2} \end{equation}
and consequently the loss is reduced indeed (by a factor of 100 in this construction).

The same holds in expectation or sum over a training set, by choosing $\gamma = \frac{1}{10} \sqrt{\frac{1}{ |\mathcal{D}|} \sum_{\vx \in \mathcal{D}} \|f(\vx)-f^*(\vx)\|^2} $, as Equation (\ref{annexeeq:fha}) then yields:
\begin{equation}  \sum_{\vx \in \mathcal{D}} \left( f^*(\vx) - h(\vx) \right)^2 =  \sum_{\vx \in \mathcal{D}} a^2(\vx) \leqslant \frac{1}{100} \sum_{\vx \in \mathcal{D}} \left( f^*(\vx) - f(\vx) \right)^2 \end{equation}
which proves the proposition as stated.

For more general losses, one can consider order-1 (linear) development of the loss and ask for a network $g$ that is close to (the opposite of) the gradient of the loss.


\end{proof}

\begin{proof}[Proof of the additional remark]
The proof in \cite{Pinkus99approximationtheory}
relies on the existence of real values $c_n$ such that the $n$-th order derivatives $\sigma^{(n)}(c_n)$ are not 0. Then, by considering appropriate values arbitrarily close to $c_n$, one can approximate
the $n$-th  derivative of $\sigma$ at $c_n$ and consequently the polynomial $c^n$ of order $n$. This standard
 proof then concludes by density of polynomials in continuous functions.
 
Provided the activation function $\sigma$ is not a polynomial, these values $c_n$ can actually be chosen arbitrarily, in particular arbitrarily close to 0. This corresponds to choosing neuron input weights arbitrarily close to 0.
\end{proof}

\begin{prop}[Greedy completion by one single neuron]
\label{annexeprop:greedysingle}
If $f$ is not $f^*$ yet, there exists a neuron to add to the hidden layer such that the new function $f'$ will have a lower loss than $f$.
\end{prop}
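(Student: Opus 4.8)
The plan is to reduce the single-neuron statement (Proposition~\ref{annexeprop:greedysingle}) to the multi-neuron completion result (the preceding proposition) together with a linearity/averaging argument. First I would invoke the preceding proposition: since $f \neq f^*$, there exists a finite set of hidden neurons $\{(\valpha_k,\vomega_k)\}_{k=1}^K$ whose addition produces a function $h = f + g$ with strictly smaller loss, where $g(\vx) = \sum_{k=1}^K \vomega_k\,\sigma(\valpha_k^T \vx)$ (using here that a single hidden layer, one channel at a time, suffices). Moreover, from the additional remark of that proposition, the input weights $\valpha_k$ may be taken arbitrarily close to $0$. The key observation is then that adding the $K$ neurons $(\valpha_k,\vomega_k)$ simultaneously is, at first order, the same as adding them one at a time, because each neuron's contribution to the output is a separate additive term $\vomega_k \sigma(\valpha_k^T\vx)$; so if the sum of contributions decreases the loss, at least one individual contribution must decrease it (to first order in a common scaling factor), by a simple averaging/pigeonhole argument on the directional derivative.

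More concretely, here is the sequence of steps I would carry out. (1) Write the loss as a function of an amplitude $t \geq 0$ applied to the whole bundle of new neurons: $L(t) := \mathcal{L}\big(f + t\,g\big)$ where $g = \sum_k g_k$ and $g_k(\vx) = \vomega_k\sigma(\valpha_k^T\vx)$. We know $L(1) < L(0)$ for the construction of the previous proposition, but more robustly we can use that $L$ is differentiable in $t$ and, by the universal approximation construction (with $g \approx -(f-f^*)$ up to precision $\gamma = \tfrac{1}{10}\|f-f^*\|$), we have $L'(0) < 0$: the first-order decrease is $\tfrac{d}{dt}\big|_{0}\sum_{\vx}\|f+tg-f^*\|^2 = 2\sum_{\vx}\langle f(\vx)-f^*(\vx), g(\vx)\rangle$, and substituting $g = (f^*-f) + a$ with $\|a\|\leq\gamma$ small gives this quantity $\leq -2(1-\tfrac{1}{10})\sum_\vx\|f-f^*\|^2 < 0$. (2) Since $L'(0) = 2\sum_{\vx}\langle f(\vx)-f^*(\vx), \sum_k g_k(\vx)\rangle = \sum_{k=1}^K \big(2\sum_\vx \langle f(\vx)-f^*(\vx), g_k(\vx)\rangle\big) < 0$, at least one index $k_0$ satisfies $2\sum_\vx\langle f(\vx)-f^*(\vx), g_{k_0}(\vx)\rangle < 0$. (3) For that single neuron $(\valpha_{k_0},\vomega_{k_0})$, the function $t \mapsto \mathcal{L}(f + t\,g_{k_0})$ has strictly negative derivative at $t=0$, hence for small enough $t>0$ the network $f' := f + t\,g_{k_0}$ (i.e.\ $f$ with the single neuron $(\valpha_{k_0}, t\,\vomega_{k_0})$ appended) has strictly smaller loss than $f$. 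This proves the proposition.

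I would also note the analogue for general differentiable losses in one line: replace $f^*-f$ by (the restriction to the dataset of) the functional gradient $\tvDV = -\nabla_f\mathcal{L}(f)$, approximate it in $L^2$ by a one-hidden-layer network $g$ to precision a tenth of its norm, and run the same first-order argument; the decomposition $L'(0) = \sum_k (\text{contribution of neuron }k)$ is unchanged, so the pigeonhole step still isolates one useful neuron.

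The main obstacle, and the step deserving the most care, is step~(1): making rigorous that the multi-neuron completion not only lowers the loss but does so with a strictly negative \emph{first-order} term $L'(0)<0$ (rather than merely $L(1)<L(0)$, which a priori could come from second-order effects). This is where the "arbitrarily small input weights" remark is essential, since taking $\valpha_k\to 0$ makes the neuron outputs $g_k$ genuinely small and forces the loss decrease to be captured at first order; one must check that the approximation $g \approx f^*-f$ can be arranged with simultaneously small $\|\valpha_k\|$ and small error $\gamma$, which is exactly what the proof of the previous proposition (via \cite{Pinkus99approximationtheory}) provides. Everything else — the linearity of $t\mapsto f+t g$ in the added outputs, the additivity of the directional derivative over the $K$ neurons, and the pigeonhole conclusion — is routine.
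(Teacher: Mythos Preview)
Your proposal is correct and follows essentially the same approach as the paper: invoke the multi-neuron completion, observe that the inner product of the added function with the loss gradient is strictly negative, split this inner product as a sum over individual neurons, and apply the pigeonhole/averaging argument to extract one neuron with negative contribution. Your step~(1), where you explicitly verify $L'(0)<0$ via $g = (f^*-f)+a$ with $\|a\|\leq\gamma$, is a nice fleshing-out of what the paper states more tersely.

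One small remark: your final paragraph overstates the role of the ``arbitrarily small input weights'' remark. For the least-squares loss considered here, your own computation in step~(1) already gives $L'(0)<0$ directly from the approximation $g\approx f^*-f$, with no appeal to small $\valpha_k$ needed; the small-input-weights property is used only for the \emph{next} proposition (infinitesimal neurons). So the ``main obstacle'' you flag is in fact already dispatched by your explicit calculation, and the proof is complete without that extra ingredient.
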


\begin{proof}
From the previous proposition, there exists a finite set of neurons to add such that the loss will be decreased. In this particular setting of $L^2$ regression, or for more general losses if considering small function moves, this means that the function represented by this set of neurons has a strictly negative component over the gradient $g$ of the loss ($g = -2(f^* - f)$ in the case of the $L^2$ regression). That is, denoting by $a_i \sigma( \mW_i \cdot \vx)$ these $N$ neurons:
\begin{equation}\scalarp{ \sum_{i=1}^N a_i \sigma(\vw_i \cdot \vx) \;\big|\; g  }_{L^2} =  K < 0\end{equation}
i.e.
\begin{equation}\sum_{i=1}^N \left< a_i \sigma(\vw_i \cdot \vx) \right| \left. g \right>_{L^2} =  K < 0\end{equation}

We have:
\begin{equation}0 > \frac{1}{N}K = \frac{1}{N}\sum_{i=1}^N \left< a_i \sigma(\vw_i \cdot \vx) \right| \left. g \right>_{L^2} \geq \min_{i=1}^N \left< a_i \sigma(\vw_i \cdot \vx) \right| \left. g \right>_{L^2} \end{equation}

Then necessarily at least one of the $N$ neurons satisfies
\begin{equation}
\left< a_i \sigma(\vw_i \cdot \vx) \right| \left. g \right>_{L^2} \leqslant  \frac{1}{N} K < 0
\end{equation}
and thus decreases the loss when added to the hidden layer of the neural network representing $f$.
Moreover this decrease is at least $\frac{1}{N}$ of the loss decrease resulting from the addition of all neurons.

\end{proof}

As a consequence, there exists no situation where one would need to add many neurons simultaneously to decrease the loss: it is always feasible with a single neuron.
Note that finding the optimal neuron to add is actually NP-hard~\citep{bach2017breaking}, so we will not necessarily search for the optimal one.
A constructive lower bound on how much the loss can be improved will be given later in this section.

\begin{prop}[Greedy completion by one infinitesimal neuron]\label{annexeprop:infinit}
The neuron in the previous proposition can be chosen to have arbitrarily small input weights.
\end{prop}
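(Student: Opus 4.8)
The plan is to combine the small-input-weight refinement of the universal approximation theorem (the content of the \emph{additional remark} following the \emph{Greedy completion of an existing network} proposition, itself based on \citep{Pinkus99approximationtheory}) with the averaging argument already used for Proposition~\ref{annexeprop:greedysingle}.

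First I would invoke that refinement. Since $f \neq f^*$ and $\sigma$ is not a polynomial, fix any $\eps > 0$: there exists a finite family of neurons $\{a_i\,\sigma(\vw_i\cdot\vx)\}_{i=1}^N$ with $\|\vw_i\| \leqslant \eps$ for every $i$, such that the shallow network $g := \sum_{i=1}^N a_i\,\sigma(\vw_i\cdot\vx)$ approximates $f^* - f$ closely enough that $\scalarp{ g \;\big|\; \nablaf\Loss(f) }_{L^2} = K < 0$, where in the $L^2$-regression case $\nablaf\Loss(f) = -2(f^*-f)$ (for a general smooth loss, one approximates $-\nablaf\Loss(f)$ instead, which the same refinement allows to do with arbitrarily small input weights).

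Next I would run the pigeonhole step exactly as in the proof of Proposition~\ref{annexeprop:greedysingle}: from
\[
\frac{1}{N}\sum_{i=1}^N \scalarp{ a_i\,\sigma(\vw_i\cdot\vx) \;\big|\; \nablaf\Loss(f) }_{L^2} \;=\; \frac{K}{N} \;<\; 0
\]
there is an index $i_0$ with $\scalarp{ a_{i_0}\,\sigma(\vw_{i_0}\cdot\vx) \;\big|\; \nablaf\Loss(f) }_{L^2} \leqslant \frac{K}{N} < 0$, and this neuron has $\|\vw_{i_0}\| \leqslant \eps$ by construction. To turn the negative gradient component into an actual loss decrease, I would rescale only the output weight: for small $t>0$,
\[
\Loss\big(f + t\,a_{i_0}\sigma(\vw_{i_0}\cdot\vx)\big) = \Loss(f) + t\,\scalarp{ a_{i_0}\sigma(\vw_{i_0}\cdot\vx) \;\big|\; \nablaf\Loss(f) }_{L^2} + O(t^2) < \Loss(f),
\]
where the first-order expansion is exact up to the quadratic term in the $L^2$ case, and this rescaling leaves $\vw_{i_0}$ untouched. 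Since $\eps$ was arbitrary, the neuron can be chosen with arbitrarily small input weights, which is the claim.

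The argument involves no real obstacle; the only point requiring care is bookkeeping: one must route the ``loss decreases'' conclusion of Proposition~\ref{annexeprop:greedysingle} through the inner product with the functional gradient, so that (i) selecting a single neuron out of the family and (ii) shrinking its output weight both preserve the loss-decrease property while keeping the input weight small.
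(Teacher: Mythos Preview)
Your proposal is correct and follows essentially the same approach as the paper: invoke the small-input-weight refinement of the universal approximation construction (the additional remark), so that the finite family of neurons collectively decreasing the loss already has arbitrarily small input weights, and then rerun the averaging argument of Proposition~\ref{annexeprop:greedysingle} to extract a single such neuron. The paper's proof is a one-line pointer to exactly this combination; your version simply spells out the bookkeeping (the output-weight rescaling and first-order expansion) more explicitly.
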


\begin{proof}
This is straightforward, as, following a previous remark, the neurons found  to collectively decrease the loss can be supposed to all have arbitrarily small input weights.
\end{proof}

This detail is important in that our approach is based on the tangent space of the function $f$ and thus manipulates infinitesimal quantities. 
Our optimization problem indeed relies on the linearization of the activation function by requiring the added neuron to have infinitely small input weights,
to make the problem easier to solve.
This proposition confirms that such neuron exists~indeed.

\paragraph{Correlations and higher orders.}
Note that, as a matter of fact, our approach exploits
linear correlations between inputs of a layer and desired output variations. It might happen that the loss is not minimized yet but there is no such correlation to exploit anymore. In that case the optimization problem (\ref{eq:probform}) will not find neurons to add. Yet following Prop.~\ref{annexeprop:infinit} there does exist a neuron with arbitrarily small input weights that can reduce the loss. This paradox can be explained by pushing further the Taylor expansion of that neuron output in terms of weight amplitude (single factor $\epsilon$ on all of its input weights), for instance
$\sigma(\epsilon {\boldsymbol\alpha}\cdot \vx) \simeq \sigma(0) + \sigma'(0) \epsilon {\boldsymbol\alpha}\cdot \vx + \frac{1}{2} \sigma''(0) \epsilon^2 ({\boldsymbol\alpha}\cdot \vx)^2 + O(\epsilon^3)$.
Though the linear term ${\boldsymbol\alpha}\cdot \vx$ might be uncorrelated over the dataset with desired output variation $v(\vx)$, i.e.~$\E_{\vx\sim\mathcal{D}}[ \vx \, v(\vx)] = 0$, the quadratic term $({\boldsymbol\alpha}\cdot \vx)^2$, or higher-order ones otherwise, might be correlated with $v(\vx)$. Finding neurons with such higher-order correlations can be done by increasing accordingly the power of $({\boldsymbol\alpha}\cdot \vx)$ in the optimization problem (\ref{eq:eq_rajout}). Note that one could consider other function bases than the polynomials from Taylor expansion, such as Hermite or Legendre polynomials, for their orthogonality properties.
In all cases, one does not need to solve such problems exactly but just to find an approximate solution, i.e.~a neuron improving the loss.



\paragraph{Adding random neurons.} Another possibility to suggest additional neurons, when expressivity bottlenecks are detected but no correlation (up to order $p$) can be exploited anymore, is to add random neurons. The first $p$ order Taylor expansions will show 0 correlation with desired output variation, hence no loss improvement nor worsening, but the correlation of the $p+1$-th order will be non-0, with probability 1, in the spirit of random projections. 
Furthermore, in the spirit of common neural network training practice, one could consider brute force combinatorics by adding many random neurons and hoping 
some
will be close enough to the desired direction~\citep{DBLP:journals/corr/abs-1803-03635}.
The difference with the usual training is that we would perform such computationally costly searches only when and where relevant, exploiting all simple information first (linear correlations~in~each~layer).

\subsection{Loss decreases with a line search on a quadratic energy}

\label{app:lossline}

Let $\gL$ be a quadratic loss over $\R^d$ and $g$ be a vector in $\R^d$. The loss $\gL$ can be written as:
\begin{equation}
\gL(g) = g^T Q g + v^T g + K
\end{equation}
where $Q$ is a matrix that we will suppose to be symmetric positive definite. This is to ensure that all eigenvalues of $Q$ are positive, hence modeling a local minimum without a saddle point. $v$ is a vector in $\R^d$ and $K$ is a real constant.

For instance, the mean square loss $\E_{x \in \gD} \left[\, \left\| f(x) - f^*(x) \right\|_S^2 \right]$, where $\gD$ is a finite dataset of $N$ samples, $f^*$ a target function, and $S$ is a symmetric positive definite matrix used as a metric, fits these hypotheses, considering $g = (f(x_1), f(x_2), ...)$ as a vector. Indeed this loss rewrites as 
\begin{equation}
    \sum_{i=1}^N f(x_i)^T S f(x_i) - 2 \sum_i {f^*}^T(x_i) S f(x_i) + K \;=\; g^T Q\, g \,+\, v^T g\, +\, K
\end{equation}
by flattening and concatenating the vectors $f(x_i)$ and considering $Q = S \otimes S \otimes S \otimes ...$ the tensor product of $N$ times the same matrix $S$, i.e.~a diagonal-block matrix with $N$ identical blocks $S$. Note that for the standard regression with the $L^2$ metric, this matrix $Q$ is just the Identity.

Starting from point $g$, and given a direction $h \in \R^d$, the question is to perform a line search in that direction, i.e.~to optimize the factor $\lambda \in \R$ in order to minimize $\gL(g + \lambda h)$.

Developing that expression, we get:
\begin{equation}
    \gL(g + \lambda h) \;=\; (g + \lambda h)^T Q\, (g + \lambda h) + v^T (g + \lambda h) + K  \;=\; \lambda^2 h^T Q h + \lambda (2 h^T Q g + v^T h) +  \gL(g) 
\end{equation}
which is a second-order polynomial in $\lambda$ with a positive quadratic coefficient. Note that the linear coefficient is $h^T \nabla_g \gL(g)$, where $\nabla_g \gL(g) = 2 Q g + v$ is the gradient of $\gL$ at point $g$.
The unique minimum of the polynomial in $\lambda$ is then:
\newcommand{\gradh}{h^T \nabla_g \gL(g)}
\begin{equation}
    \lambda^* = - \frac{1}{2} \frac{\gradh}{h^T Q h}
\end{equation}
which leads to
\begin{align}
    \min_\lambda \gL(g + \lambda h) &= {\lambda^*}^2 h^T Q h + \lambda^* \gradh +  \gL(g) \\
    &= \gL(g) - \frac{1}{4} \frac{\left( \gradh \right)^2}{h^T Q h} \\
    &= \gL(g) - \frac{1}{4} \left< \left. \frac{h}{\|h\|_Q} \right| \nabla^Q_g \gL(g) \, \right>_{Q}^2 \; .
\end{align}
Thus the loss gain obtained by a line search in a direction $h$ is quadratic in the angle between that direction and the gradient of the loss, in the sense of the $Q$ norm (and it is also quadratic in the norm of the gradient). Note that inner products with the gradient do not depend on the metric, in the sense that 
$ \left< \left. h \, \right| \nabla_g \gL(g) \, \right>_{L^2} \;=\; \left< \left. h \, \right| \nabla^S_g \gL(g)\,  \right>_{S} \;\;\forall h$ for any metric $S$, i.e.~any symmetric definite positive matrix $S$, associated to the norm $\|h\|_S^2 = h^T S h$ and to the gradient $\nabla^S_g \gL(g) = S^{-1} \nabla^{L^2}_g \gL(g)$.

In the case of a standard $L^2$ regression this boils down to:
\begin{equation}\min_\lambda \left\| g + \lambda h \right\|^2_{L^2} = \|g\|^2 - \left< \left. \frac{h}{\|h\|} \;\right|\; g \right>_{L^2}^2 \end{equation}
i.e.~considering $\gL(f) := \E_{x \in \gD} \left[\, \left\| f(x) - f^*(x) \right\|^2 \right]\;$:
\begin{equation}\min_\lambda \gL(f + \lambda h) \;=\; \gL(f)  - \left< \left. \frac{h}{\|h\|} \;\right|\; f^* - f \right>_{L^2}^2 \; = \;\gL(f)  - \frac{ \displaystyle \E_{x \in \gD}  \left[ \, (f^* - f) \, h \, \right]^2 }{ \displaystyle \E_{x \in \gD} \left[ \;\|h\|^2 \;\right] } \; .\end{equation}

A result that is useful in the next sections.

\subsection{Expected loss gain with a line search in a random direction}

\label{app:randomgain}

Using  \Cref{app:lossline} above, the loss gain when performing a line search on a quadratic loss is quadratic in the angle $\alpha = \left< \left. \frac{\mDE(X)}{\|\mDE(X)\|} \;\right|\;\frac{\mDV(X)}{\|\mDV(X)\|}  \right>_{L^2}$ between the random search direction $\mDE(X)$ and the gradient $\mDV(X)$.

This angle has average 0 and is of standard deviation $\frac{1}{nd}$, as described in \Cref{sec:cifarinit}. The loss gain is thus of the order of magnitude of $\frac{1}{d}$ in the best case (single-sample minibatch).

\subsection{Exponential convergence to 0 training error}
\label{app:expconv}

Considering a regression to a target $f^*$ with the quadratic loss, the function $f$ represented by the current neural network (fully-connected, one hidden layer, with ReLU activation function) can be improved to reach 0 loss by an addition of $n$ neurons $(h_i)_{1\leqslant i \leqslant n}$, with $n$ is the dataset size, using
\cite{zhang2017understanding}. Unfortunately there is no guarantee that if one adds each of these neurons one by one, the loss decreases each time. We will prove that one of these neurons does decrease the loss, and we will quantify by how much, relying on the explicit construction in \cite{zhang2017understanding}. This decrease will actually be a constant factor of the loss, thus leading to exponential convergence towards the target $f^*$ on the training set.

As in the proof of Proposition \ref{annexeprop:greedysingle} in Appendix \ref{sec:greedyproof}, at least one of the added neurons
satisfies that its 
inner product
with the gradient direction is at least $1/n$.
While one could consequently hope for a loss gain in $O(\frac{1}{n})$, 
one has to see that this decrease would be the one of a gradient step, which is multiplied by a step size $\eta$, and asks for multiple steps to be done. Instead in our approach we actually perform a line search 
over 
the direction of the new neuron. In both cases (line search or multiple small gradient steps), one has
to take into account at least order-2 changes of the loss to compute the line search or estimate suitable $\eta$ and/or its associated number of steps. Luckily in our case of least square regression, the loss is exactly equal to its second order Taylor development, and all following computations are exact.

We consider the mean square regression loss $\gL(f) = \E_{x \in \gD} \left[\, \left\| f(x) - f^*(x) \right\|_S^2 \right]$, where $\gD$ is a finite training dataset of $N$ samples. Its functional gradient $\nabla \gL(f)$ at point $f$ is $2(f-f^*)$, which is proportional to the optimal change to add to $f$, that is, $f^*-f$. The $n$ neurons $(h_i)_{1\leqslant i \leqslant n}$ to be added to $f$ following \cite{zhang2017understanding} satisfy $\sum_i h_i = f^* - f = -\frac{1}{2} \nabla \gL(f)$. Thus
\begin{equation}
    \left< \left. \sum_i h_i \right|\; f^* - f \right>_{L^2} \;=\; \| f^* - f \|_{L^2}^2 \;=\; \gL(f) \, .
\end{equation}

Then like in the proof of~\ref{annexeprop:greedysingle} we use that the maximum is greater or equal to the mean to get that there exists a neuron $h_i$ that satisfies:

\begin{equation}\left< \left. h_i \right|\; f^* - f \right>_{L^2} \;\geqslant\; \gL(f) / n \, .\end{equation}

By applying Appendix \ref{app:lossline} one obtains that the new loss after line search into the direction of $h_i$ yields:
\begin{equation}
\min_\lambda \gL(f + \lambda h_i) \;=\;
\mathcal{L}(f)  \,-\, \frac{ \left< \left. h_i \;\right|\; f^* - f \right>_{L^2}^2 \; }{\|h_i\|^2} \;\leqslant\;
\mathcal{L}(f) \times \left(1 - \frac{\mathcal{L}(f)}{n^2 \|h_i\|^2} \right) \, .
\end{equation}

From the particular construction in \cite{zhang2017understanding} it is possible to bound the square norm of the neuron $\|h_i\|^2$  by $n\, d' \left( \frac{d_M}{d_m}\right)^2 \mathcal{L}(f)$,
where $d_M$ is related to the maximum distance between 2 points in the dataset, $d_m$ is another geometric quantity related to the minimum distance, and $d'$ is the network output dimension.
To ease the reading of this proof, we defer the construction of this bound to the next section, Appendix \ref{app:normhi}.

Then the loss at each neuron addition decreases by a factor which is at least $\gamma = 1 - \frac{1}{n^3 d'} \left(\frac{d_m}{d_M}\right)^2 < 1$. This factor is a constant, as it is a bound that depends only on the geometry of the dataset (not on $f$).

Thus it is possible to decrease the loss exponentially fast with the number $t$ of added neurons, i.e.~$\gL(f_t) \leqslant \gamma^t \gL(f)$, towards 0 training loss, and this in a greedy way, that is, by adding neuron one by one, with the property that each neuron addition decreases the loss.

Note that, in the proof of \cite{zhang2017understanding}, the added neurons could be chosen to have arbitrarily small input weights. This corresponds to choosing $a$ with small norm instead of unit norm in Equation \ref{besta}.

The number of neuron additions expected to reach good performance according to this bound is in the order of magnitude of $n^3$, which is to be compared to $n$ (number of neurons needed to overfit the dataset, without the constraint that each addition decreases the loss). This bound might be improved using other constructions than \cite{zhang2017understanding}, though with this proof the bound cannot be better than $n^2$ (supposing $\|h_i\|$ can be made not to depend on $n$).

Note also that with ReLU activation functions, all points that are on the convex hull of the dataset (which is necessarily the case of all points if the input dimension is higher that the number of points) can easily in turn be perfectly predicted (0 loss) by just one neuron addition each (without changing the outputs for the other points), by choosing an hyperplane that separates the current convex hull point from the rest of the dataset, and setting a ReLU neuron in that direction.


\subsection{Bound on the norm of the neurons}
\label{app:normhi}

Here we prove that the neurons obtained by \cite{zhang2017understanding} can be chosen so as to bound the square norm of any neuron $\|h_i\|^2$  by $n\, d' \left(\frac{d_M}{d_m}\right)^2 \mathcal{L}(f)$,
where $d_M$ is related to the maximum distance between 2 points in the dataset, and $d_m$ is another geometric quantity related to the minimum distance.
For the sake of simplicity, we first consider the case where the output dimension is $d' = 1$.

In  \cite{zhang2017understanding}, the $n$ neurons are obtained by solving $y = A w$, where $y = (y_1, y_2 ...)$ is the target function (here $(f^* - f)$ at each $x_j$), $A$ is the matrix given by $A_{jk} = \text{ReLU}( a \cdot x_j - b_k)$, representing neuron activations, and $a$ is any vector that separates the dataset points, i.e.~$a \cdot x_j \neq a \cdot x_{j'}\; \forall j \neq j'$, that is, $a$ could be almost any vector in $\R^d$ (in the sense of random projections, that is, the set of vectors that do not satisfy this is of measure 0).

Here we will pick a particular unit direction $a$, one that maximizes the distance between any two samples after projection:
\begin{equation}
\label{besta}
a \in \argmax_{\|a\| = 1}\; \min_{j, j'} \left| a \cdot (x_j - x_{j'} ) \right|
\end{equation}
and let us denote $d_m'$ the associated value: $d_m' = \min_{j, j'} \left| a \cdot (x_j - x_{j'}) \right|$ for that $a$. Note that $d_m' \leqslant \min_{j, j'} \| x_j - x_{j'} \|$ and that it depends only on the training set. The quantity $d_m'$ is  likely to be also lower-bounded (over all possible datasets) by $\min_{j, j'} \| x_j - x_{j'} \|$ times a factor depending on the embedding dimension $d$ and the number of points $n$.

Now, let us sort the samples according to increasing $a \cdot x_j$, that is, let us re-index the samples such that $(a \cdot x_j)$ now grows with $j$. By definition of $a$, the difference between any two consecutive $a \cdot x_j$ is at least $d_m'$.

\newcommand{\epsi}{\varepsilon}
We now choose biases $b_j = a \cdot x_j - d_m' + \epsi$ for some very small $\epsi$. The neurons are then defined as $h_k(x) = w_k \text{ReLU}(a \cdot x - b_k)$. The induced activation matrix $A_{jk} = \text{ReLU}( a \cdot x_j - b_k)$ then satisfies $\forall j < k; A_{jk} = 0$ and $\forall j \geqslant k; A_{jk} \geqslant d_m' - \epsi$. The matrix $A$ is lower triangular with diagonal elements above $d_m := d_m' - \epsi$, hence invertible. Recall that $y = A w$.

Consequently, $w = A^{-1} y$, and hence $\|w\|^2 \leqslant |||A^{-1}|||^2\, \|y\|^2$, that is,
\begin{equation}\|w\|^2 \leqslant \frac{1}{d_m^2}\, \gL(f)\end{equation}
as the target $y$ is the vector $f^* - f$ in our case.
Consequently, for any neuron $h_i$, one has:
\begin{equation}w_i^2 \leqslant \frac{1}{d_m^2}\, \gL(f) \, .\end{equation}
As the norm of the neuron is $\|h_i\|^2 = w_i^2 \sum_j A_{ji}^2$,
one still has to bound the activities $A_{ji} =  \text{ReLU}( a \cdot x_j - b_i)$. 
As $a$ was chosen a unit direction, the values $a \cdot x_j$ span a domain smaller than the diameter of the dataset $\gD$:  $\left| a \cdot (x_j - x_{j'}) \right| \leqslant  \|x_j - x_{j'}\| \leqslant \text{diam}(\gD) \; \forall j, j' $. 
Hence all values  $\forall i, j, |A_{ij}| = | a \cdot x_i - b_j| = |a \cdot x_i - a \cdot x_j + d_m| < d_M := \text{diam}(\gD) + d_m$. Note that $d_M$ depends only the dataset geometry, as for $d_m$.

We now have:
\begin{equation}\|h_i\|^2 = w_i^2 \sum_j A_{ji}^2 \leqslant n \frac{d_M^2}{d_m^2}\, \gL(f)\end{equation}
which ends the proof.

For higher output dimensions $d'$, one vector $w$ of output weights is estimated per dimension, independently, leading to the same bound for each dimension.
The square norms of neurons are summed over all dimensions and thus multiplied by at most $d'$.

\subsection{Reaching 0 training error in $n$ neuron additions by overfitting each dataset sample in turn}
\label{app:reaching}

If one allows updating already existing output weights at the same time as one adds new neurons, then it is possible to reach 0 training error in only $n$ steps (where $n$ is the size of the dataset) while decreasing the loss at each addition.

This scenario is closer to the one we consider with TINY, as we compute the optimal update of existing weights inside the layer, as a byproduct of new neuron estimation, and apply them.

However the existence proof here follows a very different way to create new neurons, tailored to obtain a constructive proof, and inspired by the previous section. See Appendix  \ref{app:tinyconverg} for another, more generic proof, applicable to a wide range of growth methods.

Here we consider the same approach as in Appendix \ref{app:normhi} above, but introducing neurons one by one instead of $n$ neurons at once. After computing $a$ and the biases $b_j$, thus forming the activity matrix $A$, we add only the last neuron $h_n$. The activity of this neuron is $0$ for all input samples $x_j$ except for the last one, for which it is $A_{nn} > 0$. Thus, the neuron $h_n$ separates the sample $x_n$ from the rest of the dataset, and it is easy to find $w_n$ so that the loss gets to 0 on that training sample, without changing the outputs for other samples.

Similarly, one can then add neuron $h_{n-1}$, which is active only for samples $x_{n-1}$ and $x_n$. However designing $w_{n-1}$ so that the loss becomes 0 at point $x_{n-1}$ disturbs the output for point $x_n$ (and for that point only). Luckily if one allows updating $w_n$ then there exists a (unique) solution $(w_{n-1}, w_n)$ to achieve 0 loss at both points. This is done exactly as previously, by solving $y = A w$, but considering only the last 2 lines and rows of $A$, leading to a smaller $2 \times 2$ system which is also lower-triangular with positive diagonal.

Proceeding iteratively this way adds neuron one by one in a way that sends each time one more sample to 0 loss. Thus adding $n$ neurons is sufficient to achieve 0 loss on the full training set, and this in a way that each time decreases the loss.

Note that updating existing output weights $w_i$ while adding a new neuron, to decrease optimally the loss, is actually what TINY does.
However, the construction in this Appendix completely overfits each sample in turn, by design, without being able to generalize to new test points. On the opposite, TINY exploits correlations over the whole dataset to extract the main tendencies.

\subsection{TINY reaches 0 training error in $n$ neuron additions}
\label{app:tinyconverg}

We will now show that the TINY approach, as well as any other suitable greedy growth method, implemented within the right optimization procedure, reaches 0 training error in at most $n$ steps (where $n$ is the size of the dataset), almost surely.

Before stating it formally, we need to introduce the optimization protocol, growth completion and a probability measure over activation functions.

\paragraph{Optimization protocol.} For this we consider the following optimization protocol conditions, that has to be applied at least during the last, $n$-th addition step: 
\begin{itemize}
        \item a \textbf{full batch} approach,
        \item when adding new neurons, also compute and add the \textbf{optimal moves of already existing parameters} (i.e.~of output weights $w$).
 \end{itemize}
The first point is to ensure that all dataset samples will be taken into account in the loss during the $n$-th update. Otherwise, for instance if using minibatches instead, the optimization of output weights $w$ will not be able to overfit the training loss.

The second point is to make sure that, after update, the output weights $w$ will be optimal for the training loss. Note that in the mean square regression case, this is easy to do, as the loss is quadratic in $w$:  
the optimal move (leading to the global optimum $f^*$) can be obtained by line search over the natural gradient (which is obtained for free as a by-product of TINY's projection of $\mDV$, and is proportional to $f^*-f$). This is precisely what we do in practice with TINY when training networks (except when comparing with other methods and using their own protocol).

\paragraph{Growth completion.} For this proof to make sense, we will need the growth method to actually be able to perform $n$ neuron additions, if it has not reached 0 training loss before. A counter-example would be a growth method that gets stuck at a place where the training loss is not 0 while being unable to propose new neuron to add. In the case of TINY, this can happen when no correlation between inputs $x_i$ and desired output variations $f^*(x_i) - f(x_i)$ can be found anymore. To prevent this, one can choose any auxiliary method to add neurons in such cases, for instance random directions, solutions of higher-order expressivity bottleneck formulations using further developments of the activation function, or locally optimal neurons found by gradient descent. Some auxiliary methods are guaranteed to further decrease the loss by a neuron addition (cf.~Appendices \ref{annexeprop:greedysingle}, \ref{app:randomgain}, \ref{app:expconv}), while any other one is guaranteed not to increase the loss if combined with a line search along that neuron direction.

We will name \emph{completed-TINY} the completion of TINY by any such auxiliary method.

\paragraph{Activation function.}
For technical reasons, the result will stand \emph{almost surely} only,
depending on the invertibility of a certain matrix, namely, the activation matrix $A$, defined as $A_{ij} = \sigma(\vv_j \cdot \vx_i + b_j)$, indexed by samples $i$ and neurons $j$.

Generally speaking, kernels induced by neurons $k_j:\; \vx \mapsto \sigma(\vv_j \cdot \vx + b_j)$ form free families, in the sense that they are linearly independent (to the notable exception of the linear kernel). This linear independency means that a linear combination of kernels cannot be equal, as a function, to another kernel with different parameters. Equality is to be understood as \emph{for all possible points $\vx$ ever}. However here we will evaluate the functions only at a finite number $n$ of points (the dataset samples), therefore linear independence will be considered among the rows of the activation matrix $A$. This notion of linear dependence is much weaker: kernels might form a free family as functions but be linearly dependent once restricted to the dataset samples, by mere chance. While this is not likely (over dataset samples), this is not impossible in general (though of measure 0), and it is difficult to express an explicit, simple condition on the activation function to be sure that the activation matrix $A$ is \emph{always} invertible (up to slight changes of parameters). Thus instead we will express results \emph{almost surely} over activation functions and neuron parameters.

For most activation functions in the space of smooth functions, the activation matrix $A$ will be invertible almost surely over all possible datasets. In the unlucky case where the matrix is not invertible, an infinitesimal move of the neurons' parameters will be sufficient to make it invertible. For some activation functions, however, such as linear or piecewise-linear ones (e.g., ReLU), the matrix might remain non-invertible over a wide range of parameter variations (unless further assumptions are made on the neurons added by the growth process). Yet, in such cases, slight perturbations of the activation function (i.e., choosing another, smooth, activation function, arbitrarily close to the original one) will yield invertibility.

\newcommand{\GG}{\mathfrak{G}}
\newcommand{\PP}{\mathfrak{P}}

To properly define "\emph{almost surely}" regarding activation functions, let us restrict the activation function $\sigma$ to belong to 
the space $\mathfrak{P}$ of polynomials of order at least $n^2$, that is:
\begin{equation}\sigma(x) \;=\; \sum_{k=0}^K \gamma_k \, x^k \end{equation}
with $n^2 \leqslant K < +\infty$, and non-0 highest-order amplitude $\gamma_K \neq 0$. This set $\PP$ is dense in 
the set of all continuous functions over the set $\Omega = [-r_M, r_M]^d$ 
which is a hypercube of sufficient radius $r_M$ to cover all samples from the given dataset.
One can define probability distributions over $\PP$, for instance consider the density $p(\sigma) = \frac{\alpha}{ K^2} \prod_{k=0}^K \frac{e^{-\gamma_k^2}}{\sqrt{2\pi}}$ with a factor $\alpha = \left( \frac{\pi^2}{6} - \sum_{k<n^2} \frac{1}{k^2}\right)^{-1}$ to normalize the distribution, and where $K$ is the order of the polynomial and thus depends on $\sigma$. This density is continuous in the space of parameters $\gamma_k$ (though not continuous in the usual functional metric spaces). 
Note that the decomposition of any $\sigma \in \PP$ as a finite-order polynomial is unique, as monomials of different orders are linearly independent.

We can now state the following lemma (that we will prove later):
\begin{lemma}[Invertibility of the activation matrix]
\label{lem:invert}
Let $\gD = \{ x_i, \, 1 \leqslant i \leqslant n\}$ be a dataset of $n$ distinct points, and let $\sigma : \R \rightarrow \R$ be a function in $\PP$, that is, a polynomial of order at least $n^2$.
Then with probability 1 over function and neuron parameters $(\gamma_k)$, $(\vv_j)$ and $(b_j)$, the activity matrix $A$ defined by $A_{ij} = \sigma(\vv_j \cdot \vx_i + b_j)$ is full rank.
\end{lemma}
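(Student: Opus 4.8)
The plan is to regard $\det A$ as a polynomial in all the parameters and to show that this polynomial is not identically zero; the ``almost surely'' conclusion then follows from a standard measure-zero argument.

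\textbf{Reduction.} First I would fix the degree $K\geqslant n^2$ of $\sigma$, writing $\sigma(t)=\sum_{k=0}^{K}\gamma_k t^k$. Each entry $A_{ij}=\sigma(\vv_j\cdot\vx_i+b_j)=\sum_{k}\gamma_k(\vv_j\cdot\vx_i+b_j)^k$ is then a polynomial in the real parameters $(\gamma_0,\dots,\gamma_K,\vv_1,b_1,\dots,\vv_n,b_n)$, hence so is $P:=\det A$. If $P\not\equiv 0$, its zero set is Lebesgue-negligible; since, conditionally on $K$, the coefficients $(\gamma_k)$ and the neuron parameters $(\vv_j),(b_j)$ have densities with respect to Lebesgue measure, this gives $\det A\neq 0$ almost surely conditionally on $K$, and then almost surely after summing over the countably many possible values of $K$. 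So the whole problem reduces to exhibiting a single parameter value at which $\det A\neq 0$.

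\textbf{A greedy construction.} For this it suffices to fix one convenient activation, say the monomial $\sigma(t)=t^{K}$, and to build the $n$ neurons one at a time so that each new column of $A$ is linearly independent from the previous ones. Suppose $(\vv_1,b_1),\dots,(\vv_{j-1},b_{j-1})$ have been chosen so that the first $j-1$ columns span a subspace $W_{j-1}\subset\R^n$ of dimension $j-1<n$. Pick any nonzero $c\in W_{j-1}^{\perp}$; to push the new column out of $W_{j-1}$ it is enough to find $(\vv,b)$ with $\sum_{i=1}^{n}c_i\,\sigma(\vv\cdot\vx_i+b)\neq 0$. Iterating $n$ times yields parameters for which $A$ is invertible, which is all we need.

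\textbf{The key lemma and the main obstacle.} The crux is therefore: for distinct $\vx_1,\dots,\vx_n$, any polynomial $\sigma$ of degree $K\geqslant n-1$ and any $c\neq 0$, the map $(\vv,b)\mapsto\sum_i c_i\,\sigma(\vv\cdot\vx_i+b)$ is not identically zero. I would homogenize: set $\widehat{\vx}_i=(\vx_i,1)$ and $w=(\vv,b)$, so the $\widehat{\vx}_i$ are pairwise non-proportional (their last coordinates equal $1$), and the degree-$K$ homogeneous component of $\sum_i c_i\,\sigma(w\cdot\widehat{\vx}_i)$ equals $\gamma_K\sum_i c_i(w\cdot\widehat{\vx}_i)^{K}$ with $\gamma_K\neq 0$. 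It then suffices to prove that the polynomials $(w\cdot\widehat{\vx}_i)^{K}$, $i=1,\dots,n$, are linearly independent in $w$. Restricting $w$ to a generic $2$-plane reduces this to the classical fact that $n$ distinct points on the degree-$K$ rational normal curve are linearly independent whenever $K+1\geqslant n$ (a Vandermonde determinant), the genericity ensuring the projected points stay pairwise distinct in $\mathbb{P}^1$ --- finitely many proper algebraic conditions to avoid; here $K\geqslant n^2$ leaves ample room. This linear-independence statement for the powers $(w\cdot\widehat{\vx}_i)^K$ is the real work; the determinant expansion, the genericity bookkeeping, the greedy iteration and the measure-zero argument are all routine once it is in hand.
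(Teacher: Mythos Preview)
Your proof is correct and takes a genuinely different route from the paper's. Both arguments share the same outer shell: $\det A$ is a polynomial in all the parameters $(\gamma_k,\vv_j,b_j)$, so it suffices to exhibit one point where it is nonzero, and then invoke the measure-zero property of polynomial zero sets (summed over $K$). The difference lies in \emph{which} parameters are varied to produce that nonzero point.

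The paper fixes generic neuron parameters $(\vv_j,b_j)$ first, so that the $n^2$ pre-activities $a_{i,j}=\vv_j\cdot\vx_i+b_j$ are all distinct, and then perturbs the activation coefficients $(\gamma_k)$: since a polynomial of degree $\geqslant n^2$ has at least $n^2+1$ coefficients, one can solve a $n^2\times n^2$ Vandermonde system (in the distinct values $a_{i,j}$) to realize any desired perturbation $\delta A$ of the activation matrix, in particular one making it invertible. This is where the hypothesis $K\geqslant n^2$ is actually used.

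You do the opposite: you freeze $\sigma(t)=t^K$ and vary only $(\vv_j,b_j)$, building the columns greedily. Your key lemma --- linear independence of the $K$-th powers $(w\cdot\widehat{\vx}_i)^K$ for pairwise non-proportional $\widehat{\vx}_i$ --- is established by restriction to a generic $2$-plane and the rational-normal-curve Vandermonde, and needs only $K\geqslant n-1$. So your argument is tighter quantitatively (it shows the lemma holds under the much weaker hypothesis $K\geqslant n-1$, and in fact shows that even for the \emph{fixed} activation $t^K$ the matrix is generically invertible over neuron parameters alone). The paper's argument, by contrast, makes transparent why ``enough polynomial coefficients'' buys invertibility: you can interpolate arbitrary values at the $n^2$ pre-activity nodes. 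Both Vandermonde arguments are valid; they just live in different coordinate directions of the parameter space.
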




and the following proposition:

\begin{prop}[Reaching 0 training error in at most $n$ neuron additions]
Under the assumptions above (polynomial activation function of order $\geqslant n^2$, full-batch optimization and computation of the optimal moves of already existing parameters), completed-TINY reaches 0 training error in at most $n$ neuron additions almost surely.
\end{prop}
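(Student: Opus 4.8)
The plan is to reduce everything to the last, $n$-th neuron addition. Work with a one-hidden-layer network $f(\vx)=\sum_{j}w_j\,\sigma(\vv_j\!\cdot\!\vx+b_j)$, so that $f$ is \emph{linear} in the output weights $w=(w_j)_j$: writing $A_{ij}=\sigma(\vv_j\!\cdot\!\vx_i+b_j)$ for the activation matrix on the dataset and $Y=(f^*(\vx_i))_i$ for the target, the regression loss is the convex quadratic $w\mapsto\frac1n\norm{Aw-Y}^2$ (applied coordinate-wise if the output dimension $d'>1$). After $n$ additions the hidden layer has $n$ neurons, so $A$ is square; by \cref{lem:invert} it is invertible almost surely, hence $Aw=Y$ is solvable and the global minimum of the quadratic is $0$. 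Since the optimization protocol prescribes, at the $n$-th step, a full-batch update that also resets the already-existing output weights to their loss-optimal value — which for a convex quadratic is the global optimum, reached by a line search along the natural gradient direction $\propto f^*-f$ exactly as in \cref{app:lossline} — the loss drops to $0$.

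Spelling out the steps: \textbf{(i)} If $0$ training loss is reached at some step $t<n$ there is nothing to prove; otherwise, by the \emph{growth completion} construction, completed-TINY always has a neuron available to add, and adding it together with a line search along its direction never increases the loss (and strictly decreases it for the auxiliary methods of \cref{annexeprop:greedysingle} and \cref{app:expconv}), so the process performs $n$ additions and then has $n$ hidden neurons. \textbf{(ii)} Apply \cref{lem:invert} to those $n$ neurons: as $\sigma$ is a polynomial of degree $\geqslant n^2$ and the $\vx_i$ are distinct, $A$ has full rank almost surely. \textbf{(iii)} With $A$ invertible, $\min_w\frac1n\norm{Aw-Y}^2=0$ at $w^\star=A^{-1}Y$, and since the $n$-th step uses the full batch and replaces the output weights by their optimum, $f(\vx_i)=f^*(\vx_i)$ for every $i$, i.e.\ $0$ training error. (If the network already contains hidden neurons, $A$ only gains extra columns, which can only help; in that case fewer than $n$ additions may suffice.)

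The main obstacle is step (ii): \cref{lem:invert} is phrased \emph{almost surely over the neuron parameters}, whereas the neurons completed-TINY actually adds are a deterministic, data-adaptive function of $\sigma$ and of the trajectory so far, hence not ``random'' in that sense. I would close this gap by a perturbation argument. If the activation matrix $A$ built from the neurons actually added happens to be rank-deficient, then perturbing the input parameters $(\vv_n,b_n)$ of the \emph{last} added neuron by an arbitrarily small amount restores full rank: with $\sigma$ and the other $n-1$ neurons fixed, the set of $(\vv_n,b_n)$ for which the $n$-th column of $A$ falls in the span of the previous ones has Lebesgue measure zero — this is exactly \cref{lem:invert} applied to a single perturbing neuron, and it is where the degree-$\geqslant n^2$ hypothesis does its work — while such an infinitesimal perturbation changes the loss by an arbitrarily small amount and so preserves the no-increase property of step (i). Equivalently, one may simply have the auxiliary method of completed-TINY select, at the last step, a neuron whose column completes $A$ to full rank, which exists for almost every choice. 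The remaining points — the free bias neuron coming from the constant ``$1$'' appended in $\vb_{l-1}$, and the reduction from vector- to scalar-valued outputs — are routine bookkeeping.
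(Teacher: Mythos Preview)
Your proof follows essentially the same route as the paper: reduce to the $n$-th addition, invoke \cref{lem:invert} to get an invertible activation matrix, and observe that the full-batch optimal update of the output weights then attains the global minimum $0$ of the quadratic $w\mapsto\|Aw-Y\|^2$. The paper's own proof is just your steps (i)--(iii), stated more tersely.

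Where you go beyond the paper is in flagging, and attempting to close, a genuine gap: \cref{lem:invert} gives full rank \emph{almost surely over the neuron parameters} $(\vv_j,b_j)$, but the neurons produced by completed-TINY are a deterministic function of $\sigma$, the data, and the trajectory, so the lemma does not apply to them directly. The paper simply invokes the lemma without comment on this point. Your perturbation argument (or, equivalently, having the auxiliary method pick the last neuron generically) is the natural way to handle this; note however that to make it fully rigorous you would need to argue that the map from $(\vv_n,b_n)$ to the last column of $A$ is such that the rank-deficient set has measure zero \emph{conditionally on the first $n-1$ columns being fixed}, which requires a slightly different statement than \cref{lem:invert} as written (the lemma treats all $n$ neurons jointly). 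This is not hard, but it is an extra lemma, not a direct corollary.
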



\begin{proof}
If the growth method reaches 0 training error before $n$ neuron additions, the proof is done. Otherwise, let us consider the $n$-th neuron addition. We will show in Lemma~\ref{lem:invert} that the activity matrix $A$, defined by $A_{ij} = \sigma(\vv_j \cdot \vx_i + b_j)$, indexed by samples $i$ and neurons $j$, is invertible. Then there exists a unique $\vw \in \R^n$ such that $A \vw = f^*$, 
i.e. $\sum_j w_j \sigma(\vv_j \cdot \vx_i + b_j) = f^*(\vx_i)$ 
for each point $\vx_i$ of the dataset. This vector of output parameters $\vw$ realizes the global minimum of the loss over already existing weights: $\inf_\vw \gL(f_{\vv,\vw}) = \inf_\vw \| A \vw - f^* \|^2$. They are also the ones found by a natural gradient step over the loss (up to a factor 2, that can easily be found by line search as the loss is convex). Then after that update the training loss is exactly 0.
\end{proof}

Note: piecewise-linear activation functions such as ReLU are not covered by this proposition. However the result might still hold with further assumptions over the growth process. For instance, with the method in \cite{zhang2017understanding}, the ReLU neurons are chosen in such a way that the matrix $A$ is full rank by construction.

\begin{proof}[Proof of Lemma \ref{lem:invert}]
Let us first show that if, unluckily, for a given activation function $\sigma$ and given parameters $(\vv_j, b_j)$, the matrix $A$ is not full rank, then upon infinitesimal variation of the parameters, the matrix $A$ becomes full rank. 

Indeed, if all pre-activities $a_{i,j} := \vv_j \cdot \vx_i + b_j$ are not distinct for all $i,j$, then an infinitesimal variation of the vectors $\vv_j$ can make them distinct. For this, one can see that the set of directions $\vv_j$ on which any two dataset points $\vx_i$ and $\vx_{i'}$ have the same projection is finite (since it has to be the direction of $\vx_i - \vx_{i'}$, for a given pair of dataset samples $(i,i')$) and thus of measure 0.
As a consequence with probability 1 over neuron parameters $\vv_j$ and $b_j$, all pre-activities are distinct.

Now, if the matrix $A$ is not invertible, as invertible matrices are dense in the space of matrices, one can easily find an infinitesimal change $\delta\!A$ to apply to $A$ to make it invertible. This corresponds to changing the activation function $\sigma$ accordingly at each of the $n^2$ distinct pre-activity values. Since $\sigma$ has more than $n^2$ parameters, this is doable. For instance, one can select the $n^2$ first parameters and search for a suitable variation $\vg :=  (\delta\gamma_k)_{0 \leqslant k < n^2}$ of them by solving the linear system $S \,\vg = \delta\!A$ where the $n^2 \times n^2$ matrix $S$ is defined by $S_{ij,k} = a_{i,j}^k = (\vv_j \cdot \vx_i + b_j)^k$. This matrix $S$ is invertible because any $\vg$ such that $S\,\vg = 0$ would induce:
\begin{equation}
    \forall i,j, \; \sum_{k=0}^{n^2-1} \delta\gamma_k \;a_{i,j}^k = 0
\end{equation}
and thus the polynomial $P(x) = \sum_{k=0}^{n^2-1} \delta\gamma_k\; x^k$ has at least $n^2$ roots while being of order at most $n^2-1$. Thus 
$S\,\vg = 0 \;\implies\; \vg = 0$ and $S$ is invertible.
Note that as $\delta\!A$ is infinitesimal, $\vg = S^{-1} \,\delta\!A$ will be infinitesimal as well, and so is the change brought to the activation function $\sigma$.

\newcommand{\NN}{\mathfrak{N}}

Consequently we have that the set of activation functions $\sigma$ and neuron parameters $(\vv_j, b_j)$ for which the matrix $A$ is full rank is dense in the set of polynomials $\PP$ of order at least $n^2$ and of neuron parameters $\NN$.

Now, the function $\det : \PP \times \NN \rightarrow \R, \, \left( (\gamma_k)_k, (\vv_j, b_j)_j \right) \mapsto \det A = \det \left( \sigma_\gamma(\vv_j \cdot \vx_i + b_j) \right)$
is smooth as a function of its input parameters (the determinant being a polynomial function of the matrix coefficients).
As this continuous function is non-0 on a dense set of its inputs, 
the pre-image $\det^{-1}\{0\}$ is closed and contains no open subset.
This is not yet sufficient to prove that this pre-image is of measure 0 (e.g.,~fat Cantor set).

For a fixed order $K$, one can see this function as a polynomial of its inputs $\gamma_k$ and $\vv_j, b_j$, and conclude\footnote{See for instance a proof by recurrence that roots of a polynomial are always of measure 0: \url{https://math.stackexchange.com/questions/1920302/the-lebesgue-measure-of-zero-set-of-a-polynomial-function-is-zero} .}
 that the set of its roots is of measure 0. 
As a consequence, the probability, over coefficients $\gamma_k$ or equivalently over polynomials $\sigma$ of order $K$, that $\det A$ is non-0, is 1. 
As this stands for all $K$, we have that the probability that the matrix $A$ is invertible is at least the mass of polynomials of all orders $K$, i.e.~$\sum_{k\geqslant n^2} \frac{\alpha}{k^2} = 1$.
Thus $A$ is invertible with probability 1.
\end{proof}

\section{Technical details}
\label{sec:tech_details}
\subsection{Batch size to estimate the new neuron and the best update}\label{sec:variance_estimator}
\label{sec:BatchsizeForEstimation}

In this section we study the variance of the matrices $\mM^*$and $\mS^{-1/2}\mN$ computed using a minibatch of $n$ samples, seeing the samples as random variables, and the matrices computed as estimators of the true matrices one would obtain by considering the full distribution of samples.
Those two matrices are the solutions of the multiple linear regression problems defined in (\ref{eq:best_update_squared_prb}) and in (\ref{eq:DEminusDV}), as we are trying to regress the desired update noted $Y$ onto the span of the activities noted $X$. 
We suppose we have the following setting :
\begin{align}\label{eq:model_lineaire}
    Y\sim \mA X + \epsilon, \;\;\epsilon \sim \mathcal{N}(0, \sigma^2), \;\;\mathbb{E}[\epsilon |X] = 0
\end{align}
where the $(X_i, Y_i)$ are \textit{i.i.d.} and $A$ is the oracle for $\mM^*$ or  matrix $\mS^{-1/2}\mN$. If $Y$ is multidimensional, the total variance of our estimator can be seen as the sum of the variances of the estimator on each dimension of $Y$. 

We now suppose that $Y \in \mathbb{R}$ and note $\hat{A} := \mY\mX^T(\mathbf{X}\mathbf{X}^T)^{+}$ the solution of \ref{eq:model_lineaire}.
{\color{\colorreview}We first remark that  $\hat{\mA}\mX = \mY \mP$ with $\mP = \mX^T(\mX\mX^T)^{+}\mX \in \mathbb{R}(n, n)$ . It follows that when $n \leq p$, almost surely we have $rk(\mP) = n$ and $\mY \mP = \mY$, resulting in a zero expressivity bottleneck for that specific mini-batch, \textit{i.e.}~$\mY = \hat{\mA}\mX$. In practice, we do not consider $n < p$ as the solution ($\delta \mW$ or $\mA, \mOmega$) would overfit a specific mini-batch and would increase the expressivity bottleneck for the rest of the dataset.   

We now suppose that $n > p$, we now have almost surely that $rk(\mP) = p$ and $\mY\mP \ne\mY$.} We now study the variance of the estimator $\hat{\mA}\in \mathbb{R}^p$. We have almost surely that $\mX\mX^T$ is invertible and note $(\mX\mX)^{-1}$ its inverse.
Taking the expectation on variable $\epsilon$,
we have $\text{cov}(\hat{\mA}) = \sigma^2 (\mathbf{X}\mathbf{X}^T)^{-1}$. 
If $n$ is large, and if matrix $\frac{1}{n}\mathbf{X}\mathbf{X}^T \rightarrow \mQ$, with $\mQ$ non-singular, then, asymptotically, we have $\hat{\mA} \sim \mathcal{N}(\mA, \sigma^2\frac{\mQ^{-1}}{n})$, which is equivalent to $(\hat{\mA} - \mA)\frac{\sqrt{n}}{\sigma} \mQ^{1/2} \sim \mathcal{N}(0, I)$. 
Then $||(\hat{\mA} - \mA)\frac{\sqrt{n}}{\sigma} \mQ^{1/2}||^2 \sim \chi^2(k)$ where $k$ is the dimension of $X$. 
It follows that $\mathbb{E}\left[||(\hat{\mA} - \mA) \mQ^{1/2}||^2\right] = \frac{k\sigma^2}{n}$ and as $\mQ^{1/2}{\mQ^{1/2}}^T$ is positive definite, we conclude that  $\var(\hat{\mA}) \leqslant \frac{k\sigma^2}{n\lambda_{\min}(\mQ)}$.

In practice we aim to keep the variance of our estimators stable during architecture growth. 
To ensure this we can choose the batch size $n$ to make the bound constant.
With the notations defined in \Cref{fig:dim_lin_conv}, we estimate a matrix of size $k \gets (SW)^2$. For $n$ images, as each input sample contains $P$ quantities, and that each is a realization of the random variable $\mX$ (total $n P$ variables), we have in total $n \gets n P$ data points for the estimation of the best neuron.  Hence to add new neurons with a (asymptotically) fixed variance,
we use batch size 
$$n \propto \frac{(SW)^2}{P} \; .$$ 



For convolutional layers, we take  $n = 0.001 \times \frac{(SW)^2}{P} \times 2^k$ (\Cref{fig:Pareto_WT}) and $n = 0.01 \times \frac{(SW)^2}{P} \times 2^k$ (\Cref{fig:Pareto_RT}), where $k$ is equal to $\sqrt{\frac{32\times32}{P}}$,  and this $2^k$ factor is found empirically to  somehow account for  the variances of the estimators even when the same input is used multiple times, as are the $\{\mB_{i, j}^t\}_{j \in P}$ in \Cref{eq:BijT}.

\subsection{Batch size for learning}
\label{sec:batchsize_for_learning}
We adjust the batch size for gradient descent as follows: the batch size is set to $b_{t=0} = 32$ at the beginning of each experiment, and it is scheduled to increase as the square root 
of the complexity of the model (\textit{i.e.}~number of parameters). If at time $t$ the network has complexity $C_t$ parameters, then at time $t+1$ the training batch size is equal to $b_{t+1} = b_t \times \sqrt{\frac{C_{t+1}}{C_t}}$.
    
\subsection{Normalization}\label{sec:Normalization}
\subsubsection{Figures \ref{fig:Pareto_WT}, \ref{fig:TINYGradMaxgrowth} and \ref{fig:sqrt_GM} : Usual normalization}\label{sec:UsualNormalization}
For the GradMax method of \Cref{fig:Pareto_WT,fig:sqrt_GM},  before adding the new neurons to the architecture, we normalize the outgoing weight of the new neurons according to \cite{evci2022gradmax}, \textit{i.e.}~:
\begin{align}
    \alpha_k^* &\leftarrow 0 \\
    \text{for \Cref{fig:Pareto_WT,fig:TINYGradMaxgrowth}} \;\;\; \omega_k^* &\leftarrow \omega_k^* \times \frac{10^{-3}}{ \sqrt{||(\omega_j^*)_{j =1}^{n_d}||_2^2 / n_d}} \\
    \text{for \Cref{fig:sqrt_GM}}\;\;\; \omega_k^* &\leftarrow \omega_k^* \times \sqrt{\frac{10^{-3}}{ ||(\omega_j^*)_{j=1}^{n_d}||_2^2 / n_d}}
\end{align}
For TINY method of both figures, the previous normalization process is mimicked by normalizing the in and out going weights by their norms and multiplying them by $\sqrt{10^{-3}}$, \textit{i.e.}~:
\begin{align}
    \alpha_k &\leftarrow \alpha_k^* \times \sqrt{\frac{10^{-3}}{||(\alpha_j^*)_{j=1}^{n_d}||_2^2 / n_d}}  \\
     \omega_k &\leftarrow \omega_k^* \times \sqrt{\frac{10^{-3}}{||(\omega_j^*)_{j=1}^{n_d}||_2^2 / n_d}}
\end{align}
\subsubsection{Figure \ref{fig:Pareto_RT} : Amplitude Factor}\label{sec:AMplFactNormalization}
For the Random and the TINY methods of Figure \ref{fig:Pareto_RT}, we first normalize the parameters as :\\
\begin{minipage}{0.5\textwidth}
    \begin{align*}
    &\text{For the new neurons}\\
    \alpha_k^* &\leftarrow \alpha_k^* \times \frac{1}{\sqrt{||(\alpha_j^*)_{j=1}^{n_d}||_2^2 / n_d}} \\
    \omega_k^* &\leftarrow \omega_k^* \times \frac{1}{\sqrt{||(\omega_j^*)_{j=1}^{n_d}||_2^2 / n_d}}
\end{align*}
\end{minipage}
\begin{minipage}{0.5\textwidth}
    \begin{align*}
    &\text{For the best update}\\
    \mW^* &\leftarrow \mW^* \times \frac{1}{\sqrt{||\mW^*||_2^2 / n_d}}\\
    &\hspace{2cm}\\
    &\hspace{2cm}
\end{align*}
\end{minipage}

Then, we multiply them by the amplitude factor $\gamma^*$ : \\
\begin{minipage}{0.5\textwidth}
\vspace{0.7cm}
\center
For the new neurons :
\begin{gather*}
    \alpha_k^*, \;\;\omega_k^* \;\; \leftarrow \alpha_k^* \gamma^*, \omega_k^*\gamma^* \\
    \gamma^* := \argmin_{\gamma \in [-L, L]}\sum_i\mathcal{L}(f_{\theta \oplus \gamma \theta_{\leftrightarrow}^K}(\vx_i), \vy_i) \hspace{0.5 cm}\\
\end{gather*}
\end{minipage}
\begin{minipage}{0.5\textwidth}
\vspace{0.2cm}
\center
For the best update :
\begin{gather*}
\mW_l^*\;\; \leftarrow \gamma^* \mW_l \\
\gamma^*:= \argmin_{\gamma \in [-L, L]}\sum_i\mathcal{L}(f_{\theta + \gamma \mW^*}(\vx_i), \vy_i)
\end{gather*}
\end{minipage}
where the operation $\gamma \theta_{\leftrightarrow}^{K^*} = (\gamma\alpha^*_k, \gamma\omega^*_k)_k^K$ is the concatenation  of the neural network with the new neurons and $\theta + \gamma\mW^*$ is the update of one layer with its best update.
The batch on which $\gamma^*$ is computed is different from the one used to estimate the new parameters and its size is fixed to 1000 for all experiments.

\subsection{Full algorithm}
\label{sec:fullalgo}
In this section we describe in detail the pseudo-code to plot \Cref{fig:Pareto_WT,fig:TINYGradMaxgrowth,fig:Pareto_RT}.
The function NewNeurons$(l)$, in Algorithm \ref{algo:NN}, computes the new neurons defined at Proposition \ref{prop2} for layer $l$ sorted by decreasing eigenvalues. 
The function BestUpdate$(l)$, in Algorithm \ref{algo:BU} computes the best update at Proposition \ref{prop1} for layer $l$.
\begin{algorithm}[H]
 \For{each method [TINY, MethodToCompareWith]}{
    Start from neural network $N$ with initial structure $s \in \{1/4, 1/64\}$\;
    \While{N architecture does not match $\text{ResNet18}$ width}{
    \For{d in \{depths to grow\}}{
    $\theta_{\leftrightarrow}^{K^*} = \text{NewNeurons}(d, method)$ \;
    Normalize $\theta_{\leftrightarrow}^{K^*}$ according to \ref{sec:Normalization}\;
    Add the neurons at layer $d$ \;
    Train $N$ for $\Delta t$ epochs \;

    Save model $N$ and its performance \;
    }
    
    
 }}
\caption{Algorithm to plot Figures \ref{fig:Pareto_WT} and \ref{fig:Pareto_RT}.}
\label{algo:Pareto_detail}
\end{algorithm}\vspace{-3mm}

\begin{minipage}[t]{0.45\textwidth}
\centering
\begin{algorithm}[H]
  \SetAlgoLined
  \KwData{$l, \text{method} = TINY$}
  \KwResult{Best neurons at $l$}
  \uIf{method $= TINY$}{
  $\mM = $ \text{BestUpdate}$(l+1)$\;
  $\mS, \mN = \text{MatrixSN}(l-1, l + 1, \mM = \mM)$\;
  Compute the SVD of $\mS := \mO\Sigma \mO^T$\;
  Compute the SVD of $\mO\sqrt{\Sigma}^{-1}\mO\mN :=\mA\Lambda\mOmega$\;
  Use the columns of $\mA$, the lines of $\mOmega$ and the diagonal of $\Lambda$ to construct the new neurons of Prop.~\ref{prop2}\;
  }
  \uElseIf{method $= GradMax$}{$\mM = None$ \;
  $\_, \mN = \text{MatrixSN}(l-1, l + 1, \mM = \mM)$ \;
  Compute the SVD of $\mN^T\mN$ \;
  Use the eigenvectors to define the new out-going weights \;
  Set the new in-going weight to 0\;}
  \ElseIf{method $= Random$}{
  $(\valpha_k,\vomega_k)_{k=1}^{n_{d}} \sim \mathcal{N}(0, Id)$\;
  }
  
  \caption{NewNeurons}
  \label{algo:NN}
\end{algorithm}
\end{minipage}
\begin{minipage}[t]{0.45\textwidth}
\centering
\begin{algorithm}[H]
  \SetAlgoLined
  \KwData{$p_1, p_2$ (layer indexes), $\mM =$ None}
  \KwResult{Construct matrices $\mS$ and $\mN$}
  Take a minibatch $\mathbf{X}$ of size $\propto \frac{(SW)^2}{P}\!$\;
      Propagate and backpropagate $\mathbf{X}$\;
      Compute $\mathbf{V}_{goal}$ at $p_2$, \textit{i.e.}~$-\frac{\partial \mathcal{L}^{tot}}{\partial \mathbf{A_{p_2}}}$\;
      \If{$\mM \ne$ None}{
  $\mathbf{V}_{goal} -= \mM \mB_{p_1}$}
  
  $\mathbf{S}, \mathbf{N} = \mB_{p_1}{\mB_{p_1}}^T ,\mB_{p_1}\mDV ^T$\;
  \caption{MatrixSN}
\end{algorithm}
\vspace{0.72 cm}
\centering
\begin{algorithm}[H]
  \SetAlgoLined
  \KwData{$l$, index of a layer }
  \KwResult{Best update at $l$}
  Take a minibatch $\mathbf{X}$ of size $\propto\frac{(SW)^2}{P}$\;
  Compute $(\mathbf{S}, \mathbf{N})$ with MatrixSN($l$, $l$)\;
  $\mM = \mN^T\mS^{-1}$\;
\caption{BestUpdate}
\label{algo:BU}
\end{algorithm}
\end{minipage}
\newline

\subsection{Computational complexity}
\label{sec:complexity}
We estimate here the computational complexity of the above algorithm for architecture growth.

\paragraph{Theoretical estimate.} We use the following notations:
\begin{itemize}
\item number of layers: $L$
\item layer width, or number of kernels if convolutions: $W$ (assuming for simplicity that all layers have same width or kernels)  
\item number of pixels in the image: $P$ ($P=1$ for fully-connected)
\item kernel filter size: $S$ ($S=1$ if fully-connected)
\item minibatch size used for standard gradient descent: $M$ 
\item minibatch size used for new neuron estimation: $M'$
\item minibatch size used in the line-search to estimate amplitude factor: $M''$
\item number of classical gradients steps performed between 2 addition tentatives: $T$  
\end{itemize}

\begin{figure}[h]
\begin{minipage}{0.45\textwidth}
    \centering
    \includegraphics[scale = 0.3]{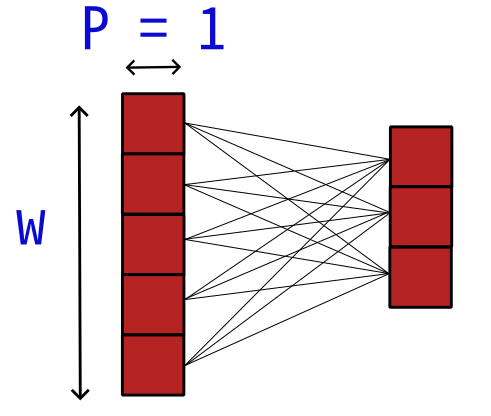}
\end{minipage}
\begin{minipage}{0.45\textwidth}
    \centering
    \includegraphics[scale = 0.3]{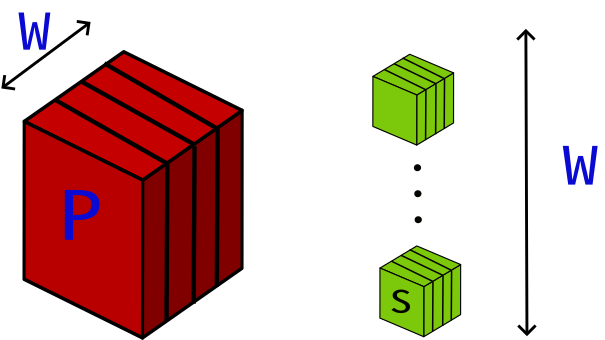}
\end{minipage}  
\caption{Notation and size for convolutional and linear layers}
\label{fig:dim_lin_conv}
\end{figure}

Complexity, estimated as the number of basic operations, cumulated over all calls of the functions:
\begin{itemize}
\item of the standard training part: $T M L S W^2 P$ 
\item of the computation of matrices of interest (function MatrixSN): $LM' (SW)^2 P$
\item of SVD computations (function NewNeurons): $L (SW)^3$
\item of line-searches (function AmplitudeFactor): $L^2 M'' S W^2 P$
\item of weight updates (function BestUpdate): $L S W$
\end{itemize}

The relative added complexity w.r.t.~the standard training part is thus:
$$M'S\,/\,TM \;\;+\;\;S^2W\,/\,TMP\;\; +\;\; M''L\;/\;TM \;\;+\;\; 1\,/\,WTMP.$$

\paragraph{SVD cost is negligible.}
The relative cost of the SVD w.r.t.~the standard training part is $S^2W\,/\,TMP$.
In the fully-connected network case, $S=1$, $P=1$, and the relative cost of the SVD is then $W/TM$. It is then negligible, as layer width $W$ is usually much smaller than $TM$, which is typically $10\times 100$ for instance.
In the convolutional case, $S=9$ for $3 \times 3$ kernels, and $P\approx 1000$ for CIFAR, $P\approx 100000$ for ImageNet, so the SVD cost is negligible as long as layer width $W <\!\!< 10000$ or 1 000 000 respectively. So one needs no worrying about SVD cost.

Likewise, the update of existing weights using the ``optimal move" (already computed as a by-product) is computationally negligible, and the relative cost of the line searches is limited as long as the network is not extremely deep ($L < T M/M"$).

On the opposite, the estimation of the matrices (to which SVD is applied) can be more ressource-demanding. The factor $M'S/TM$ can be large if the minibatch size $M'$ needs to be large for statistical significance reasons. One can show that an upper bound to the value required for $M'$  to ensure estimator precision (see Appendix \ref{sec:BatchsizeForEstimation}) is $(SW)^2/P$. In that case, if $W > \sqrt{TMP/S^3}$, these matrix estimations will get costly. In the fully-connected network case, this means $W > \sqrt{TM} \approx 30$ for $T=10$ and $M=100$. In the convolutional case, this means $W > \sqrt{TMP/S^3} \approx 30$ for CIFAR and $\approx 300$ for ImageNet. We are working on finer variance estimation and on other types of estimators to decrease $M'$ and consequently this cost. Actually $(SW)^2/P$ is just an upper bound on the value required for $M'$, which might be much lower, depending on the rank of computed matrices.

\paragraph{In practice.}
In practice the cost of a full training with our architecture growth approach is similar (sometimes a bit faster, sometimes a bit slower) than a standard gradient descent training using the final architecture from scratch. This is great as the right comparison should take into account the number of different architectures to try in the classical neural architecture search approach. Therefore we get layer width hyper-optimization for free.


\section{Additional experimental results and remarks}\label{additionalresult}





\subsection{ResNet18 on CIFAR-100}
\textbf{Figures.} In all plots the black line represents the average performance over two independent runs, and the colored regions indicate the confidence interval.\\
\label{sec:cifar100}

\paragraph{Technical details of Figures \ref{fig:Pareto_WT} and \ref{fig:Pareto_RT}}
The experiments were performed on 1 GPU. The optimizer is SGD($lr = 1e-2$) with the starting batch size 32 \ref{sec:batchsize_for_learning}. At each depth $l$ we set the number $n_l$ of neurons to be added at this depth \ref{tab:threshold_nbr_neurons}. These numbers  do not depend on the starting architecture and have been chosen such that each 
depth will reach its final width with the same number of layer extensions. For the initial structure $s = 1/4$, resp.~$1/64$, we set the number of layer extensions to $16$, resp.~$21$, such that at depth 2 (named Conv2 in Table \ref{archi:ResNet18}), $n_{2} = (\text{Size}_{2}^{final} - \text{Size}_2^{start}) / \text{nb of layer extensions} =(64 - 16) / 16 =(64-1) / 21 = 3$.
The initial architecture is described in Table~\ref{archi:ResNet18}.
\begin{table}[h!]
    \centering
    \begin{tabular}{|*{9}{c|}}
        \hline
        depth $l$&Conv2 &Conv3 & Conv5 & Conv6 & Conv8 & Conv9 &Conv11 & Conv12 \\ \hline
         $n_l$& 3 & 3 & 6 &6& 12 & 12 & 24 & 24\\ \hline
    \end{tabular}
    \caption{Number of neurons to add per layer. The depth is identified  by its name on Table \ref{archi:ResNet18}.}
    \label{tab:threshold_nbr_neurons}
\end{table}

\begin{table}[ht]
  \renewcommand{\arraystretch}{1.5}%
  \caption{Initial and final architecture for the models of Figure \ref{fig:Pareto_WT}. Numbers in color indicate where the methods were allowed to add neurons (middle of ResNet blocks). In \textcolor{blue}{blue} the initial structure for the model $1/64$ and in \textcolor{green}{green} the initial structure for the model $1/4$, i.e., $\textcolor{blue}{1} | \textcolor{green}{16}$ indicates that the model $1/64$ started with  $\textcolor{blue}{1}$ neuron at this layer while the model $1/4$ started with $\textcolor{green}{16}$ neurons at the same layer.}
  \label{archi:ResNet18}
  \centering
\begin{tabular}{|l|l|l|l|}
   \toprule
  \hline
  \multicolumn{4}{|c|}{ResNet18} \\
  \hline
 Layer name & Output size & Initial layers {\small(kernel=(3,3), padd.=1)}  & Final layers {\small(end of Fig~\ref{fig:Pareto_WT})}\\ \hline
  Conv 1 & $32\times 32 \times 64$& $\begin{bmatrix} 3 \times 3, \end{bmatrix}$ & $\begin{bmatrix} 3 \times 3, 64\end{bmatrix}$  \\ \hline
  Conv 2 & $32\times32\times64$ & $\begin{bmatrix}
      3 \times 3, 64 \\
      3 \times 3,  \textcolor{blue}{1} | \textcolor{green}{16}
  \end{bmatrix} \begin{bmatrix}
      3 \times 3, \textcolor{blue}{1}  | \textcolor{green}{16}\\
      3 \times 3, 64
  \end{bmatrix}$ & $\begin{bmatrix}
      3 \times 3, 64 \\
      3 \times 3, \textcolor{red}{64}
  \end{bmatrix}  \begin{bmatrix}
      3 \times 3, \textcolor{red}{64} \\
      3 \times 3, 64
  \end{bmatrix}$\\ \hline
  Conv 3 & $32\times 32 \times 64$ & $\begin{bmatrix}
      3 \times 3, 64  \\
      3 \times 3, \textcolor{blue}{1}  | \textcolor{green}{16}
  \end{bmatrix}  \begin{bmatrix}
      3 \times 3, \textcolor{blue}{1}  | \textcolor{green}{16} \\
      3 \times 3, 64
  \end{bmatrix} $ &  $\begin{bmatrix}
      3 \times 3, 64  \\
      3 \times 3, \textcolor{red}{64}
  \end{bmatrix}   \begin{bmatrix}
      3 \times 3, \textcolor{red}{64}  \\
      3 \times 3, 64
  \end{bmatrix}$  \\ \hline
  Conv 4 & $16\times 16 \times 64$& $\begin{bmatrix} 3 \times 3, 128\end{bmatrix}$ &  $\begin{bmatrix} 3 \times 3, 128\end{bmatrix}$ \\ \hline
  Conv 5 & $16\times16 \times 128$ & $\begin{bmatrix}
      3 \times 3, 128   \\
      3 \times 3, \textcolor{blue}{2}  | \textcolor{green}{32}
  \end{bmatrix}  \begin{bmatrix}
      3 \times 3, \textcolor{blue}{2}  | \textcolor{green}{32}  \\
      3 \times 3, 128
  \end{bmatrix}$& $\begin{bmatrix}
      3 \times 3, 128  \\
      3 \times 3, \textcolor{red}{128}
  \end{bmatrix}  \begin{bmatrix}
      3 \times 3, \textcolor{red}{128}  \\
      3 \times 3, 128
  \end{bmatrix} $ \\ \hline
  Conv 6 & $16\times 16 \times 128$ & $\begin{bmatrix}
      3 \times 3, 128  \\
      3 \times 3, \textcolor{blue}{2}  | \textcolor{green}{32}
  \end{bmatrix}  \begin{bmatrix}
      3 \times 3, \textcolor{blue}{2}  | \textcolor{green}{32}  \\
      3 \times 3, 128
  \end{bmatrix}$ & $\begin{bmatrix}
      3 \times 3, 128\  \\
      3 \times 3, \textcolor{red}{128}
  \end{bmatrix} \begin{bmatrix}
      3 \times 3, \textcolor{red}{128}  \\
      3 \times 3, 128
  \end{bmatrix} $ \\ \hline
  Conv 7 & $8\times 8 \times 256$& $\begin{bmatrix} 3 \times 3, 256\end{bmatrix}$& $\begin{bmatrix} 3 \times 3, 256\end{bmatrix}$ \\ \hline
  Conv 8 & $8\times 8 \times 256$ & $\begin{bmatrix}
      3 \times 3, 256  \\
      3 \times 3, \textcolor{blue}{4} | \textcolor{green}{64}
  \end{bmatrix}  \begin{bmatrix}
      3 \times 3, \textcolor{blue}{4} | \textcolor{green}{64}  \\
      3 \times 3, 256
  \end{bmatrix}$ & $\begin{bmatrix}
      3 \times 3, 256  \\
      3 \times 3, \textcolor{red}{256}
  \end{bmatrix}  \begin{bmatrix}
      3 \times 3, \textcolor{red}{256}  \\
      3 \times 3, 256
  \end{bmatrix}$ \\ \hline
  Conv 9 & $8\times 8 \times 256$ & $\begin{bmatrix}
      3 \times 3, 256  \\
      3 \times 3, \textcolor{blue}{4} | \textcolor{green}{64}
  \end{bmatrix} \begin{bmatrix}
      3 \times 3, \textcolor{blue}{4} | \textcolor{green}{64}  \\
      3 \times 3, 256
  \end{bmatrix}$ & $\begin{bmatrix}
      3 \times 3, 256  \\
      3 \times 3, \textcolor{red}{256}
  \end{bmatrix}\begin{bmatrix}
      3 \times 3, \textcolor{red}{256}  \\
      3 \times 3, 256
  \end{bmatrix}$ \\ \hline
  Conv 10 & $4\times 4 \times 512$& $\begin{bmatrix} 3 \times 3, 512\end{bmatrix}$& $\begin{bmatrix} 3 \times 3, 512\end{bmatrix}$ \\ \hline
  Conv 11 & $4 \times 4 \times 512 $ & $\begin{bmatrix}
      3 \times 3, 512  \\
      3 \times 3, \textcolor{blue}{8}  | \textcolor{green}{128}
  \end{bmatrix}  \begin{bmatrix}
      3 \times 3, \textcolor{blue}{8} | \textcolor{green}{128}  \\
      3 \times 3, 512
  \end{bmatrix}$  &  $\begin{bmatrix}
      3 \times 3, 512  \\
      3 \times 3, \textcolor{red}{512}
  \end{bmatrix} \begin{bmatrix}
      3 \times 3, \textcolor{red}{512}  \\
      3 \times 3, 512
  \end{bmatrix}$ \\ \hline
  Conv 12 & $4\times 4 \times 512$ & $\begin{bmatrix}
      3 \times 3, 512  \\
      3 \times 3, \textcolor{blue}{8} | \textcolor{green}{128}
  \end{bmatrix} \begin{bmatrix}
      3 \times 3, \textcolor{blue}{8} | \textcolor{green}{128}  \\
      3 \times 3, 512
  \end{bmatrix}$ &  $\begin{bmatrix}
      3 \times 3, 512  \\
      3 \times 3, \textcolor{red}{512}
  \end{bmatrix} \begin{bmatrix}
      3 \times 3, \textcolor{red}{512}  \\
      3 \times 3, 512
  \end{bmatrix}$ \\ \hline
  AvgPool2d & $1 \times 1 \times 512$ & &\\ \hline
  FC 1 & $100$ & $512 \times 100$ & $512 \times 100$\\ \hline
  SoftMax & $100$ & & \\ \hline
\end{tabular}
\end{table}

\begin{figure}
    \includegraphics[scale = 0.5]{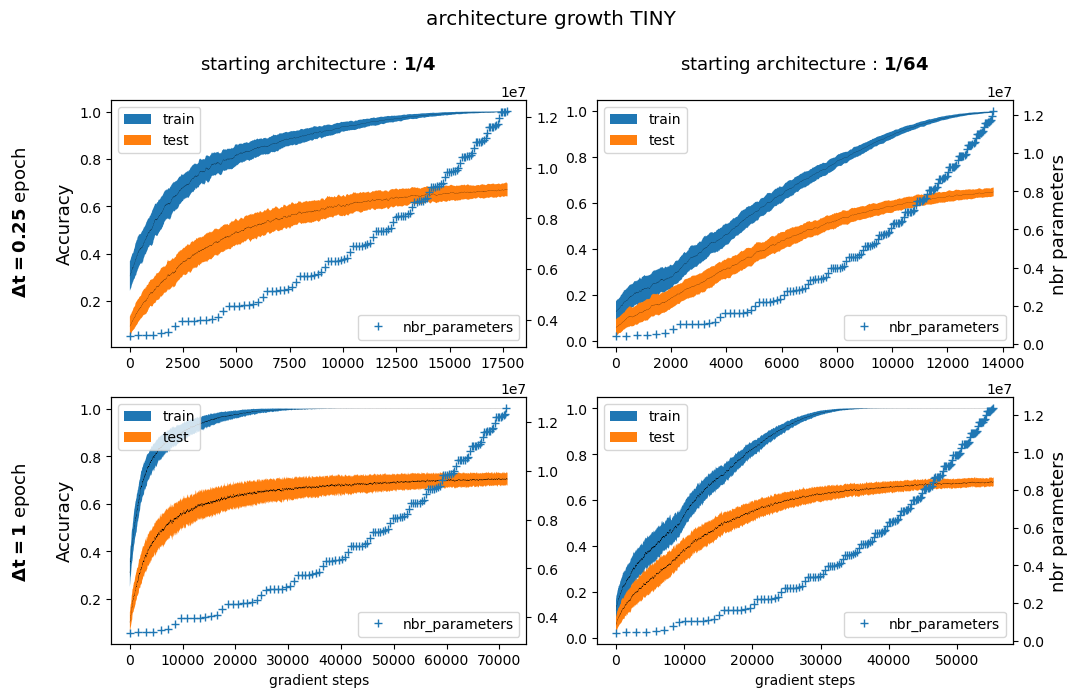}
    \includegraphics[scale = 0.5]{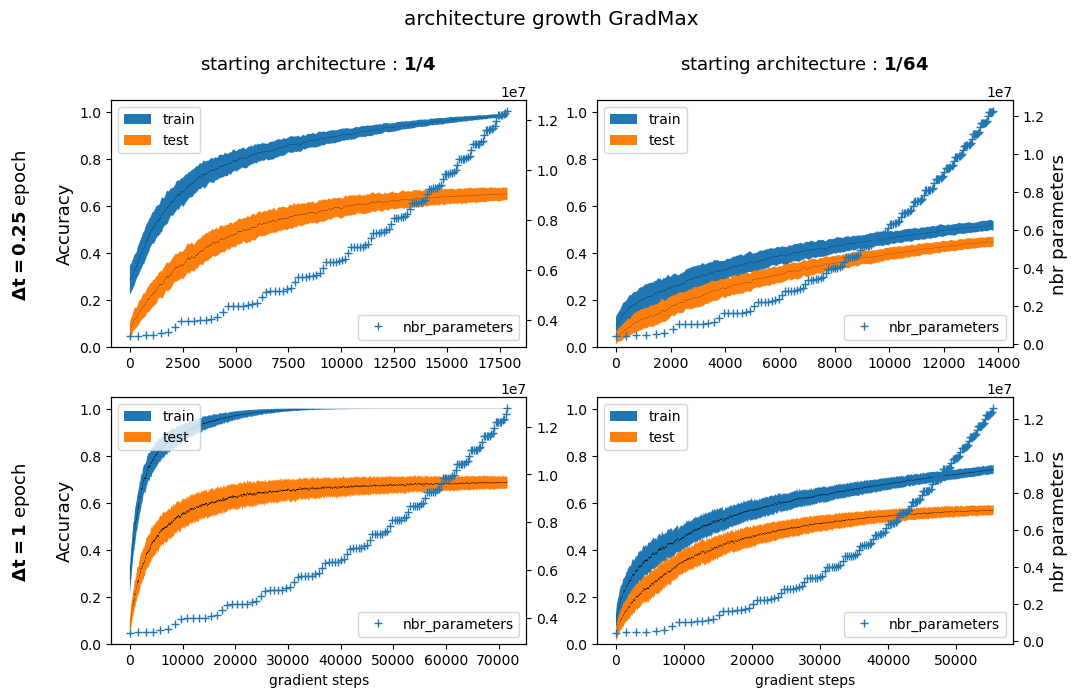}
    \caption{Accuracy and number of parameters during architecture growth for methods TINY and GradMax as a function of the gradient step. Mean and standard deviation for four independent runs.}
    \label{fig:AccCurveTINYGradMax}
\end{figure}

\begin{figure}
    \centering
    \includegraphics[scale = 0.8]{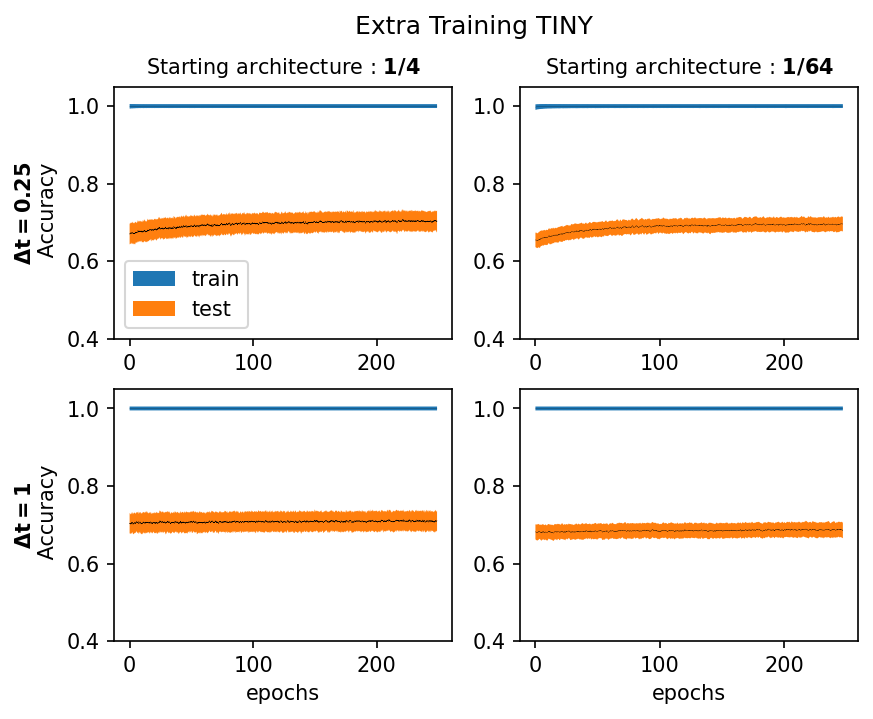}
    \caption{Accuracy as a function of the number of epochs  during extra training for TINY on four independent runs.}
    \label{fig:extra_training_TINY}
\end{figure}

\begin{figure}
    \centering
    \includegraphics[scale = 0.8]{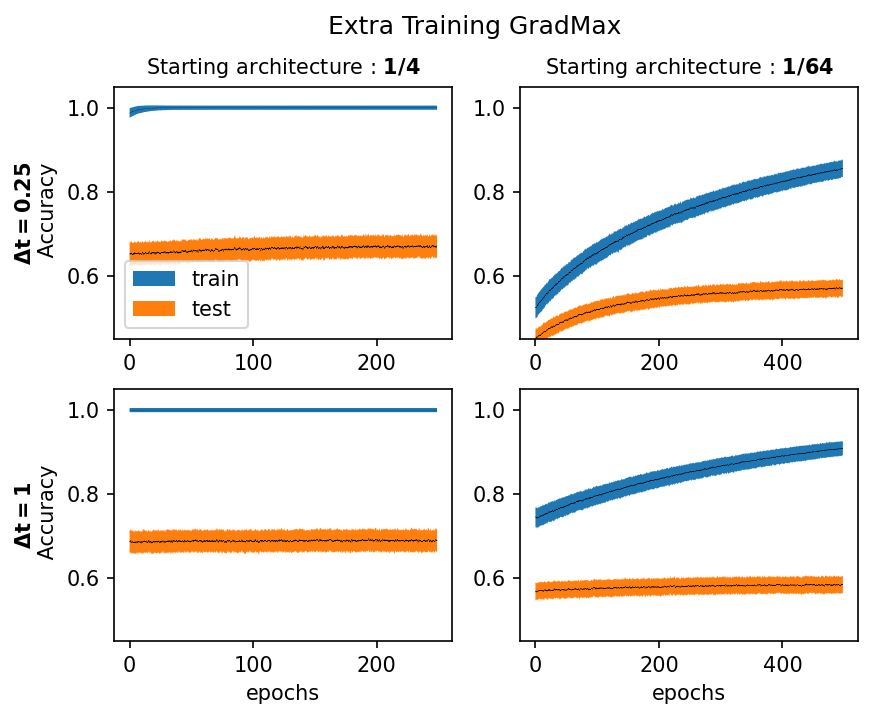}
    \caption{Accuracy curves as a function of the number of epochs during extra training for GradMax on four independent runs.}
    \label{fig:extra_training_GradMax}
\end{figure}

\begin{figure}

    \centering
    \includegraphics[scale = 0.6]{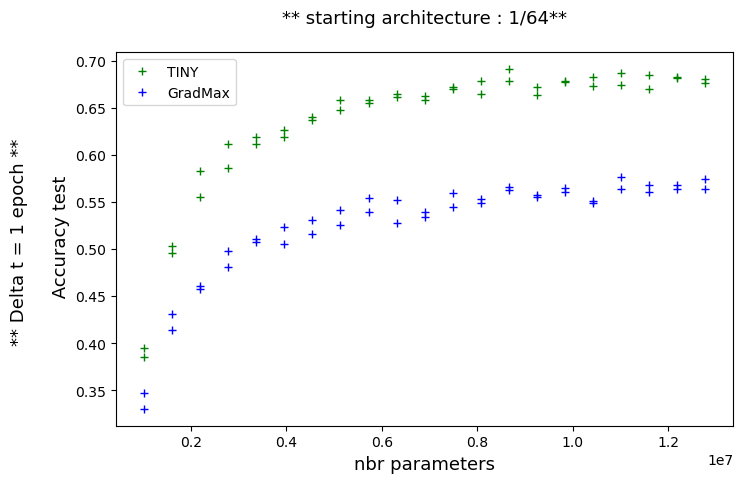}
    \caption{Accuracy on test split of as a function of the number of parameters during architecture growth from $\text{ResNet}_{1/64}$ to ResNet18. \textbf{The normalization for GradMax is  $\sqrt{10^{-3}}$}.}
    \label{fig:sqrt_GM}
\end{figure}

\def\GSsusqrt{$\textcolor{black!50}{57.2 \pm 0.3\;}$}
\def\GFsusqrt{$\textcolor{blue}{57.7 \pm 0.3}^{\; 3*}\;$}

\renewcommand{\arraystretch}{1.1}
\begin{table}[h!]
    \centering
    \begin{tabular}{*{7}{c}}
        \toprule
        &\multicolumn{2}{c}{$\Delta t =1$} & \multicolumn{2}{c}{\multirow{2}{*}{Baseline}}\\ 
    &TINY&GradMax & \multicolumn{2}{c}{\multirow{2}{*}{}}\\  \midrule
    \multirow{2}{*}{s = 1/64}&{\color{black!50} \TSsu}& \GSsusqrt & \multicolumn{2}{c}{\multirow{2}{*}{\Bb $\pm$ \Bbvirg}}\\ 
     &\TFsu & \GFsusqrt &\multicolumn{2}{c}{}\\ \bottomrule
    \end{tabular}
    \caption{Final accuracy on test split of ResNet18 of \ref{fig:sqrt_GM} after the architecture growth (\textit{grey}) and after convergence (\textit{blue}). The number of stars indicates the multiple of 50 epochs needed to achieve convergence.  With the starting architecture $\text{ResNet}_{1/64}$ and $\Delta t= 1$ the method TINY achieves \TSsu\   on test split after its growth and it reaches \TFsu after $* := 5 \times 50$ epochs.}
    \label{tab:AccuracyInfiny2}
\end{table}

\rev{\subsubsection{Performance for gradient-based methods}}
We present in Table~\ref{tab:PerfomancesGradientBased} two indicators of performance on classical visual datasets for four gradient-based methods: DART (\cite{liu2018darts}), NORTH \cite{maile2022when}, GradMax (\cite{evci2022gradmax}), and TINY. We use the adjective \textit{gradient-based} when the method uses the information from the propagation of the loss to search for an architecture. While GradMax, NORTH, and TINY increase the size of existing architectures (VGG and ResNet), DARTS uses cells to create the architecture, starting from a large graph of cells and removing what is considered unnecessary connections. We make the following remarks for the indicator \textit{Time}, which is the time in GPU days to search for an architecture and train it, and the indicator \textit{Acc.}, which is the accuracy on the test set:
\begin{itemize}
    \item For NORTH, GradMax, and TINY, the accuracy on the test set is comparable, as all methods share a lot in terms of methodology, growth process, and neuron initialization (cf. \ref{sec:theorical_comparaison}). 
    Nonetheless, NORTH search time complexity is much larger as for each increase of architecture its strategy relies on trial-and-error methodology (going up to 1000 generations before actually increasing the architecture). 
    \item  The time search complexity of TINY will always be slightly higher than GradMax as it needs to project the desired update and compute the matrix  $\mS$ (cf \ref{sec:theorical_comparaison}).
    \item Although DARTS reaches good performance, its time search complexity is always greater than one GPU as it starts from a large architecture and decreases its size by removing connections between cells. For the ImageNet dataset, the indicator \textit{Time} does not take into account the time spent searching for the basic cells, which are then used to create the graph.
    
\end{itemize} 

\begin{table}[]
    \centering
    \begin{tabular}{|*{6}{c|}}
    \hline
         \multicolumn{2}{|c|}{\multirow{2}{*}{Method}} & \multicolumn{3}{c|}{Indicators} & \multirow{2}{*}{Dataset}\\
         \cline{3-5}
          \multicolumn{2}{|c|}{} &Arch. & Time & Acc.  &\\
         \hline
         \multicolumn{2}{|c|}{\multirow{4}{*}{DARTS}} &$\times$ &  $4$ & $97.0 \pm 0.1$ &\multirow{2}{*}{CIFAR-10}\\
         \multicolumn{2}{|c|}{\multirow{4}{*}{}}&$\times$&$6.5$ &$97.8 \pm 0.1$ & \\
         \cline{3-6}
         \multicolumn{2}{|c|}{\multirow{4}{*}{}}&$\times$& 4 &$73.3$ \hspace{0.75cm} & ImageNet\\
         \hline
         
         \multicolumn{2}{|c|}{\multirow{3}{*}{NORTH}} &VGG-11 & $3.3$&$\sim 76.4$ & \multirow{2}{*}{CIFAR-10}\\
         
         \multicolumn{2}{|c|}{\multirow{3}{*}{}}&ResNet-28 &  $0.4$ &$\sim 83.3$ &\\
        \cline{3-6}
         \multicolumn{2}{|c|}{\multirow{3}{*}{}}&VGG-11 & $1.6$& $\sim 55.7$& CIFAR-100 \\
         \hline \hline
         \multirow{2}{*}{$\text{GradMax}^{\dotplus}$} & $\Delta t = 0.25, s = 1/64$ &ResNet-18& 0.12 & \GSsz & \multirow{4}{*}{CIFAR-100} \\
          & $ \Delta t = 1, s = 1/64$ &ResNet-18& 0.26 & \GSsu &  \\
         \cline{1-5}
          \multirow{2}{*}{$\text{TINY}^{\dotplus}$} &$\Delta t = 0.25, s = 1/64$  &ResNet-18& 0.13 & \TSsz &  \\
         &$\Delta t = 0.1, s = 1/64$ &ResNet-18& 0.28  & \TSsu &  \\
         \hline
    \end{tabular}
    \caption{\textit{Time}: GPU days spent to search for the architecture and to train it. \textit{Acc.}: accuracy on test set~$(\%)$. $\dotplus$~: estimated with our implementation. For the NORTH method, we computed the average performance over their different initializations and strategies based on their Figure 4.}
    \label{tab:PerfomancesGradientBased}
\end{table}


\end{document}